%% file: mysubmission.tex
\documentclass[10pt]{article} 
\usepackage[preprint]{tmlr}

\input{math_commands.tex}

\usepackage{hyperref}
\usepackage{url}
\usepackage{algorithm}
\let\AND\relax
\usepackage{algorithmic}
\usepackage{amsthm}
\usepackage{subcaption}
\usepackage{booktabs}
\usepackage{graphicx}
\theoremstyle{plain}
\newtheorem{theorem}{Theorem}[section]

\newtheorem{lemma}[theorem]{Lemma}

\theoremstyle{definition}

\newtheorem{assumption}[theorem]{Assumption}
\theoremstyle{remark}

\title{Multi-User Dueling Bandits: \\A Fair Approach using Nash Social Welfare}

\author{\name Maheed H. Ahmed \email ahmed237@purdue.edu \\
      \addr Electrical and Computer Engineering\\
      Purdue University\\
      \AND
      \name Mahsa Ghasemi \email mahsa@purdue.edu \\
      \addr Electrical and Computer Engineering\\
      Purdue University}

\def\month{09}  
\def\year{2026} 
\def\openreview{\url{https://openreview.net/forum?id=131jUEYdMT}} 

\begin{document}

\maketitle

\begin{abstract}
Learning from human preference data is becoming a useful tool, from fine-tuning large language models to training reinforcement learning agents. However, in most scenarios, the model is trained on the average preference of all human evaluators, which, under large variations of preferences, can be unfair to minority groups. In this work, we consider fairness in dueling bandits, a standard framework for online learning from preference data. We assume that each user has a (potentially distinct) Condorcet winner, which is an arm preferred to every other arm. Using these user-specific Condorcet winners as reference points, we evaluate and score arms according to their performance relative to the corresponding winner. To promote fairness across heterogeneous users, we adopt the well-established Nash Social Welfare objective, which maximizes the product of user utilities, thereby inherently penalizing inequality and preventing the marginalization of any single user. Within this framework, we construct a hard instance to establish a regret lower bound of $\Omega(T^{2/3}\min(K,D)^\frac{1}{3})$ for a time horizon $T$, $K$ arms, and $D$ users, which, to the best of our knowledge, is the first result quantifying the cost of fairness in dueling bandits with heterogeneous preferences. We then present the Fair-Explore-Then-Commit and Fair-$\epsilon$-Greedy algorithms with a Condorcet winner identification phase. We further derive their regret upper bounds that match the lower-bound dependence on $T$ up to logarithmic factors.
\end{abstract}

\section{Introduction}

The paradigm of learning from human feedback has become central to the development of modern machine learning algorithms, enabling models to align with complex, often non-scalar human values and individual preferences. This is evident in applications ranging from the fine-tuning of large language models \citep{ziegler2019fine} to the training of reinforcement learning agents \citep{christiano2017deep}. However, a significant challenge emerges when this feedback is aggregated from a diverse group of users, each with their own unique preferences. A common approach is to optimize for the average preference, a strategy that, while simple, can inadvertently lead to policies that favor the statistical majority at the expense of minority groups whose opinions diverge from the norm. This can result in a system that is perceived as unfair or biased, undermining the very goal of human-centric machine learning.

The dueling bandit problem provides a formal and well-suited framework for this type of online learning from pairwise preference data. Unlike traditional multi-armed bandits, where a single, scalar reward is received for each action, dueling bandits receive a relative preference signal, which is a binary outcome indicating which of two presented options is preferred. This makes the model particularly relevant for applications like information retrieval, recommender systems, and personalized medicine, where it is often easier for a human to compare two items than to provide a numerical rating for a single one \citep{yue2012k}.

In the traditional dueling bandit setting, the agent is usually trained using data coming from a single preference model, often corresponding to a single human or the average preference of multiple persons. However, in practice, preferences can be collected from a set of diverse preference models that can be inferred from users’ behavior, such as explicit pairwise choices, clicks, skips, ratings, or purchases. For example, a restaurant’s \textit{discounted dish of the week} is a single promotional decision affecting multiple users. Different customer segments may consistently prefer different dishes, but if the system aggregates inferred feedback and keeps promoting from the options that look best on average, minority tastes will be underserved in the long run. Likewise, consider background music in a shared workplace or public venue where the playlist or genre is chosen daily. Optimizing for aggregate engagement can chronically expose some users to music they dislike while privileging the preferences of the largest group.

Motivated by this example, we look at settings where a single decision maker takes actions that affect multiple users. Our work addresses this challenge by introducing fairness considerations into the dueling bandit framework. We propose a novel formulation where multiple users with potentially conflicting preference matrices provide feedback, and the goal is to learn policies that fairly balance their interests. Drawing from the rich literature on social choice theory and welfare economics  \citep{nash1950bargaining, kaneko1979nash, moulin2004fair}, we adopt the Nash Social Welfare (NSW) function as our fairness criterion.

Our work is inspired by recent research on fairness in bandits and RL. For instance, \citet{hossain2021fair} introduced a multi-agent multi-armed bandit model and used NSW as the fairness criterion to design algorithms with sublinear regret. \citet{jones2023efficient} and \citet{zhang2024no} further studied multi-agent multi-armed bandits under NSW, providing improved algorithms and regret bounds. However, these works operate in the standard multi-armed bandit framework where agents observe absolute rewards. In contrast, our work addresses the \textit{dueling bandit} setting where only relative pairwise preferences are available, requiring a fundamentally different approach to utility estimation and algorithm design.

\paragraph{Contributions.} Our contributions can be summarized as follows:
\begin{itemize}
    \item \textbf{Problem Formulation:} We formalize the \textit{Fair Multi-User Dueling Bandit} problem. Unlike standard settings that aggregate feedback into a single preference matrix, we model distinct user utilities based on their individual Condorcet winners and adopt the \textit{Nash Social Welfare} as a fairness objective to balance utility across users.
    \item \textbf{Regret Lower Bound:} We identify and construct hard instances of the formulated fair multi-user dueling bandit problem. By analyzing these instances, we prove the fundamental hardness of this setting and establish a regret lower bound of $\Omega(T^{2/3}\min(K,D)^\frac{1}{3})$, where $T$ is the time horizon.
    \item \textbf{Dueling Bandit Algorithms with Theoretical Regret Upper Bounds:} We propose two algorithms, Fair-Explore-Then-Commit and Fair-$\epsilon$-Greedy, adapted for this multi-user setting. We provide theoretical guarantees showing that both algorithms achieve a regret of $\tilde{O}(C(\phi)T^{2/3})$, where $C(\phi)$ is a function of the instance parameters such as the number of arms. The upper bound matches the lower bound with respect to the time horizon $T$ (up to logarithmic factors).
\end{itemize}

\section{Related Work}
Given the critical importance of fairness in machine learning, a significant body of research has been dedicated to defining and addressing algorithmic bias across various domains \citep{dwork2012fairness, hardt2016equality, kleinberg2016inherent,  kusner2017counterfactual}. In this section, we focus specifically on the subset of this literature that addresses fairness within multi-armed bandits and reinforcement learning.

\paragraph{Fairness in Multi-armed Bandits} 
The intersection of fairness and bandit algorithms has received considerable attention in recent years. \citet{hossain2021fair} introduced a multi-agent variant of the classical multi-armed bandit problem where multiple agents receive potentially different rewards from each arm, and the goal is to learn a fair distribution over arms using the Nash Social Welfare as the fairness criterion. They design three algorithms for the problem setup: explore-then-commit, $\epsilon$-greedy, and upper confidence bound (UCB), and prove that these algorithms achieve sublinear regret with UCB's regret being $\tilde{O}(\min(DK, \sqrt{D}K^\frac{3}{2})\sqrt{T})$. \citet{jones2023efficient} continue this line of work, addressing the computational inefficiency of prior methods. They propose a computationally efficient algorithm with an improved regret of $\tilde{O}(\sqrt{DKT} + DK)$. Additionally, \citet{zhang2024no} analyzed the problem under the strict geometric mean definition of the NSW function, compared to the product of utility definition used in prior work. They established tight regret upper bounds of $O(K^\frac{2}{D}T^\frac{D-1}{D}+K)$ for the stochastic setting while proving that sublinear regret is achievable in the adversarial setting only under specific approximations, as the strict case is impossible.  While the aforementioned works focus on fairness among multiple users at each step, another line of research evaluates fairness across the horizon by defining the Nash Social Welfare over the rewards accumulated across all rounds $T$, leading to the study of Nash regret \citep{barman2023fairness, sawarni2023nash}. Crucially, this setting emphasizes that fairness is achieved across distinct user encounters over time, as the agent is assumed to interact with different users at each individual time step.

\paragraph{Fairness in Reinforcement Learning}
Our work also relates to recent efforts on fairness in multi-agent and multi-objective RL. \citet{siddique2020learning} explored Nash welfare as a fairness metric by proposing Q-learning variants that maximize NSW of accumulated rewards across multiple objectives. \citet{mandal2022socially} take an axiomatic approach, showing that among common fairness objectives such as minimum welfare and generalized Gini social welfare, the Nash Social Welfare uniquely satisfies all natural axioms in multi-agent RL. They further propose RL algorithms with regret guarantees for the NSW objective in episodic MDPs. \citet{fan2022welfare} similarly consider multi-objective RL, framing it as a welfare-maximization problem. They develop Q-learning methods that converge to policies optimizing expected NSW compared to the NSW of the expected rewards in prior work. \citet{kim2025navigating} consider the generalized $p$-means welfare functions, which include the Nash Social Welfare function. They design an algorithm that outputs a set of near-optimal policies for $p\leq 1$.

Overall, the use of the Nash Social Welfare objective for fairness has been justified both axiomatically and empirically across different settings. Our contribution is to bring this perspective into the dueling bandit setting with multiple users, where preference-based feedback requires new algorithmic techniques and analysis.
\section{Preliminaries}
In this section, we formally state the fair multi-user dueling bandit problem. We introduce the core components of our framework, including the multi-user preference model, the scoring functions derived from Condorcet winners, and the Nash Social Welfare objective. These definitions establish the foundation for the algorithmic solutions and theoretical analysis presented in subsequent sections.
\subsection{Problem Definition}
Consider a multi-armed dueling bandit setting where an agent has access to $K$ arms, and the actions are evaluated by $D$ users. At each time step $t$, the agent chooses two actions $a_t$ and $a_t'$. Actions $a_t$ and $a_t'$ are sampled from policies $\pi_t, \pi_t' \in \Delta^{K}$ with probability $\pi_t(a)$ and $\pi'_t(a')$ where $\Delta^{K}$ is the probability simplex of dimension $K$. 

At each step, the agent observes a binary feedback vector $\mathbf{y}_t \in \{0,1\}^D$, where the $d$-th entry $y_t[d]$ indicates whether user $d$ preferred $a_t$ over $a_t'$ (value $1$) or vice versa (value $0$). The underlying preferences for each user $d$ are governed by a matrix $\mathcal{P}_d \in [0,1]^{K \times K}$, where the entry $\mathcal{P}_{d,i,j}$ represents the probability that user $d$ prefers arm $i$ over arm $j$. Collectively, these matrices form the preference tensor $\mathcal{P} \in [0,1]^{D \times K \times K}$. The preference matrices satisfy the standard reciprocal property, such that for any user $d$ and pair of arms $(i, j)$, $\mathcal{P}_{d,i,j} + \mathcal{P}_{d,j,i} = 1$ and $\mathcal{P}_{d,i,i} = 0.5$. Conditional on the played pair, the entries of \(\mathbf{y}_t\) are sampled independently according to their corresponding preference probabilities, and feedback vectors are independent across rounds.

While the preference tensor $\mathcal{P}$ captures the relative strength of arms in pairwise comparisons, our goal of maximizing social welfare requires a measure of an arm's utility for each user. To bridge this gap, we introduce a scoring function $s: [K] \times [0,1]^{D \times K \times K} \rightarrow [0,1]^{D}$. Intuitively, this function maps the pairwise preference relations into a scalar score representing the quality of an arm for a specific user. Formally, it assigns a vector of scores $s(a, \mathcal{P}) \in [0,1]^D$ to an arm $a$, which we denote as $s_d(a)$ for user $d$, serving as the fundamental unit of utility for our objective.

We define the global objective function $f: \Delta^K \times [0,1]^{D \times K} \rightarrow \mathbb{R}$ to quantify the total social welfare of a policy by aggregating individual user utilities (derived from the scores $s$). This function serves as the maximization target for the agent. Following the standard dueling bandit literature, we define the instantaneous regret $r_t$ as the performance gap between the optimal policy $\pi^*$ and the agent's policies at time $t$. Since the agent selects two policies $\pi_t$ and $\pi'_t$ at each step, the regret is calculated as the difference between the optimal value and the average welfare of the selected policies. The instantaneous regret at time t is defined as 
$$r_t = f(\pi^*, s) - \frac{f(\pi_t, s) + f(\pi'_t, s)}{2},$$
where $\pi^* \in \argmax f(\pi, s)$ is an optimal policy and $\pi_t$ and $\pi'_t$ are the agent's policies at time step $t$. The goal is to design an algorithm that minimizes the expected total regret, defined as 
\begin{equation}
\begin{split}
    \mathbb{E}[R_T] &= \mathbb{E}\left[\sum_{t=1}^T r_t\right] = \mathbb{E}\left[\sum_{t=1}^T f(\pi^*, s) - \frac{f(\pi_t, s) + f(\pi'_t, s)}{2}\right],    
\end{split}
\end{equation}

throughout the learning process until horizon $T$.

\subsection{Condorcet Winner}
In the dueling bandit framework, feedback is relative, lacking the absolute rewards typically used to calculate welfare. To establish a rigorous measure of user utility, which is essential for optimizing a Social Welfare function, we require a personalized reference point for each user. We adopt the Condorcet winner as this benchmark. Formally, a Condorcet winner for user $d$ is defined as an arm $a^*_d \in [K]$ such that $P_{d,a^*_d,j} > 0.5,\; \forall j \in [K], j \ne a^*_d$. By definition, such an arm is unique if it exists. While the Condorcet winner is not guaranteed to exist, its existence is a common assumption in the literature \citep{urvoy2013generic, zoghi2014relative, komiyama2015regret, chen2017dueling, xu2019dueling,haddenhorst2021identification}.  This assumption implies that for each user, there exists an arm that wins against any other arm with probability greater than $0.5$. We assume the existence of a Condorcet winner for each user which we state formally in Assumption \ref{ass:condorcet}.

\begin{assumption}
\label{ass:condorcet}
For every user $d \in [D]$, there exists a unique arm $a^*_d \in [K]$ such that $\mathcal{P}_{d,a^*_d,j} > 0.5$ for all $j \neq a^*_d$.
\end{assumption}

Leveraging the Condorcet winner as a reference point, we define the scoring function to quantify the performance of an arm relative to each user's Condorcet winner. We define the score of arm $i \in [K]$ for user $d \in [D]$ as the scaled probability that arm $i$ beats the Condorcet winner $a^*_d$ which is formally defined as

\begin{equation}
\label{eq:scoring_function}
    s_d(i) = 2\mathcal{P}_{d,i,a^*_d}.
\end{equation} 

This scaling effectively normalizes the utility to the $[0, 1]$ interval. It guarantees a maximum score of $1$ (since $s_d(a^*_d) = 2 \mathcal{P}_{d,a^*_d,a^*_d} = 2 \times 0.5 = 1$) and the lower bound is $0$ according to the definition of $\mathcal{P}_{d}$.

\subsection{Nash Social Welfare}
To quantify fairness in our setting, we adopt the Nash Social Welfare (NSW) as our central objective function. Originating in game theory, this concept is designed to balance efficient and fair resource allocation among entities, ensuring that no single user's utility is ignored. Given the set $A$ of all possible allocations and the utility function $u_d: A \rightarrow \mathbb{R}^+$ for each user $d \in [D]$, the NSW function is defined as 
$$NSW(a) = \prod_{d=1}^D u_d(a),$$
where $a \in A$ is an allocation. The NSW function is the unique function that satisfies four axioms: Pareto optimality, independence of irrelevant alternatives, anonymity, and continuity \citep{kaneko1979nash}. Moreover, the NSW function was shown to satisfy some fairness metrics, such as envy-freeness up to one good and proportionality \citep{mcglaughlin2020improving, caragiannis2019unreasonable}, and has been extensively used as a metric for fairness in different settings \citep{branzei2017nash, liao2024maximizing, jagtenberg2020fairness}. Most notably, \citet{hossain2021fair} use the NSW function as a fairness metric to learn a fair distribution over arms in a multi-agent multi-armed bandit setting.

Adapting this to our dueling bandit framework, we define a user's utility function for a policy as the expected score, i.e., $u_d(\pi) = \mathbb{E}_{a \sim \pi}[s_d(a)]$. Hence, our specific social welfare function $f$ is given by:
\begin{equation}
f(\pi, s) = NSW(\pi, s) = \prod_{d=1}^D \left(\mathbb{E}_{a \sim \pi}[s_d(a)]\right),
\end{equation}
and the total regret can be defined as 
\begin{equation}
\label{eq:regret}
\begin{split}
    &\mathbb{E}[R_T] = \mathbb{E}\left[\sum_{t=1}^T r_t\right]  =\mathbb{E}\bigg[\sum_{t=1}^T \prod_{d=1}^D \mathbb{E}_{a \sim \pi^*}[2\mathcal{P}_{d,a,a^*_d}]  - \frac{\prod_{d=1}^D \mathbb{E}_{a \sim \pi_t}[2\mathcal{P}_{d,a,a^*_d}] + \prod_{d=1}^D \mathbb{E}_{a \sim \pi'_t}[2\mathcal{P}_{d,a,a^*_d}]}{2}\bigg].  
\end{split}
\end{equation}

Before proceeding, it is important to clarify the connection between the product formulation of the Nash Social Welfare and its geometric mean representation. Since $x \rightarrow x^{1/D}$ is strictly increasing on $[0,\infty)$, the product $\prod_{d=1}^D u_d(\pi)$ and its $D$-th root, the geometric mean $\left(\prod_{d=1}^D u_d(\pi)\right)^{1/D}$, induce the same ordering over policies $\pi$: a policy maximizes one if and only if it maximizes the other. Hence, any policy optimal for one is optimal for the other.

\section{Fair Multi-User Dueling Bandit}
In this section, we present our main theoretical contributions for the fair multi-user dueling bandit problem. We begin by establishing the fundamental hardness of the setting, deriving a regret lower bound. Following this analysis, we introduce two algorithms designed to efficiently optimize the NSW objective: the \textit{Fair-Explore-Then-Commit} and the \textit{Fair-$\epsilon$-Greedy} algorithms. We provide a rigorous theoretical analysis for both methods, establishing regret upper bounds that match the lower bound dependence on horizon $T$ up to logarithmic factors.

\subsection{Hardness Result}

We now turn to the fundamental information-theoretic limits of the fair multi-user dueling bandit problem. While standard dueling bandit algorithms can achieve $O(\log{T})$ regret \citep{zoghi2014relative}, we show that the requirement to maximize the NSW among users with conflicting Condorcet winners introduces significant additional hardness. 

The core difficulty arises from the need to play the Condorcet winners of users in order to estimate the scores of arms. Consider user $d$ with the Condorcet winner $a^*_d$ with $s_{d'}(a^*_d) = 0, \; \forall\; d' \ne d$. While the arm $a^*_d$ is a Condorcet winner for user $d$, playing the arm yields zero score to all other users. However, in order to estimate the scores $s_d(a)$ for $a \ne a^*_d$, the pair $(a, a^*_d)$ needs to be played. We utilize this property to create a set of hard instances and derive the lower bound in Theorem \ref{thm:lb_cond}.
\label{sec:lb}

\begin{theorem}
\label{thm:lb_cond}
Consider the fair multi-user dueling bandit problem with $D$ users and $K$ arms, where $D, K \geq 4$ are even, and a time horizon $T$. Suppose the user preferences satisfy Assumption \ref{ass:condorcet} and the utilities are defined by the scoring function in Equation \ref{eq:scoring_function}. Then, for any online learning algorithm, there exists a problem instance such that the expected regret (defined in Equation \ref{eq:regret}) is at least:
$$
\mathbb{E}[R_T] = \Omega \left( T^{2/3} \min(K,D)^{1/3} \right).
$$
\end{theorem}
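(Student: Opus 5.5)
We prove the bound with a ``revealing action'' construction of the kind behind $T^{2/3}$ lower bounds in partial monitoring. Set $M=\min(K,D)$; since $K,D\ge 4$ are even we may assume $M$ is even, so there are $M/2\ge 2$ disjoint pairs. First we show the $D$-user problem is at least as hard as an $M$-user one. Duplicate users so that all $D$ users fall into $M$ groups, each group sharing one preference matrix. This keeps the product structure up to an exponent, and rescaling the gap parameter absorbs that exponent. We use $M$ ``expert'' arms $e_1,\dots,e_M$, where $e_m$ is the Condorcet winner of group $m$. We also use $M$ ``candidate'' arms $c_1,\dots,c_M$, grouped into $M/2$ pairs $(c_{2j-1},c_{2j})$, and pad any remaining arms with dominated copies.

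The scores are as follows. Each expert $e_m$ has $s_m(e_m)=1$ and $s_{m'}(e_m)=0$ for $m'\neq m$, so playing it makes the NSW of that round $0$. Each candidate has score $\tfrac12$ for every user, except in a single hidden pair $j^\star$ and for a hidden sign. There, one arm of the pair has score $\tfrac12+\epsilon$ and the other $\tfrac12-\epsilon$ for the users in pair $j^\star$'s groups, and the rest stays at $\tfrac12$. Among candidates, preferences are set to $\tfrac12$, so comparing two candidates carries no information. Whether a user prefers a candidate over the other candidates, or over other experts, is fixed and independent of the instance. Only $\mathcal P_{d,c,e_d}=s_d(c)/2$ depends on the hidden instance. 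We then check Assumption~\ref{ass:condorcet}: $e_m$ must beat everything for group $m$, which holds since $\mathcal P_{m,c,e_m}\le \tfrac14+\epsilon/2<\tfrac12$, and the remaining entries are chosen to be consistent.

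Next we compute the regret. A round in which either played policy puts mass $p$ on experts loses a constant fraction of the optimal welfare. The NSW is at most $(1-p)^{\cdot}\,\Theta(2^{-M})$, which is a constant loss once we normalize via the paper's geometric-mean remark. To keep the $2^{-M}$ factor harmless, we raise the baseline candidate scores to $1-\Theta(1/M)$ and set the gap to $\epsilon/M$ per user, so that the product stays $\Theta(1)$. Any round that plays an expert therefore costs $\Omega(1)$ regret. Among candidate policies, failing to put most mass on the right arm of the correct pair costs $\Omega(\epsilon)$ relative to the optimum, because the NSW is linear in the mass on that arm.

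The information argument is as follows. Information about the hidden pair and sign comes only from duels $(c,e_m)$ with $m$ in pair $j^\star$'s groups. Each such duel has KL divergence $O(\epsilon^2)$ between the $\pm$ hypotheses and the null hypothesis. Let $N$ be the expected number of expert plays under the null instance. Over the $M$ hypotheses, averaging gives that the expected information about a random pair is $O(\epsilon^2 N/M)$. Pinsker's inequality with the standard averaging over $j^\star$ then shows that unless $N\gtrsim M/\epsilon^2$, the algorithm misidentifies the pair with constant probability. In that case it incurs $\Omega(\epsilon T)$ regret. The regret is thus at least $\min\{\Omega(M/\epsilon^2),\ \Omega(\epsilon T)\}$ up to constants. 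Choosing $\epsilon=(M/T)^{1/3}$ gives $\Omega(T^{2/3}M^{1/3})$.

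The main obstacle is the instance design that simultaneously (i) satisfies Assumption~\ref{ass:condorcet} for every user, (ii) keeps the welfare scale $\Theta(1)$ despite the product over $M$ factors, and (iii) ensures each exploratory expert play yields only $O(\epsilon^2)$ information about one pair while costing $\Omega(1)$. I would first fix the scores to get (ii), then fill in the non-informative preference entries to satisfy (i), and then verify (iii) by computing the KL divergence of Bernoulli observations whose means differ by $\Theta(\epsilon)$ around a value bounded away from $0$ and $1$.
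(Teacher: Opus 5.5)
Your architecture matches the paper's: revealing arms are Condorcet winners that zero out everyone else, a needle is hidden among $\Theta(M)$ candidates, and Pinsker plus averaging against a null instance balances the two costs at $\epsilon$ of order $(M/T)^{1/3}$. The gap is that your concrete instance cannot deliver your point (ii) and an $\Omega(\epsilon)$ per-round gap at the same time.

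Suppose every candidate scores $1-\kappa/M$ for every user, and only the two groups of the hidden pair see $\pm\epsilon/M$. The best candidate then has welfare $(1-\kappa/M)^{M-2}(1-\kappa/M+\epsilon/M)^2\approx e^{-\kappa}(1+2\epsilon/M)$, versus about $e^{-\kappa}$ for uninformed candidate mixtures. So misidentification costs $\Theta(\epsilon/M)$ per round, not $\Omega(\epsilon)$. With the quantities you wrote, the trade-off is $\min\{M/\epsilon^2,\ \epsilon T/M\}$, which at $\epsilon=(M/T)^{1/3}$ is only $T^{2/3}M^{-2/3}$.

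Reparametrizing by the true per-user gap $\eta=\epsilon/M$ (KL $\Theta(\eta^2)$ per informative duel, regret gap $\Theta(\eta)$) restores $\min\{M/\eta^2,\ \eta T\}$. But now $\eta<\kappa/M$ is forced. The boosted candidate's score $1-\kappa/M+\eta$ must stay below $1$, or $e_m$ no longer beats it for group $m$ and Assumption~\ref{ass:condorcet} fails. You need $\kappa=O(1)$ to keep the scale $e^{-\kappa}$ constant, so $\eta=(M/T)^{1/3}$ is admissible only for $T\ge M^4/\kappa^3$. The paper's instance only needs $\epsilon<0.2$, i.e.\ $T$ of order $M$. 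Duplicating users makes your constraint worse, since a common baseline must then be $1-\Theta(1/D)$.

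The natural alternative also fails. Suppose you score each candidate near $1$ for groups outside its own pair but at a constant such as $1/2$ on its own pair. Then the NSW maximizer spreads about $2/M$ mass over every pair, so knowing the hidden pair is worth only about $\epsilon/M$ at first order.

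Separately, your design uses $2M$ arms, but $K$ can equal $M$ (whenever $K\le D$). Halving the number of groups fixes that at constant cost.

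The missing idea is to make the optimum a pure strategy that beats every other candidate by $\Theta(\epsilon)$, while all candidates keep welfare near $1$. The paper does this as follows:
\begin{itemize}
\item All information is routed through one fixed user $r$, whose winner $b$ scores $0$ for everyone else.
\item The $M/2$ good Condorcet winners themselves serve as the candidates, so only $M\le K$ arms are used. They score at least $1-2\epsilon'$ for every user other than $r$.
\item In $\mathcal{I}^m$, every good winner except $a_m$ is penalized by $2\epsilon$ in $s_r$.
\end{itemize}
Any policy supported on good winners then has $\mathrm{NSW}\le 1-2(1-\pi(a_m))\epsilon$, while playing $a_m$ gives at least $(1-2\epsilon')^{D-1}\approx 1$. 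Each unit of misplaced mass therefore costs $2\epsilon$, with no coupling between $\epsilon$ and $M$. Remark~1 reduces policies that touch bad winners to this case. Remark~2 makes each play of $b$ cost a constant. Only duels $(b,a_m)$ are informative, with KL at most $6\epsilon^2$.

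If you want to keep your pairs, an analogous repair is to set the own-pair baseline to $1-2\epsilon$, with perturbations $\pm\epsilon$ and scores near $1$ elsewhere. Mixing across pairs then helps only at second order, and the first-order loss is at least $2\epsilon(1-w_+)$, where $w_+$ is the mass on the boosted candidate.
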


\begin{proof}[Proof Sketch of Theorem \ref{thm:lb_cond}]
    \textbf{Intuition:} 
    The core difficulty lies in optimizing the NSW function when users have conflicting interests. We construct a needle-in-a-haystack scenario where the algorithm must identify a subset of \emph{Good} Condorcet winners among many \emph{Bad} ones. 
    
    Consider the set of Condorcet winners of all users. We categorize them into \emph{Bad} winners and \emph{Good} winners. A \emph{Bad Winner} is optimal for a specific user but yields zero (or very low) utility for all other users. Playing a bad winner essentially zeroes out the NSW product. A \emph{Good Winner}, while still optimal for its specific user, provides non-zero utility to others. The algorithm must distinguish between these two types to avoid incurring linear regret in the time horizon $T$ with respect to the NSW objective. Among these \emph{Good Winners}, one arm is distinguished in each perturbed instance. The statistical hardness comes from the fact that the preference margins distinguishing these instances are controlled by a small perturbation $\epsilon$.

    \textbf{Hard Instance Construction:}
    We consider $D$ users and $K$ arms. We denote the set of Condorcet winners as $\mathcal{A}^*$, where each user $d$ has a specific winner $a^*_d$. The set $\mathcal{A}^*$ is partitioned into two sets, \emph{Good Winners} ($\bar{\mathcal{A}}$) and \emph{Bad Winners} ($\mathcal{A}^* \setminus \bar{\mathcal{A}}$) of size $|\mathcal{A}^*|/2$ each. A \emph{Good Winner} $a^*_d \in \bar{\mathcal{A}}$ has scores that are close to 1 for other users, while a \emph{Bad Winner} $a^*_{d'} \in \mathcal{A}^* \setminus \bar{\mathcal{A}}$ has a zero score for other users.

    We construct $|\bar{\mathcal{A}}|$ distinct problem instances. In the $m$-th instance $\mathcal{I}^m$, we perturb the preferences such that a policy that plays arm $m \in \bar{\mathcal{A}}$ deterministically is near-optimal. The critical difference between arm $m$ and another good winner is their probability of winning against a selected bad winner according to a bad winner's corresponding users. Specifically, we select a user $r$ with a bad winner and we set $\mathcal{P}_{r,m, a^*_r} - \mathcal{P}_{r,a^*_{d'}, a^*_r} = \epsilon , \forall a^*_{d'} \in \bar{\mathcal{A}}, a^*_{d'} \ne m$. 
    
    \textbf{Regret Analysis:}
    In instance $\mathcal{I}^m$, if the algorithm fails to identify the arm $m$, it will play arms from $\bar{\mathcal{A}}$ with an $\epsilon$ sub-optimality. On the other hand, the agent needs to play the selected bad winner in order to distinguish arm $m$ from the good winners. 
    
    Consequently, the algorithm faces a fundamental dilemma. To distinguish the arm $m$ from the other statistically similar good winners, it must play policies that select the selected bad winner, which incurs high instantaneous regret. Conversely, if the algorithm avoids this bad winner, it fails to identify $m$ and suffers regret proportional to the sub-optimality gap $\epsilon$ over the horizon $T$. Balancing the regret from insufficient information against the cost of exploration leads to a worst-case instance where the perturbation is set to $\epsilon = \Theta\left(\left(\frac{|\mathcal{A}^*|}{T}\right)^{1/3}\right)$, resulting in the overall lower bound of $\Omega(T^{2/3}|\mathcal{A}^*|^\frac{1}{3})$. 
    
    By observing that the set of Condorcet winners $\mathcal{A}^*$ is constrained by both the number of arms and users such that $|\mathcal{A}^*| \le \min(K, D)$, and that this bound is tight in our constructed instances, we obtain the final lower bound of $\Omega(T^{2/3} \min(K, D)^{1/3})$.
\end{proof}

\subsection{Fair-Explore-Then-Commit Algorithm}
\begin{algorithm}[bt!]
\caption{Fair-ETC}
\label{alg:nsw_etc}
\begin{algorithmic}[1]
\REQUIRE Horizon $T$, Set of Arms $[K]$, Set of Users $[D]$, sample access to the joint preference tensor $\mathcal{P}$, $\delta$ confidence for DKWT, $L$ exploration steps.

\STATE $(\hat{A}^*, t) \gets$  \textbf{DKWT}$(K, D,\mathcal{P}, \frac{\delta}{D})$.

\STATE Initialize score estimates $\hat{\mathcal{P}}_d(a, \hat{a}^*_d) \gets 0$ for all $a \in [K] \setminus \{\hat{a}^*_d\},  d \in [D]$ and $\hat{\mathcal{P}}_d(\hat{a}^*_d, \hat{a}^*_d) \gets \frac{1}{2}$ for all $d \in [D]$.  
\FOR{$a^* \in \hat{\mathcal{A}}^*, \ a \in [K] \setminus \{a^*\}, \ i \in [L]$}
        \STATE Play pair $(a, a^*)$ and observe feedback vector $\mathbf{y}_t$.
        \STATE Update $\hat{\mathcal{P}}_d(a, a^*) \gets \hat{\mathcal{P}}_d(a, a^*)+ \frac{1}{L} \mathbb{I}(\mathbf{y}_{t}[d] = 1)$ for all $d \in [D]$.
        \STATE $t \leftarrow t + 1$.
\ENDFOR

\STATE Compute $\hat{\pi} \in \argmax_{\pi \in \Delta^K} \prod_{d=1}^D \mathbb{E}_{a\sim \pi}[2\hat{\mathcal{P}}_d(a, a^*_d)]$

\WHILE{$t < T$}
    \STATE Sample arms $a_t, a_t'$ independently from $\hat{\pi}$
    \STATE $t \leftarrow t + 1$.
\ENDWHILE

\end{algorithmic}
\end{algorithm}
\label{sec:etc}
We design the Fair-Explore-Then-Commit (Fair-ETC) algorithm for fair multi-user dueling bandits and present it in Algorithm \ref{alg:nsw_etc}. The algorithm consists of three phases: (1) the Condorcet winner identification phase, (2) the exploration phase, where the scores of arms are estimated, and (3) the exploitation phase, where the agent plays the optimal policy that maximizes the NSW based on the estimated scores.

To identify the set of Condorcet winners $\hat{\mathcal{A}}^* = \{\hat{a}^*_1, \dots, \hat{a}^*_D\}$, we employ a modified version of the Dvoretzky-Kiefer-Wolfowitz Tournament (DKWT) algorithm \citep{haddenhorst2021identification}. The standard DKWT was introduced as a sample-efficient solution for identifying the generalized Condorcet winner in multi-dueling bandit scenarios. It operates as a round-based elimination procedure that iteratively discards arms from a candidate set once it can be proven, with high confidence, that they are not the winner. 

At the core of this algorithm is a sequential mode-estimation subroutine built upon the Dvoretzky-Kiefer-Wolfowitz (DKW) inequality. The DKW inequality provides a tight, uniform bound on the maximum deviation between empirical and true categorical probability distributions. Rather than forcing a premature choice, the DKWT subroutine is designed to update the required confidence probability and continue exploration if the empirical margins between competing arms do not yet satisfy the required confidence. By sequentially refining these confidence parameters, DKWT achieves an asymptotically nearly optimal worst-case sample complexity.

While the standard DKWT identifies a single generalized Condorcet winner for a single preference relation, our setting involves $D$ distinct users, each with a potentially unique Condorcet winner. Consequently, our modification, detailed in Algorithm \ref{alg:mod_dkwt}, runs $D$ tournament instances in parallel. Crucially, to optimize sample efficiency during exploration, our approach leverages shared interactions: a single duel between arms is used to update the empirical margins across all $D$ users simultaneously. This optimization allows the algorithm to concurrently eliminate suboptimal arms for any user as soon as their individual required confidence is satisfied at each round.

In practice, we solve the NSW maximization via the log-transform $\hat\pi \in \argmax_{\pi \in \Delta^K} \sum_{d=1}^D \log(\max(\hat u_d(\pi), \zeta))$, where $\zeta>0$ is a small constant added for numerical stability, using Frank-Wolfe run to suboptimality tolerance $\epsilon_{\mathrm{opt}}$ (rather than an exact $\argmax$).

We establish an upper bound on the total regret of Algorithm \ref{alg:nsw_etc} in Theorem \ref{thm:etc}.

\begin{theorem}
\label{thm:etc}
Consider the fair multi-user dueling bandit problem with $D$ users, $K$ arms, and a time horizon $T$. Suppose user preferences satisfy Assumption \ref{ass:condorcet} and the scoring function is defined as in Equation \ref{eq:scoring_function}. Let $\Delta = \min_{d \in [D], i \neq j} |0.5 - \mathcal{P}_{d,i,j}| > 0$ be the minimum preference gap. If the Fair-ETC algorithm is executed with confidence parameter $\delta = \frac{K \log(K/2)}{2 \hat{\Delta} T}$ and exploration length $L = \Theta(K^{-2/3} |\mathcal{A}^*|^{-2/3} D^\frac{2}{3}T^{2/3} \log^{1/3}(DKT))$, then for any $\hat{\Delta} \in (0, 1)$ such that $\hat\Delta > \dfrac{K\log(K/2)}{2T}$, the expected regret (defined in Equation \ref{eq:regret}) is upper bounded by $\mathbb{E}[R_T] \le$:
\begin{equation*}
\begin{split}   
    O \Bigg( \frac{KD}{\Delta^2} \log \left( \frac{K}{2} \right) \Bigg[ \log \log \left( \frac{1}{\Delta} \right) 
     + \log \left( \frac{2 D\hat{\Delta} T}{K \log ( \frac{K}{2} )} \right) \Bigg] + D^{\frac{2}{3}} K^{\frac{1}{3}} |\mathcal{A}^*|^{\frac{1}{3}} T^{\frac{2}{3}} \log^{\frac{1}{3}} ( DKT )  + \frac{K \log ( \frac{K}{2} )}{\hat{\Delta}} \Bigg).
\end{split}
\end{equation*}
\end{theorem}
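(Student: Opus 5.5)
The plan is to split the regret into the three phases of Algorithm~\ref{alg:nsw_etc} and to condition on a good event $\mathcal{E}$, on which every Condorcet winner is identified correctly ($\hat{\mathcal{A}}^* = \mathcal{A}^*$, with $\hat a^*_d = a^*_d$ for all $d$) and every score estimate lies within its confidence width. Because $f\in[0,1]$, each round contributes at most $1$ to the regret. On $\mathcal{E}^c$ we therefore pay at most $T\,\Pr(\mathcal{E}^c)$.

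\textbf{Identification phase.} We use the modified DKWT with per-user confidence $\delta/D$. Each of the $D$ parallel tournaments comes from the single-instance guarantee of \citet{haddenhorst2021identification}: with probability at least $1-\delta/D$, that user's winner is identified after $O\!\left(\frac{K}{\Delta^2}\log(K/2)\left[\log\log(1/\Delta)+\log(D/\delta)\right]\right)$ samples. We bound the total identification length crudely by the sum over the $D$ users, since shared duels can only help. Substituting $\delta = \frac{K\log(K/2)}{2\hat\Delta T}$ turns $\log(D/\delta)$ into $\log\!\left(\frac{2D\hat\Delta T}{K\log(K/2)}\right)$, which gives the first term of the bound. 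A union bound makes the identification failure probability at most $\delta$. The contribution $T\delta = \frac{K\log(K/2)}{2\hat\Delta}$ is the last term. The condition $\hat\Delta > \frac{K\log(K/2)}{2T}$ ensures $\delta<1$.

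\textbf{Exploration phase.} Given correct winners, we play $|\mathcal{A}^*|(K-1)L$ rounds, each costing at most $1$. Hoeffding's inequality plus a union bound over the $\le DK$ estimated entries $\mathcal{P}_d(a,a^*_d)$ give $|\hat s_d(a)-s_d(a)|\le \eta := 2\sqrt{\log(2DKT)/(2L)}$ with probability $1-1/T$. The failure case costs $O(1)$ in expectation.

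\textbf{Commit phase.} This step is the crux. We need a Lipschitz bound for NSW in the scores. For any $\pi$ and any two score vectors in $[0,1]^{D\times K}$ that differ by at most $\eta$ entrywise, $u_d(\pi)$ and $\hat u_d(\pi)$ differ by at most $\eta$. Telescoping the product of $D$ factors in $[0,1]$ then gives $|f(\pi,s)-f(\pi,\hat s)|\le D\eta$. A subtlety is that $\hat s$ could slightly exceed $1$, or the factors might need clipping; I would clip the estimates to $[0,1]$ so the telescoping remains valid. Combined with the optimality of $\hat\pi$ for $\hat s$, this yields
\[
f(\pi^*,s)-f(\hat\pi,s)\le 2D\eta .
\]
The commit phase therefore costs $O\!\left(TD\sqrt{\log(DKT)/L}\right)$.

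\textbf{Balancing.} We balance $K|\mathcal{A}^*|L$ against $TD\sqrt{\log(DKT)/L}$. Setting $L^{3/2}\asymp TD\sqrt{\log(DKT)}/(K|\mathcal{A}^*|)$ gives $L=\Theta(K^{-2/3}|\mathcal{A}^*|^{-2/3}D^{2/3}T^{2/3}\log^{1/3}(DKT))$, and the total is $D^{2/3}K^{1/3}|\mathcal{A}^*|^{1/3}T^{2/3}\log^{1/3}(DKT)$, which matches the middle term.

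\textbf{Main obstacle.} The main difficulty is the identification analysis. Transferring DKWT's sample complexity to $D$ parallel instances requires care: rounds continue until all users finish, and the adaptively shared samples must not break the per-user DKW guarantees. I would handle this by noting that each user's tournament sees its own valid sample stream, with independent coordinates given the pair. Bounding the total identification length by the sum of the individual lengths is crude, but it is enough to give the stated $KD/\Delta^2$ factor.
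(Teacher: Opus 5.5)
Your proposal is correct and follows the paper's proof essentially step for step. It uses the same phase decomposition with good events for identification (DKWT at confidence $\delta/D$, a union bound, and the resulting $T\delta=\frac{K\log(K/2)}{2\hat\Delta}$ term) and for score concentration (Hoeffding plus a union bound). It also uses the same Lipschitz-plus-optimality argument giving an $O(D\eta)$ commit-phase gap; the paper cites this as Lemma~\ref{lem:lipchitzh} from \citet{hossain2021fair} rather than deriving it by telescoping. Finally, it uses the same balancing of $K|\mathcal{A}^*|L$ against $TD\sqrt{\log(DKT)/L}$.

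One small remark: clipping $\hat s$ would change the $\argmax$ the algorithm actually computes, and it is unnecessary. With unclipped estimates the telescoping factors lie in $[0,1+\eta]$ and cost only an extra factor $(1+\eta)^{D-1}\le e^{D\eta}=O(1)$ when $D\eta\le 1$. When $D\eta>1$ the bound is trivial.
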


In the context of Theorem \ref{thm:etc}, it is crucial to distinguish between the environment-dependent parameter $\Delta$ and the algorithmic parameter $\hat{\Delta}$. The term $\Delta$ represents the true minimum preference gap inherent to the problem instance; it quantifies the smallest margin by which any user's preference probability deviates from pure chance ($0.5$). A smaller $\Delta$ indicates a statistically harder problem where preferences are difficult to distinguish. Conversely, $\hat{\Delta}$ is a user-defined hyperparameter in $(0,1)$ used to configure the confidence level $\delta$ of the algorithm. Unlike $\Delta$, which is unknown to the agent, $\hat{\Delta}$ acts as a tuning knob that controls the conservatism of the Condorcet winner identification phase. The theorem holds for any valid choice of $\hat{\Delta}$, but this distinction clarifies that the algorithm does not require knowledge of the true gap $\Delta$ to operate, although the theoretical regret bound is naturally influenced by the interplay between the chosen $\hat{\Delta}$ and the true hardness $\Delta$.

\subsection{Fair-$\epsilon$-Greedy Algorithm}
\label{sec:eps}

\begin{algorithm}[bt!]
    \caption{Fair-$\epsilon$-Greedy}
    \label{alg:epsilon_greedy}
    \begin{algorithmic}[1]
    \REQUIRE Horizon $T$, Set of Arms $[K]$, Set of Users $[D]$, exploration parameters $\epsilon_t$, sample access to the joint preference tensor $\mathcal{P}$, $\delta$ confidence for DKWT.
    \STATE $(\hat{A}^*, t) \gets$  \textbf{DKWT}$(K, D, \mathcal{P}, \frac{\delta}{D})$.
    \STATE Initialize score estimates $\hat{\mathcal{P}}_d(a, \hat{a}^*_d) \gets 0$ for all $a \in [K] \setminus \{\hat{a}^*_d\},  d \in [D]$ and $\hat{\mathcal{P}}_d(\hat{a}^*_d, \hat{a}^*_d) \gets \frac{1}{2}$ for all $d \in [D]$.  
    \FOR{$a^* \in \hat{\mathcal{A}}^*$, $a \in \{a \in [K] : a \ne a^*\}$}
        \STATE Play duel $(a, a^*)$ once and update the estimated $\hat{\mathcal{P}}$.
        \STATE $t \gets t + 1$.
    \ENDFOR
        \FOR{$t=t + 1$ to $T$}
            \STATE Sample $u$ uniformly in $[0,1]$
            \IF{$u \le \epsilon_t$}
                \STATE Play the next pair of a round robin schedule over pairs $ a^* \in \hat{\mathcal{A}}^*$ and $a \in \{a \in [K]: a\ne a^*\}$ and update the estimated $\hat{\mathcal{P}}$
            \ELSE
                \STATE Set $\hat{\pi}_t \in \argmax_{\pi \in \Delta^K} \prod_{d=1}^D \mathbb{E}_{a\sim \pi}[2\hat{\mathcal{P}}_d(a, a^*_d)]$.
                \STATE Sample arms $a_t, a'_t$ independently from  $\hat{\pi}_t$.
                \STATE Observe duels $a_t, a'_t$.
            \ENDIF
        \ENDFOR
    \end{algorithmic}
\end{algorithm}

We design the Fair-$\epsilon$-Greedy algorithm for fair multi-user dueling bandits and present it in Algorithm \ref{alg:epsilon_greedy}. Similar to the Fair-ETC strategy, this algorithm begins with a Condorcet winner identification phase using the Dvoretzky-Kiefer-Wolfowitz tournament algorithm \citep{haddenhorst2021identification}. However, instead of separating estimation and optimization into distinct sequential blocks, it balances them in each round. At each time step $t$, with probability $\epsilon_t$, the algorithm explores by playing the next pair of a round-robin schedule over pairs involving the estimated Condorcet winners to refine score estimates. With probability $1-\epsilon_t$, it exploits by playing the policy that maximizes the empirical NSW. We establish an upper bound on the total regret of Algorithm \ref{alg:epsilon_greedy} in Theorem \ref{thm:ub-eps}.

Although both algorithms achieve the same asymptotic upper bound, we highlight the following differences. While ETC is simpler and less computationally intensive in implementation, as it requires a single computation of the optimal policy, it cannot recover from poor estimation once the exploitation phase starts. On the other hand, as $\epsilon$-greedy's exploration is carried out across the horizon with a decaying probability, it can still recover from poor early estimates. However, it requires more computation as the current estimated optimal policy is recomputed with each exploitation step. In summary, Fair-ETC is preferred when the environment is stationary, and computation is constrained, while Fair-$\epsilon$-Greedy is preferred when early estimates are unreliable, at the cost of $O(T)$ additional solves.

\begin{theorem}
\label{thm:ub-eps}
Consider the same setting as Theorem \ref{thm:etc}. Let $T_{0}$ denote the number of time steps used for the Condorcet winner identification phase and $\tau_0 = (K-1)|\mathcal{A}^*|$. If the Fair-$\epsilon$-Greedy algorithm is executed with confidence $\delta = \frac{K \log(K/2)}{2 \hat{\Delta} T}$ and a time-decaying exploration rate $\epsilon_t = \Theta \left( D^{2/3} K^{1/3} |\mathcal{A}^*|^{1/3} (t - T_0-\tau_0)^{-1/3} \log^{1/3}(DK(t - T_0-\tau_0)) \right)$, then for any $\hat{\Delta} \in (0,1)$ such that $\hat\Delta > \dfrac{K\log(K/2)}{2T}$, the expected regret is upper bounded by $\mathbb{E}[R_T] \le$:
\begin{equation*}
\begin{split}
     O \Bigg( \frac{KD}{\Delta^2} \log \left( \frac{K}{2} \right) \Bigg[ \log \log \left( \frac{1}{\Delta} \right) + \log \left( \frac{2D \hat{\Delta} T}{K \log ( \frac{K}{2} )} \right) \Bigg] 
     + D^{\frac{2}{3}} K^{\frac{1}{3}} |\mathcal{A}^*|^{\frac{1}{3}} T^{\frac{2}{3}} \log^{\frac{1}{3}} ( DKT )  + \frac{K \log ( \frac{K}{2} )}{\hat{\Delta}} \Bigg).
\end{split}
\end{equation*}
\end{theorem}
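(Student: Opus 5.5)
We split the regret into three pieces, exactly as for Theorem \ref{thm:etc}: (i) the regret accumulated during the Condorcet winner identification phase run by the modified DKWT, (ii) the regret on the failure event that DKWT returns a wrong winner for some user, and (iii) the regret of the $\epsilon$-greedy stage conditioned on correct identification. For (i), each round costs at most $1$ because $f\in[0,1]$. We bound $\mathbb{E}[T_0]$ by the sample complexity guarantee of DKWT \citep{haddenhorst2021identification} with confidence $\delta/D$, applied per user, and note that all $D$ tournaments share duels. This gives the term $\frac{KD}{\Delta^2}\log(K/2)[\log\log(1/\Delta)+\log(D/\delta)]$, and substituting $\delta=\frac{K\log(K/2)}{2\hat\Delta T}$ produces the first bracket. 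For (ii), a union bound over users shows the failure probability is at most $\delta$. The regret on that event is at most $T$, so this piece contributes $\delta T = O(K\log(K/2)/\hat\Delta)$. The requirement $\hat\Delta > K\log(K/2)/(2T)$ is exactly what makes $\delta<1$.

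On the good event, the initial sweep over pairs costs $\tau_0=(K-1)|\mathcal{A}^*|$ rounds, which is lower order. Write $s=t-T_0-\tau_0$. Exploration at round $t$ costs at most $\epsilon_t$ in expectation, so summing gives
$\sum_s \epsilon_s = O(D^{2/3}K^{1/3}|\mathcal{A}^*|^{1/3}T^{2/3}\log^{1/3}(DKT))$.
For exploitation, we use the Lipschitz property of the NSW already exploited for Fair-ETC. Since every $u_d\in[0,1]$, a telescoping product gives
$|f(\pi,s)-f(\pi,\hat s)|\le \sum_d |u_d(\pi)-\hat u_d(\pi)| \le D\max_{d,a}|s_d(a)-\hat s_d(a)|$.
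Hence the exploitation regret at round $t$ is at most $2D\max_{d,a}|\hat s_d(a)-s_d(a)|$, because $\hat\pi_t$ maximizes the empirical NSW.

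The main obstacle is controlling the number of samples each pair $(a,a^*)$ has received by round $t$. Exploration is randomized, so we need a concentration argument on the count of exploration rounds. By a multiplicative Chernoff bound, with high probability the number of exploration rounds up to $s$ is at least $\frac12\sum_{s'\le s}\epsilon_{s'} = \Omega(D^{2/3}K^{1/3}|\mathcal{A}^*|^{1/3}s^{2/3}\log^{1/3})$. The round robin then gives each of the $(K-1)|\mathcal{A}^*|$ pairs a $1/((K-1)|\mathcal{A}^*|)$ share of these. We apply Hoeffding's inequality with a union bound over $D$ users, $K$ arms and all times; this is why the logarithmic factor is $\log(DKT)$. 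It yields
$\max|\hat s-s| = O\big(\sqrt{K|\mathcal{A}^*|\log(DKs)/n_s}\big)$ with $n_s$ the exploration count.
Plugging in shows the per-round exploitation regret is $O(D\,\cdot\,D^{-1/3}K^{1/3}|\mathcal{A}^*|^{1/3}s^{-1/3}\log^{1/3})$, which is of the same order as $\epsilon_s$; this is precisely the balancing that fixes the exponent in $\epsilon_t$. Summing over $s\le T$ yields the $T^{2/3}$ term. The rounds where the Chernoff or Hoeffding events fail are absorbed by choosing failure probability $1/T$, which costs $O(1)$ regret. Early rounds where $\epsilon_s\ge1$ are trivially bounded by the same sum. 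Adding the three pieces gives the claimed bound.
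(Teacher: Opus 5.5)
Your proposal is correct and follows essentially the same route as the paper's proof. Both split the regret into five pieces: the DKWT identification cost at confidence $\delta/D$, a $\delta T$ misidentification term, the $\tau_0$ warm-up, the per-round exploration cost $\epsilon_t$, and an exploitation cost handled through the NSW Lipschitz bound, a lower bound on the random exploration count, the round-robin share $1/((K-1)|\mathcal{A}^*|)$, and Hoeffding unioned over sample sizes, balanced to give the $T^{2/3}$ term. The one cosmetic difference is that you lower-bound the exploration count by $\frac12\sum_{s'\le s}\epsilon_{s'}$ via a multiplicative Chernoff bound, while the paper uses additive Hoeffding to get $N^t\ge(1-\gamma)\mathbb{E}[N^t]\ge(1-\gamma)(t-T_0-\tau_0)\epsilon_t$ with failure probability $1/(t-T_0-\tau_0)$.
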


\textbf{Regret Bound Gap.} Our upper and lower bounds match in their dependence on $T$ up to logarithmic factors, but not in their dependence on $K$, $D$, and $\Delta$. This gap arises because our upper bounds additively combine a Condorcet winner \emph{identification} cost ($O(KD/\Delta^2)$, inherited from the DKWT subroutine of Haddenhorst et al.\ (2021)) with an \emph{exploitation} cost for the NSW objective ($O(D^{2/3}K^{1/3}|A^*|^{1/3}T^{2/3})$), while our lower bound isolates only the latter source of hardness, assuming winners are effectively known and depending only on $\min(K,D)$ with no dependence on $\Delta$. The $1/\Delta^2$ term reflects a general winner-identification cost rather than a fairness-specific one, and treating identification and exploitation as separable phases is the reason for this looseness. Closing this gap likely requires an algorithm and a matching lower bound that couple identification and exploitation rather than resolving winners before any welfare-directed exploration begins; we leave this as an open direction for future work.

\textbf{Other Welfare Functions.} We adopt NSW for its axiomatic uniqueness \citep{kaneko1979nash} and log-concavity. We conjecture our lower bound method could be extended to any welfare function sensitive to a zeroed-out coordinate (e.g., Generalized Gini Index \citep{busa2017multi}, $p$-means with $p\le1$ \citep{kim2025navigating}), since the hard instance exploits an informational obstruction rather than NSW's product structure specifically. Our upper bound, however, relies on an NSW-specific sensitivity bound (Lemma \ref{lem:lipchitzh}) and would require different tools for finding the upper bound for other objectives.

\section{Experiments}
\label{sec:experiments}

In this section, we validate our theoretical findings through numerical simulations. We compare the performance of our proposed algorithms, Fair-ETC and Fair-$\epsilon$-Greedy, against utilitarian baselines and a balanced uniform strategy. Our experiments aim to answer the following questions: (1) Do our algorithms successfully minimize regret with respect to the NSW objective? (2) How do our algorithms compare to standard utilitarian approaches in terms of fairness metrics such as the Gini coefficient and minimum user utility? (3) How robust are these algorithms when user preferences are clustered, creating distinct majority and minority groups?

\subsection{Experimental Setup}
\label{subsec:setup}

\begin{table}[tb!]

    \centering
    \caption{Comparison of final policy performance metrics ($K=10, D=10,\Delta=0.1$).}
    \label{tab:metrics_random}
    \resizebox{\columnwidth}{!}{%
    \begin{tabular}{lcccc}
        \toprule
        \textbf{Metric} & \textbf{Fair-ETC (Ours)} & \textbf{Fair-$\epsilon$-Greedy (Ours)} & \textbf{Utilitarian}  & \textbf{Uniform-Over-Users} \\
        \midrule
        Cumulative Regret & $\mathbf{201.7 \pm 65.8}$ & $215.7 \pm 72.2$ & $404.9 \pm 85.5$ & $577 \pm 228.3$ \\
        Min Welfare & $70545.8 \pm 3615.3$ & $71798.1 \pm 3384.1$ & $47101.4 \pm 5886.1$ & $\mathbf{73567.5 \pm 3255.9}$ \\
        Nash Social Welfare & $\mathbf{107370.3 \pm 2379}$ & $106950.1 \pm 2359.3$ & $103755.9 \pm 3001.9$ & $98667.9 \pm 2113$ \\
        Gini Coefficient & $0.1176 \pm 0.0103$ & $0.1164 \pm 0.0098$ & $0.1998 \pm 0.0146$ & $\mathbf{0.0883 \pm 0.0084}$ \\
        Utilitarian Welfare & $110178.7 \pm 2579.4$ & $109632.8 \pm 2505$ & $\mathbf{113472.5 \pm 2311.3}$ & $100076 \pm 2066.4$ \\
        \bottomrule
    \end{tabular}%
    }
\end{table}

\textbf{Environment Generation.} We simulate a dueling bandit environment with $D$ users and $K$ arms. For each user $d \in [D]$, we generate a preference matrix $\mathcal{P}_d$ satisfying the Condorcet winner assumption. Specifically, for each user, a Condorcet winner $a^*_d$ is either chosen uniformly at random or selected based on the experiment criteria. The pairwise probability $\mathcal{P}_{d,a^*_d, j}$ is sampled uniformly from $[0.5 + \Delta, 1.0]$ for all $j \neq a^*_d$, ensuring a minimum gap $\Delta$ to distinguish the winner. All other pairwise probabilities $\mathcal{P}_{d,i, j}$ are sampled uniformly from $[0, 1]$, with adjustments to strictly enforce the minimum gap condition. In all experiments, we report the mean over 30 independent runs, with shaded error areas in Figures and the \(\pm\) values in tables corresponding to 95\% confidence intervals computed as \(1.96\) times the standard error of the mean.

\textbf{Algorithms.} We evaluate the following algorithms: (1) \textbf{Fair-ETC (Ours):} the Explore-Then-Commit strategy described in Algorithm \ref{alg:nsw_etc}, which optimizes the NSW; (2) \textbf{Fair-$\epsilon$-Greedy (Ours):} the $\epsilon$-greedy exploration strategy described in Algorithm \ref{alg:epsilon_greedy}, which also optimizes NSW; (3) \textbf{Utilitarian-ETC \& Utilitarian-$\epsilon$-Greedy (Baselines):} variants of the above algorithms that, instead of maximizing the product of utilities (NSW), maximize the \textit{sum} of utilities (Utilitarian Welfare): $f(\pi, s) = \sum_{d=1}^D \mathbb{E}[s_d(\pi)]$. These represent standard bandit approaches that ignore fairness; (4) \textbf{Uniform-Over-Users:} a baseline that identifies the Condorcet winner for each user, then at each time step uniformly chooses a user, then plays their winner. Each winner $w$ is played with probability given by $\frac{1}{D} \sum_{d=1}^D \mathbb{I}(a^*_d = w)$. Additionally, we use the Frank-Wolfe algorithm to solve the $\log$-concave NSW objective.

In our implementation, we introduce constant scaling factors to the hyperparameters derived in our theoretical analysis to optimize empirical performance. Specifically, we set the multiplicative constant in the \(\Theta(\cdot)\) exploration-length specification to \(0.25\) in all experiments, so that the implemented value is \(L=0.25\,\widetilde L\), where \(\widetilde L\) denotes the quantity inside the \(\Theta(\cdot)\) expression in Theorem 4.2. For Fair-$\epsilon$-Greedy, the exploration probability $\epsilon_t$ is scaled by a factor of $0.1$. Additionally, the confidence parameter $\hat{\Delta}$ used in the Condorcet winner identification phase is set to $0.0025$. 

Scaling theoretically derived hyperparameters to improve empirical performance is a standard and well-justified practice in the bandit literature. Theoretical bounds are often conservatively large to account for worst-case scenarios, making empirical tuning necessary to achieve optimal payoffs in practice. This approach aligns with established methodologies across the field, such as tuning the exploration parameter in LinUCB \citep{li2010contextual}, calibrating exploration rates for $\epsilon$-greedy and Boltzmann strategies \citep{kuleshov2014algorithms}, and applying grid search for algorithm parameters in both neural and contextual bandits \citep{zhou2020neural, bietti2021contextual}. Thus, introducing these constant scaling factors bridges the gap between our worst-case theoretical guarantees and practical empirical efficiency.

\begin{figure*}[tb!]
    \centering
    \begin{subfigure}[b]{0.32\textwidth}
        \includegraphics[width=\textwidth]{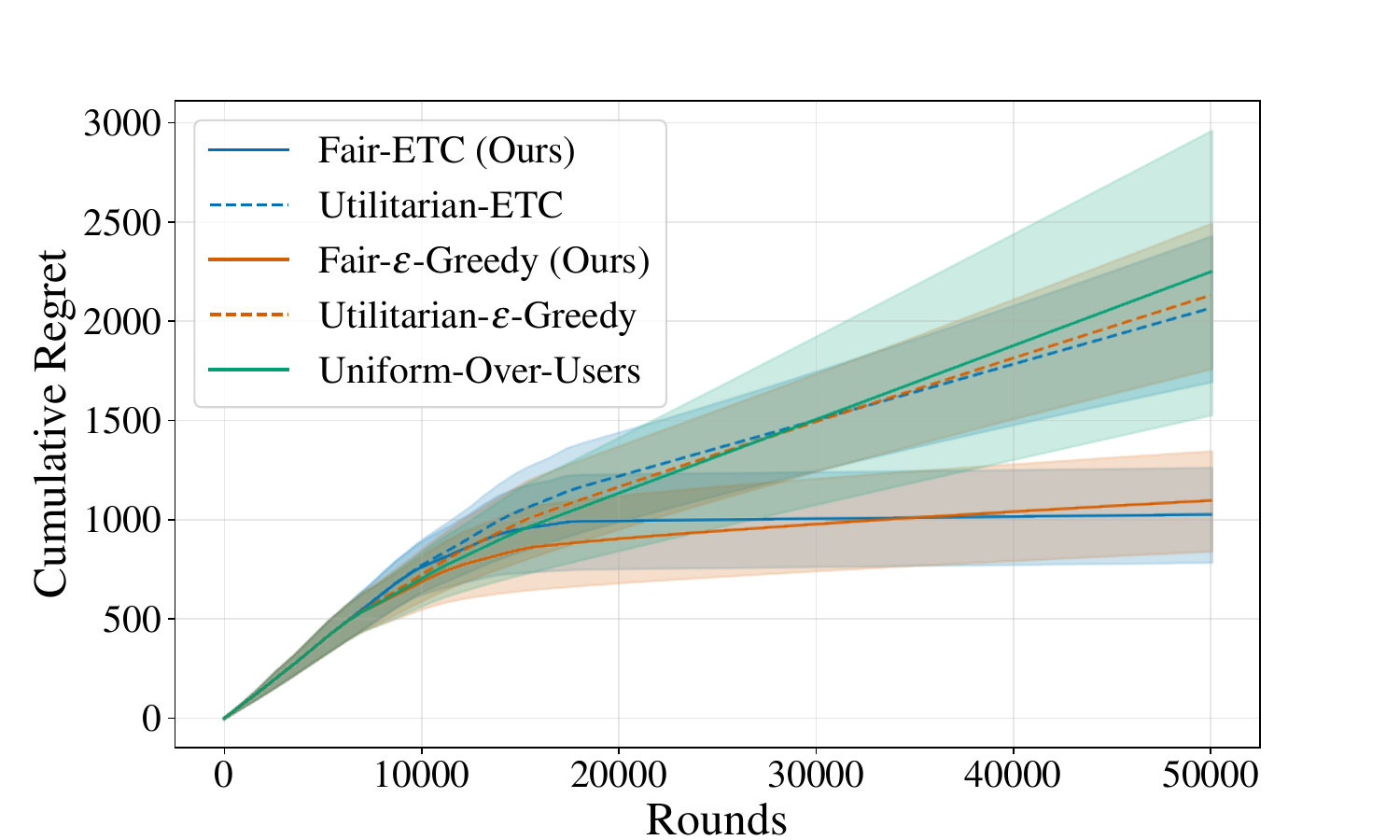}
        \caption{$D=5, K=5$}
    \end{subfigure}
    \begin{subfigure}[b]{0.32\textwidth}
        \includegraphics[width=\textwidth]{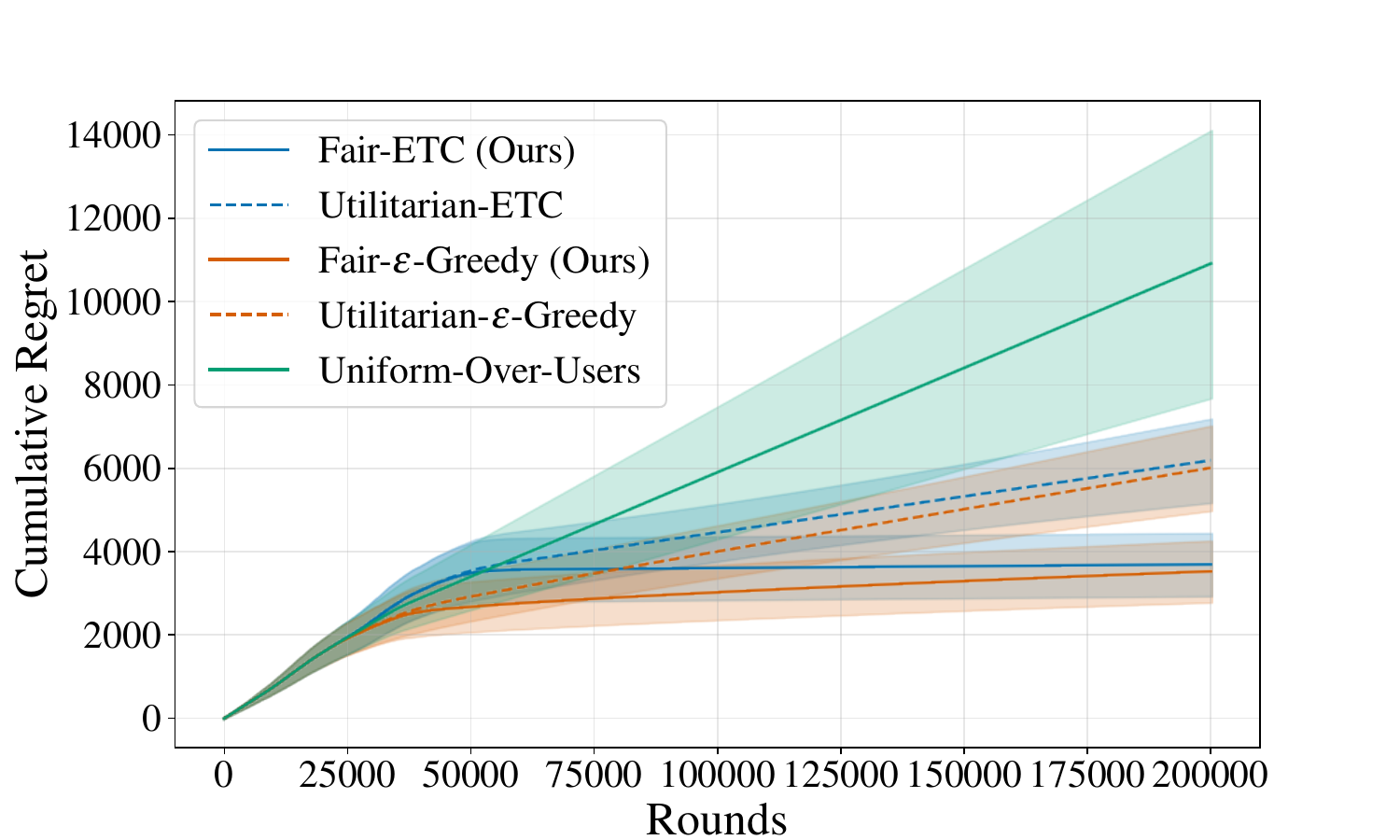}
        \caption{$D=5, K=10$}
    \end{subfigure}
    \begin{subfigure}[b]{0.32\textwidth}
        \includegraphics[width=\textwidth]{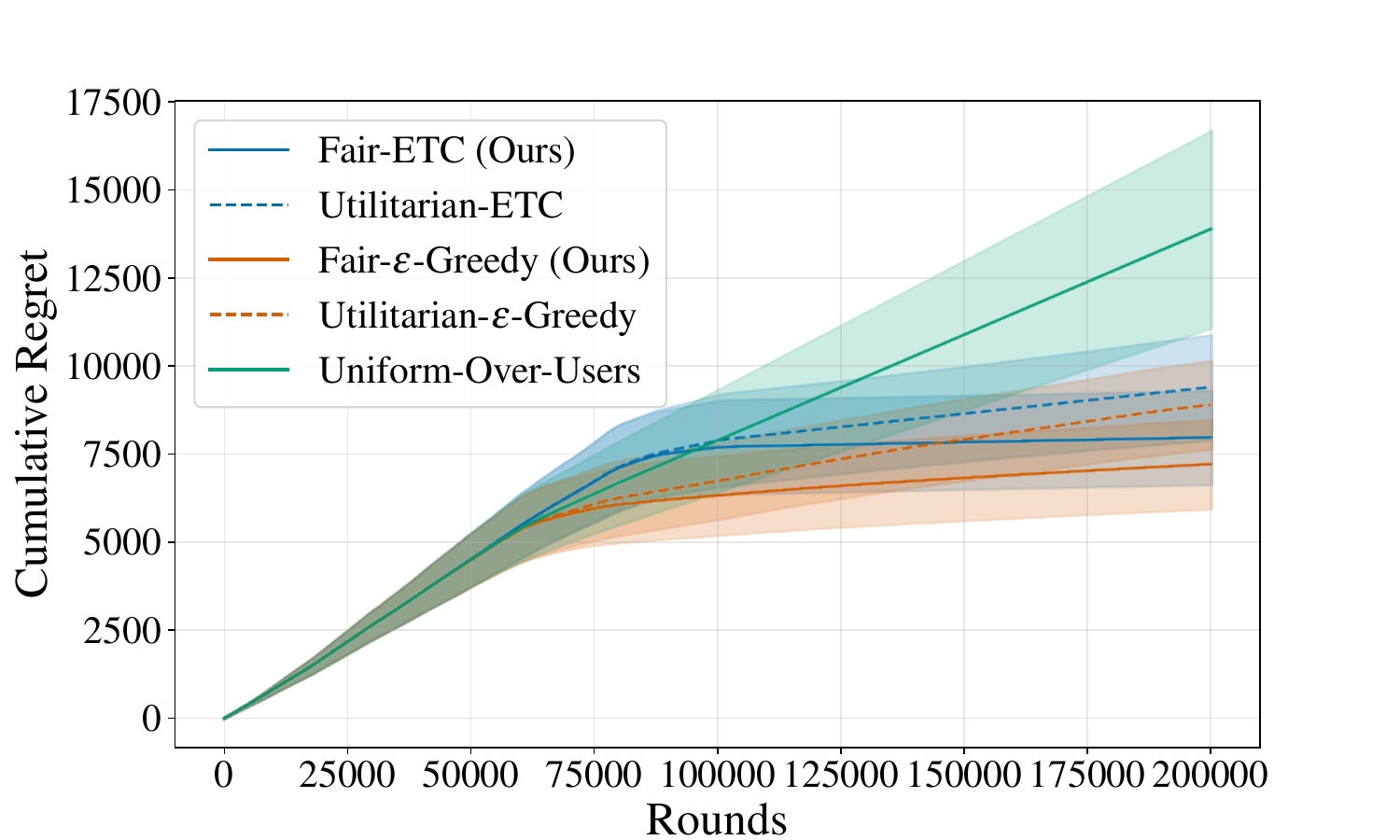}
        \caption{$D=5, K=20$}
    \end{subfigure}
    
    \begin{subfigure}[b]{0.32\textwidth}
        \includegraphics[width=\textwidth]{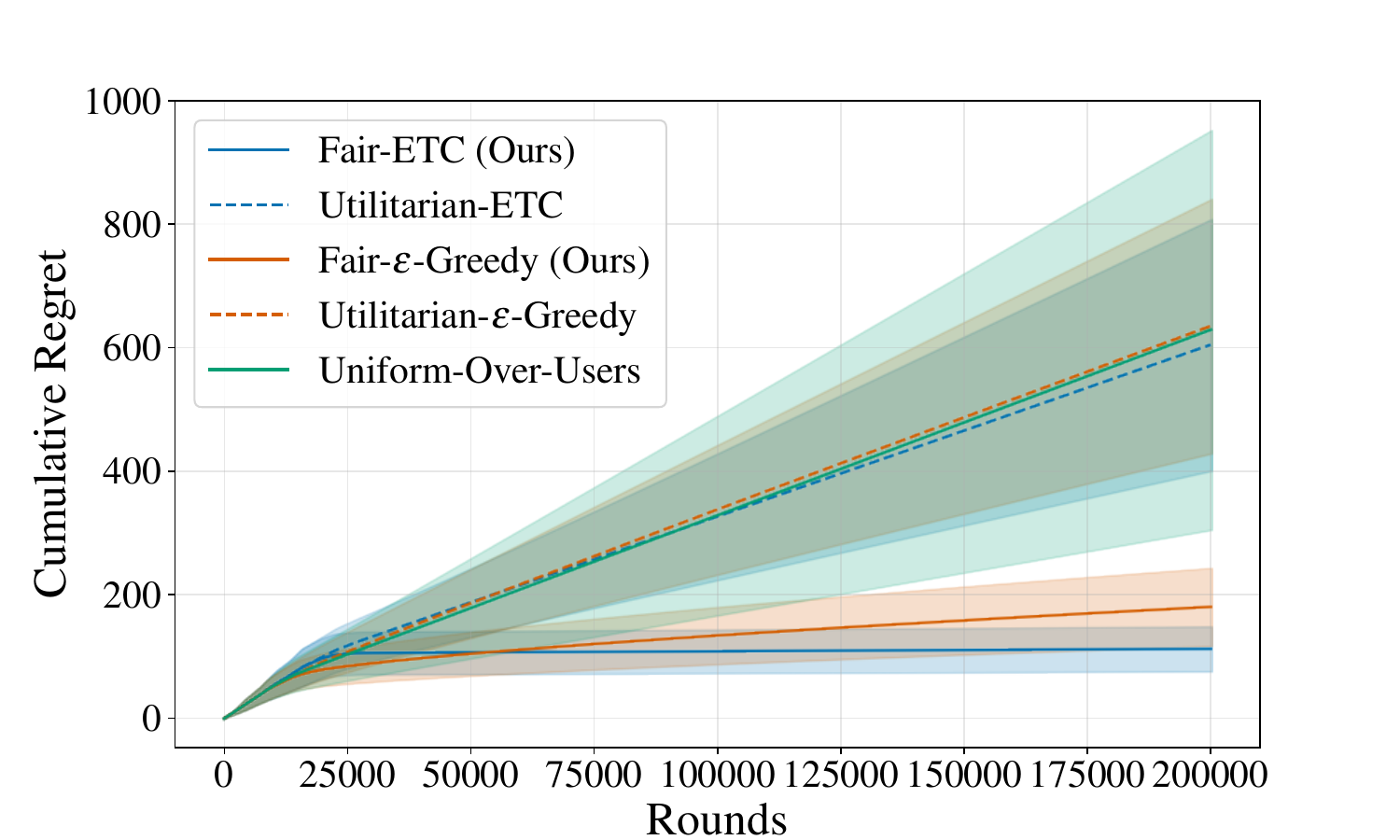}
        \caption{$D=10, K=5$}
    \end{subfigure}
    \begin{subfigure}[b]{0.32\textwidth}
        \includegraphics[width=\textwidth]{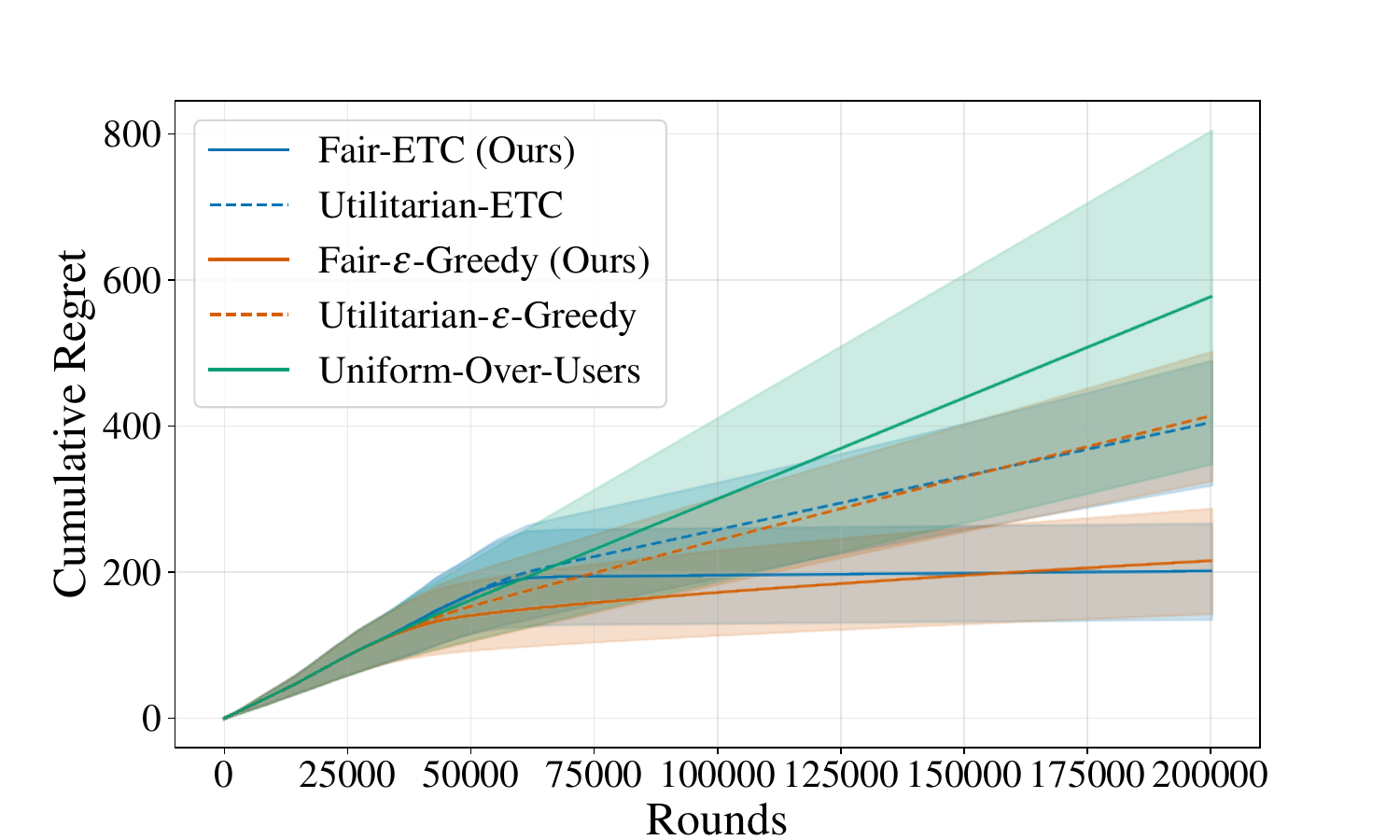}
        \caption{$D=10, K=10$}
    \end{subfigure}
    \begin{subfigure}[b]{0.32\textwidth}
        \includegraphics[width=\textwidth]{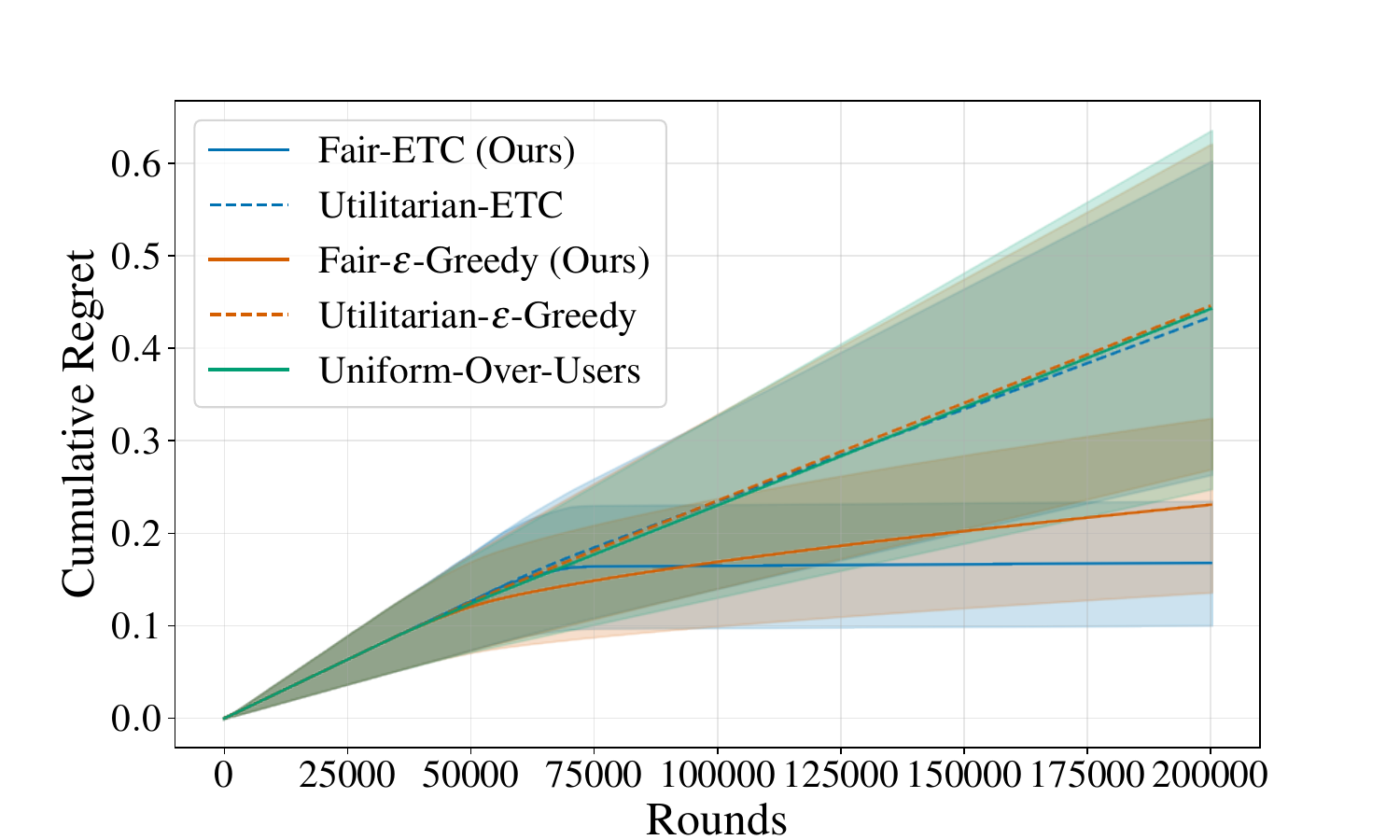}
        \caption{$D=20, K=10$}
    \end{subfigure}
    
    \caption{Cumulative NSW Regret across 6 different problem settings varying in Users ($D$), Arms ($K$), and Horizon ($T$). The minimum preference gap is fixed at $\Delta=0.1$.}
    \label{fig:regret_grid}
\end{figure*}

\textbf{Metrics.} We compute the utility of each user as the cumulative scores achieved up to round $t$, i.e., $\hat{u}_d(t) = \sum_{\tau=1}^{t} 0.5\times(s_d(i_\tau) + s_d(j_\tau))$. We assess performance using five key metrics:
(1) \textbf{Cumulative Regret:} the cumulative difference between the optimal NSW and the NSW of the played policy, as defined in Eq. \ref{eq:regret};
(2) \textbf{Nash Social Welfare:} the geometric mean of user utilities at time T, which is defined as $\prod_{d=1}^D \hat{u}_d(T)^\frac{1}{D}$;
(3) \textbf{Minimum User Welfare:} the utility of the worst-off user ($\min_d \hat{u}_d(T)$). Higher values indicate better protection for minority preferences;
(4) \textbf{Utilitarian Welfare:}  the arithmetic mean of user utilities at time T, which is defined as $\sum_{d=1}^D \frac{1}{D}\hat{u}_d(T)$; (5) \textbf{Gini Coefficient:} a measure of inequality among user utilities where $0$ represents perfect equality and $1$ represents maximal inequality. The Gini coefficient is defined as $G = \frac{\sum_{i=1}^D \sum_{j=1}^D |\hat{u}_i(T) - \hat{u}_j(T)|}{2D \sum_{k=1}^D \hat{u}_k(T)} $.

Note that the NSW appearing in our regret analysis is the product $\prod_d u_d(\pi)$ of expected scores under a single fixed policy $\pi$ and is the one used to compute the regret, whereas the Nash Social Welfare reported here is an additional metric computed from the cumulative realized scores $\hat u_d(T)$ over the full trajectory of played policies. We use this quantity only as an auxiliary fairness metric and report its $D$-th root (geometric mean) purely for an interpretable, $D$-independent scale, which does not change the ordering over policies.

We solve $\hat\pi \in \arg\max_{\pi \in \Delta^K} \sum_{d=1}^D \log\max(\hat u_d(\pi), \zeta))$ via Frank-Wolfe for $\zeta = 10^{-12}$, initialized at the uniform policy, using the standard diminishing step size $\gamma_t = \frac{2}{t+2}$ and the linear maximization oracle over the simplex. We run for a maximum of $T_{\mathrm{FW}} = 300$ iterations, stopping early once the objective value changes by less than $\epsilon_{\mathrm{opt}} = 2\times 10^{-4}$ between consecutive iterations.

\begin{figure*}[ht!]
    \centering
    \begin{subfigure}[b]{0.34\textwidth}
        \includegraphics[width=\textwidth]{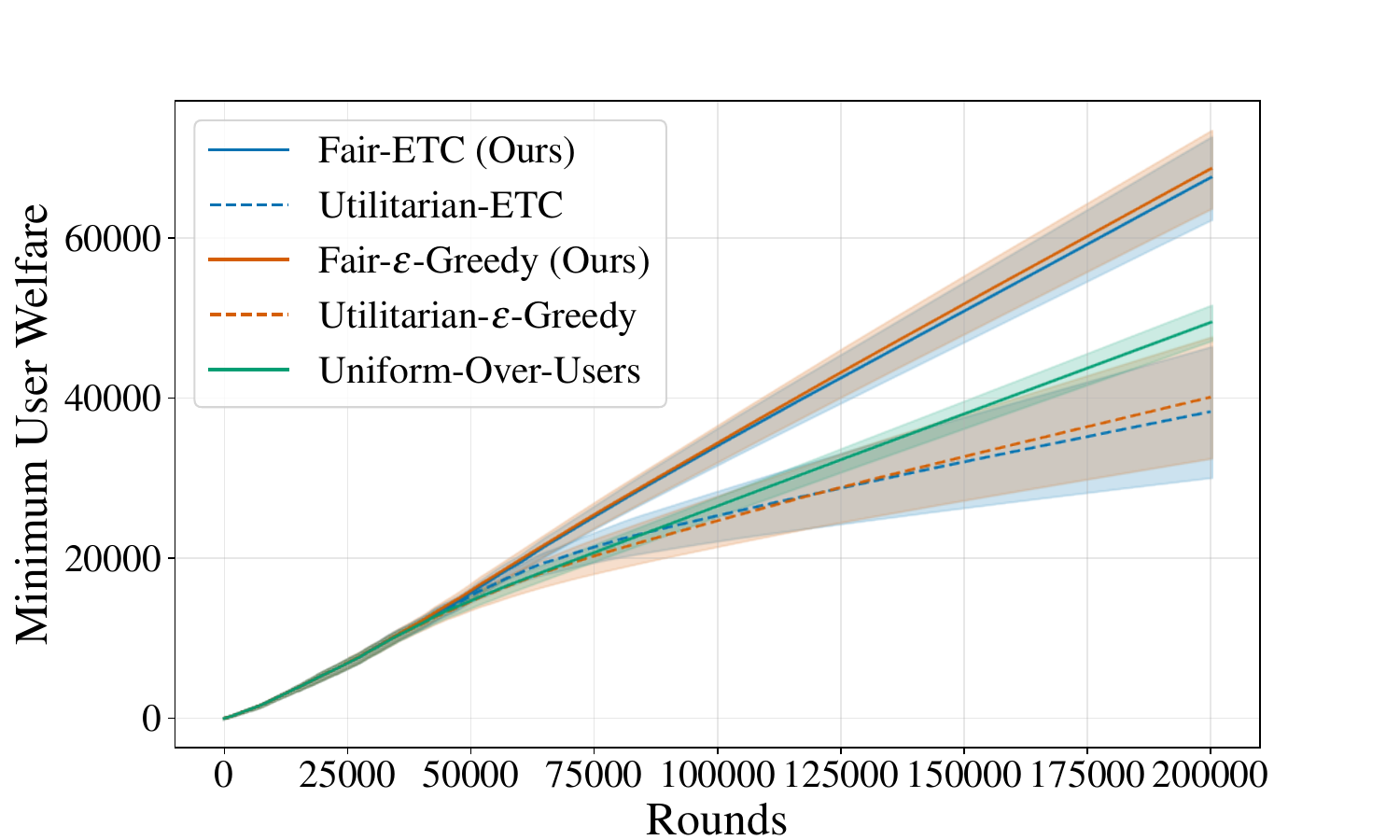}
        \caption{Min Welfare ($\rho=0.5$)}
    \end{subfigure}
    \begin{subfigure}[b]{0.34\textwidth}
        \includegraphics[width=\textwidth]{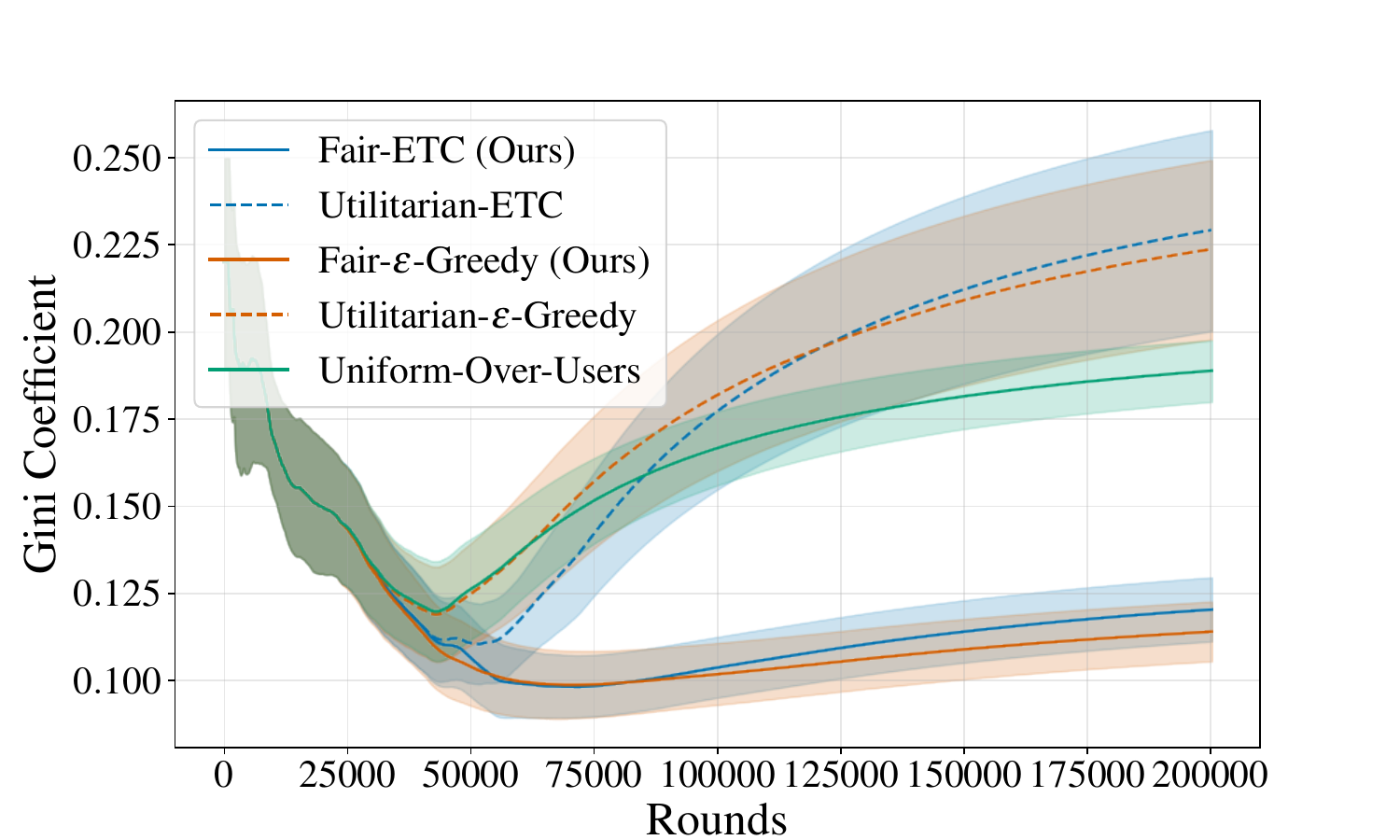}
        \caption{Gini Coefficient ($\rho=0.5$)}
    \end{subfigure}
    
    \begin{subfigure}[b]{0.34\textwidth}
        \includegraphics[width=\textwidth]{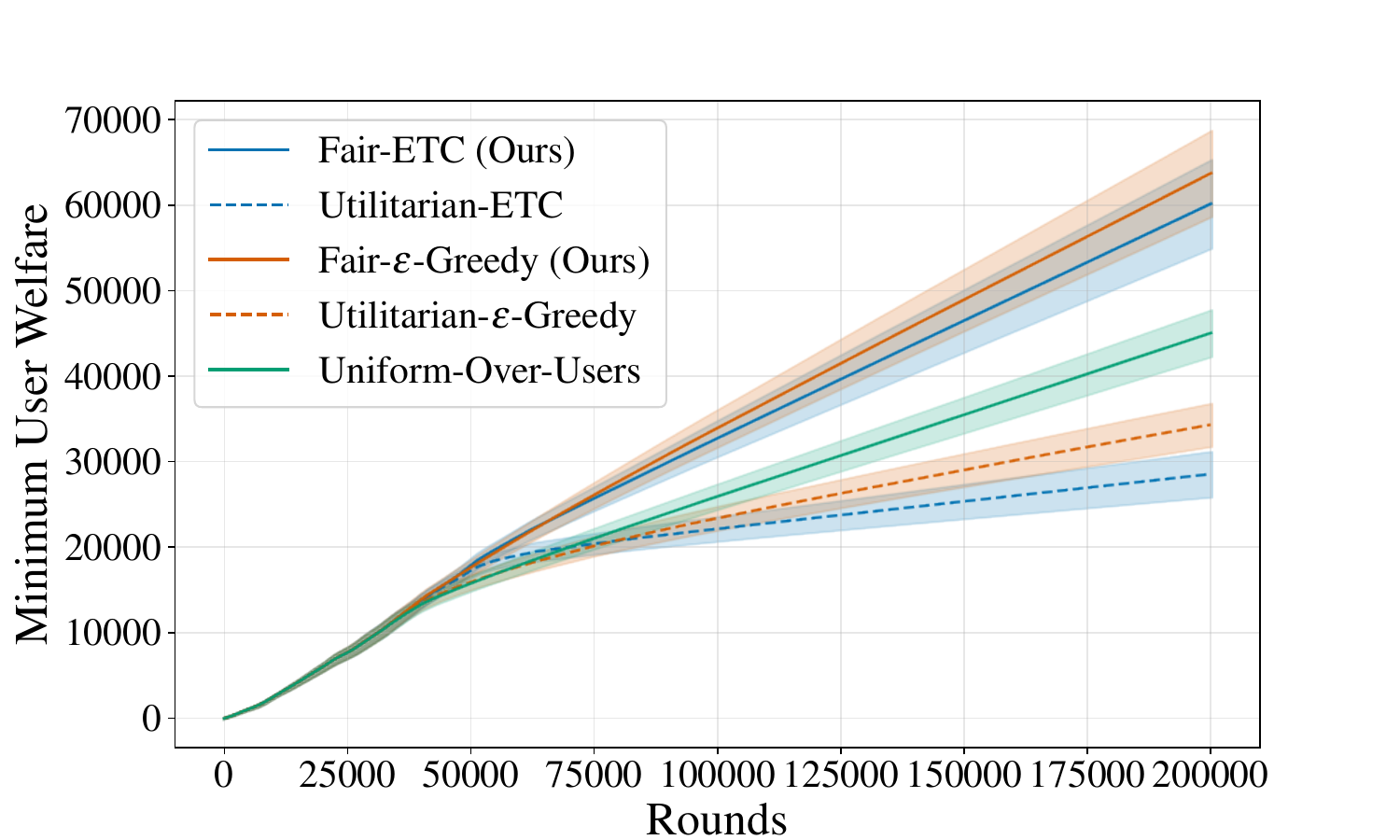}
        \caption{Min Welfare ($\rho=0.7$)}
    \end{subfigure}
    \begin{subfigure}[b]{0.34\textwidth}
        \includegraphics[width=\textwidth]{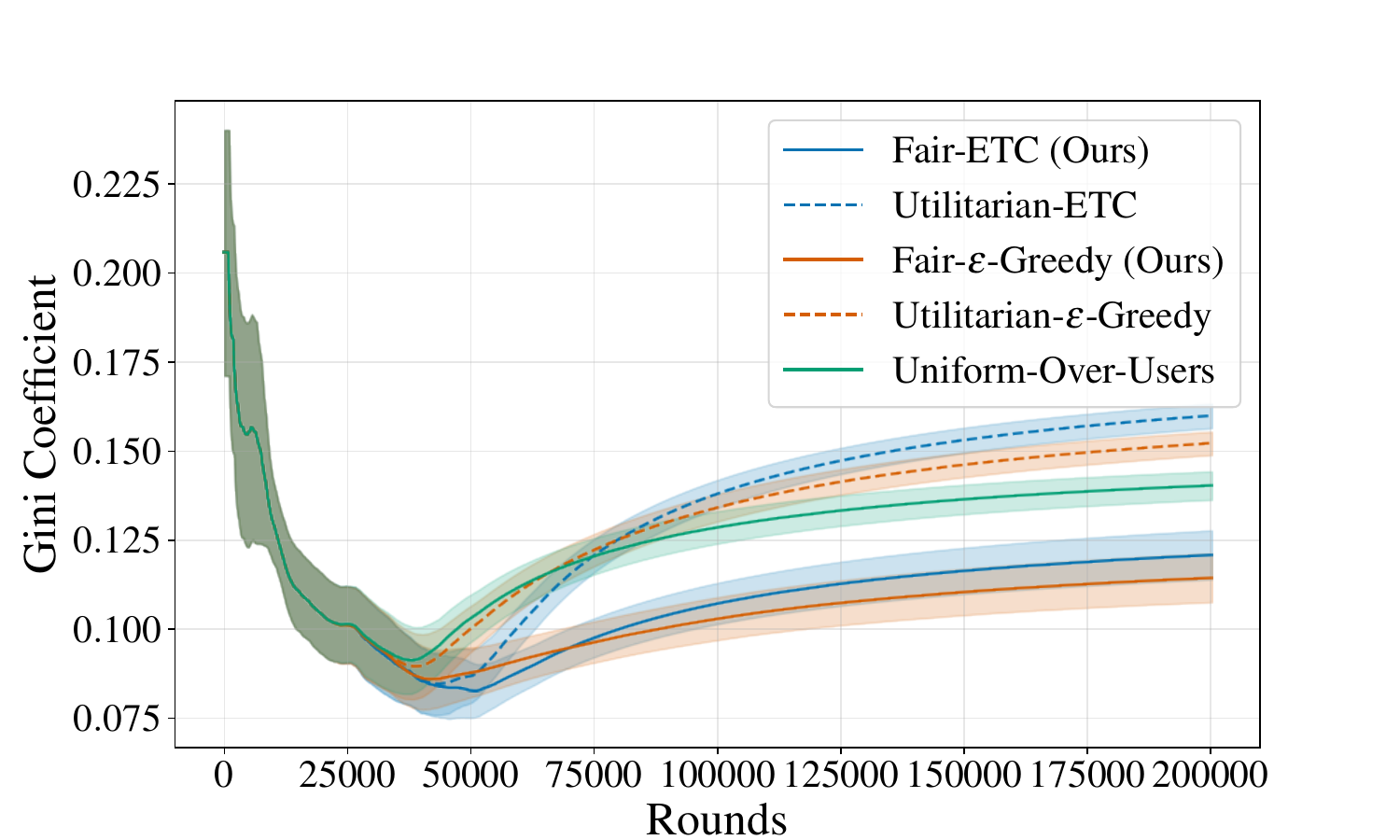}
        \caption{Gini Coefficient ($\rho=0.7$)}
    \end{subfigure}
    
    \caption{Fairness metrics under clustered user preferences ($D=10, K=10$). We vary the clustering fraction $\rho$ (majority size). The utilitarian baseline exhibits high inequality (high Gini, low Min Welfare) as the majority cluster grows, whereas fair algorithms remain robust.}
    \label{fig:clustered_grid}
\end{figure*}

\subsection{Results: Randomly Generated Preferences}
\label{subsec:results_random}

In this section, we investigate the performance of our algorithms in environments where user preferences are generated uniformly at random (subject to the Condorcet constraints). We focus on evaluating the cumulative NSW regret across a diverse set of problem configurations defined by the tuple $(D, K, T)$, fixing the minimum preference gap at $\Delta = 0.1$.

Figure \ref{fig:regret_grid} displays the cumulative regret for six distinct scenarios ranging from small-scale settings ($D=5, K=5$) to larger environments ($D=20, K=10$). Across all settings, \textbf{Fair-ETC} and \textbf{Fair-$\epsilon$-Greedy} demonstrate sublinear regret, consistently outperforming the utilitarian and uniform-over-users baselines. 

We further analyze the quality of the learned policies in Table \ref{tab:metrics_random}, which compares the algorithms across five distinct metrics: cumulative regret, minimum user welfare, Nash Social Welfare, Gini coefficient, and utilitarian welfare. We observe a clear trade-off between aggregate efficiency and fairness. For each metric, we report only one utilitarian algorithm, which is the best performing for this metric.

The Utilitarian algorithms achieve high utilitarian welfare, as expected, but this comes at the cost of higher inequality, evidenced by high Gini coefficients, lower minimum user welfare, and lower NSW. In contrast, our proposed algorithms, \textbf{Fair-ETC} and \textbf{Fair-$\epsilon$-Greedy}, achieve the highest Nash Social Welfare. In terms of the other fairness metrics, specifically minimum user welfare and the Gini coefficient, our algorithms perform better than the utilitarian approaches and remain close to the Uniform-Over-Users policy, which is a naturally balanced (though inefficient) baseline. This demonstrates that the Nash Social Welfare objective successfully optimizes for efficiency while maintaining fairness levels comparable to an unbiased random strategy.

\subsection{Results: Clustered Preferences}

\label{subsec:results_clustered}

In this section, we examine the robustness of our algorithms in a polarized setting where users are divided into majority and minority clusters. We simulate a scenario with $D=10$ users and $K=10$ arms over a horizon of $T=200,000$. A fraction of users (controlled by the parameter $\rho \in \{0.5, 0.7\}$) share a single Condorcet winner $w$, forming a majority, while the remaining users have randomly assigned preferences with $s_d(w) < 1$ for users outside of the majority.

Figure \ref{fig:clustered_grid} illustrates the impact of this clustering on fairness metrics. The top and bottom rows display the fairness metrics for $\rho = 0.5$ and $\rho=0.7$ respectively. The utilitarian algorithms increasingly favor the majority's preference at the expense of the minority, leading to lower minimum welfare and higher inequality. In contrast, Fair-ETC and Fair-$\epsilon$-Greedy maintain high minimum welfare and low Gini coefficients, successfully balancing the interests of users and outperforming the Uniform-Over-Users policy.

\subsection{Results: Sushi Dataset}

\begin{figure*}[ht!]
    \centering
    \begin{subfigure}[b]{0.32\textwidth}
        \includegraphics[width=\textwidth]{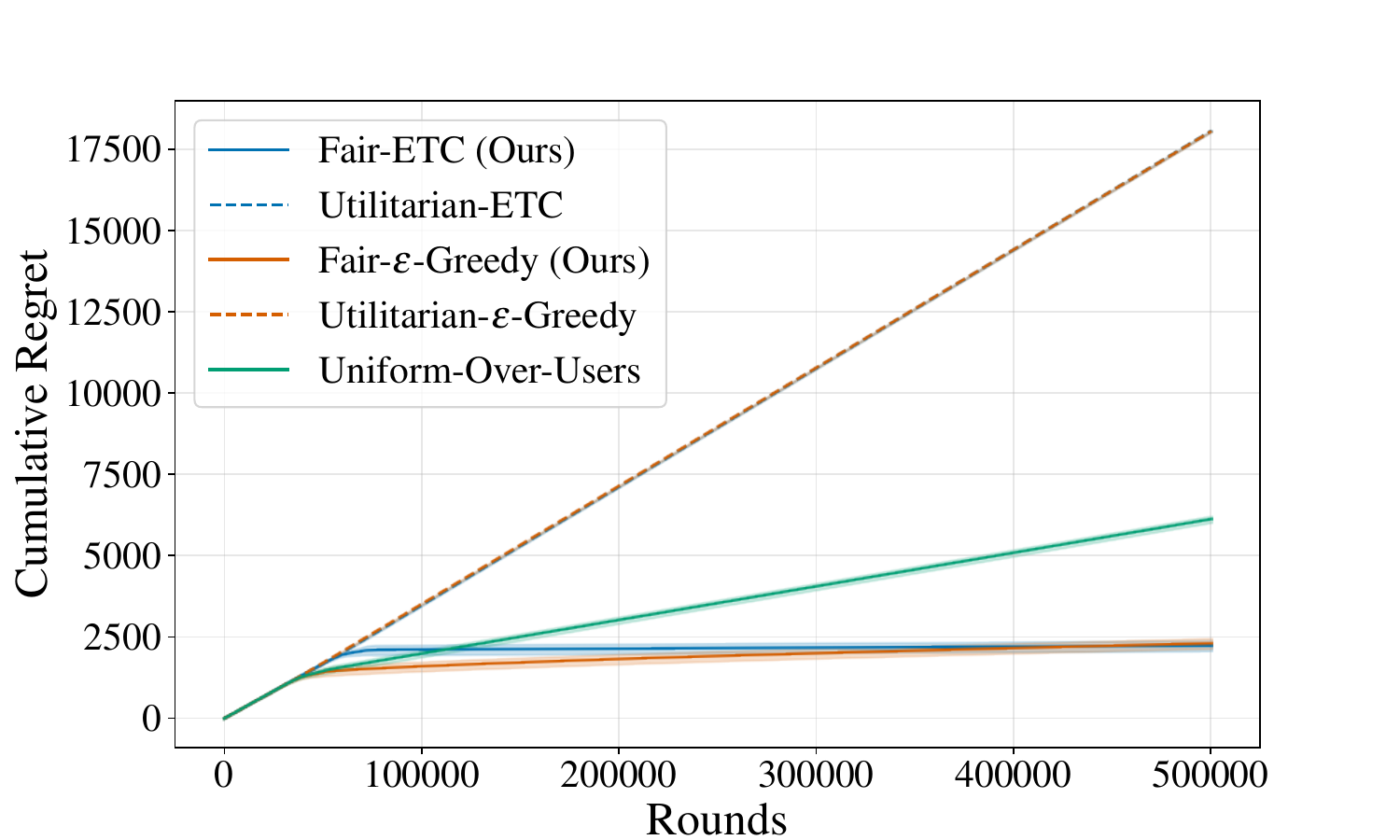} 
        \caption{Cumulative Regret}
    \end{subfigure}
    \begin{subfigure}[b]{0.32\textwidth}
        \includegraphics[width=\textwidth]{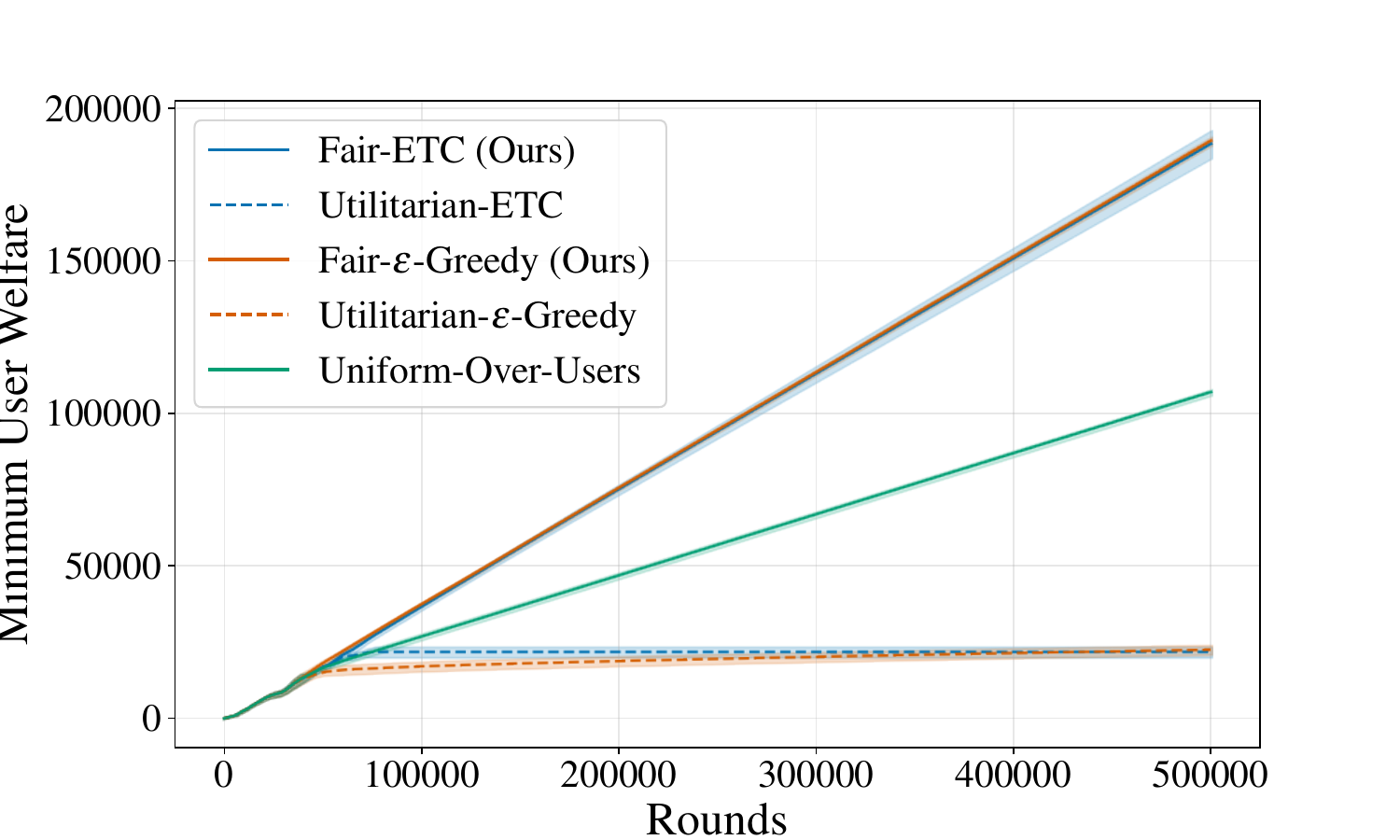}
        \caption{Min Welfare}
    \end{subfigure}
    \begin{subfigure}[b]{0.32\textwidth}
        \includegraphics[width=\textwidth]{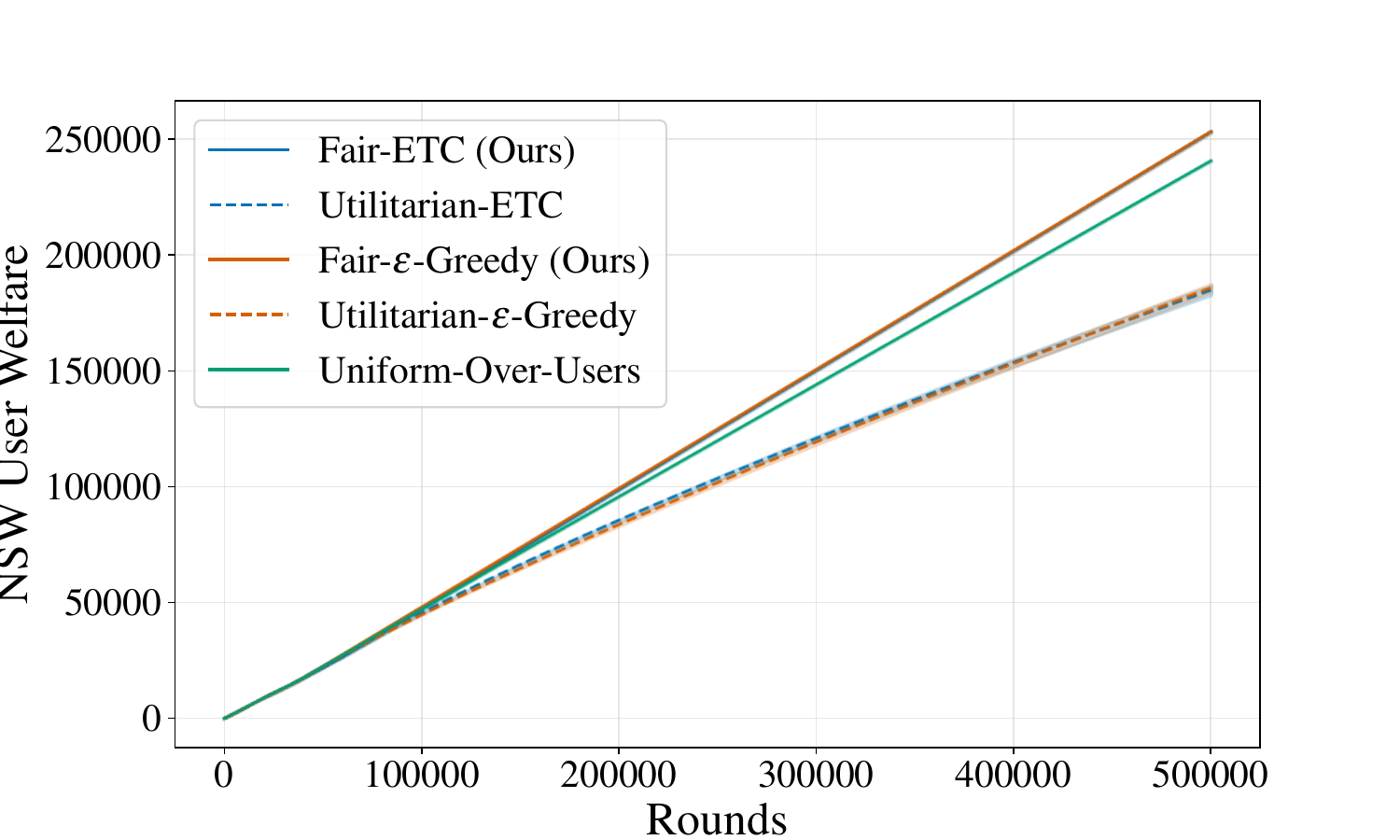} 
        \caption{Nash Social Welfare}
    \end{subfigure}
    \hfill
    \begin{subfigure}[b]{0.32\textwidth}
        \includegraphics[width=\textwidth]{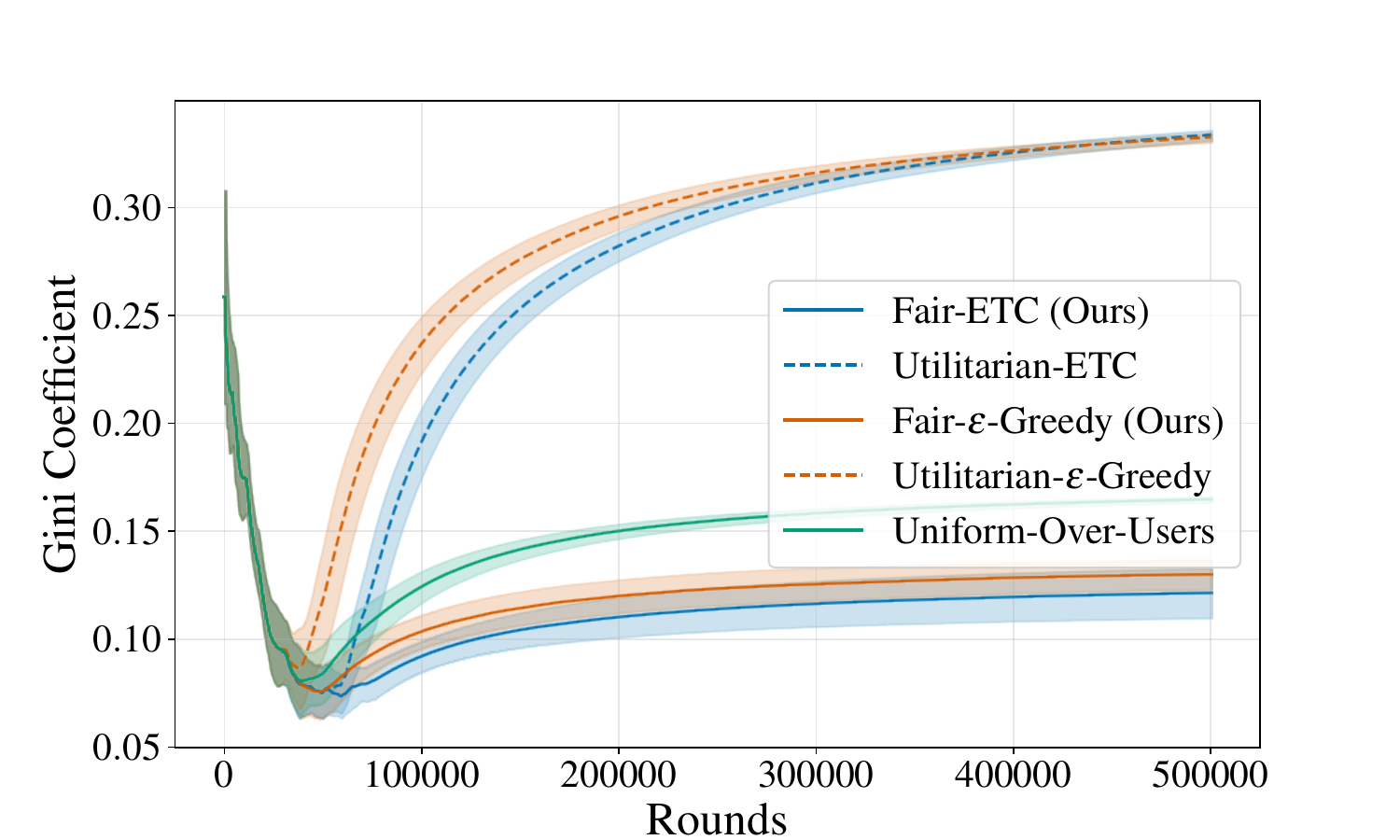}
        \caption{Gini Coefficient}
    \end{subfigure}
    \begin{subfigure}[b]{0.32\textwidth}
        \includegraphics[width=\textwidth]{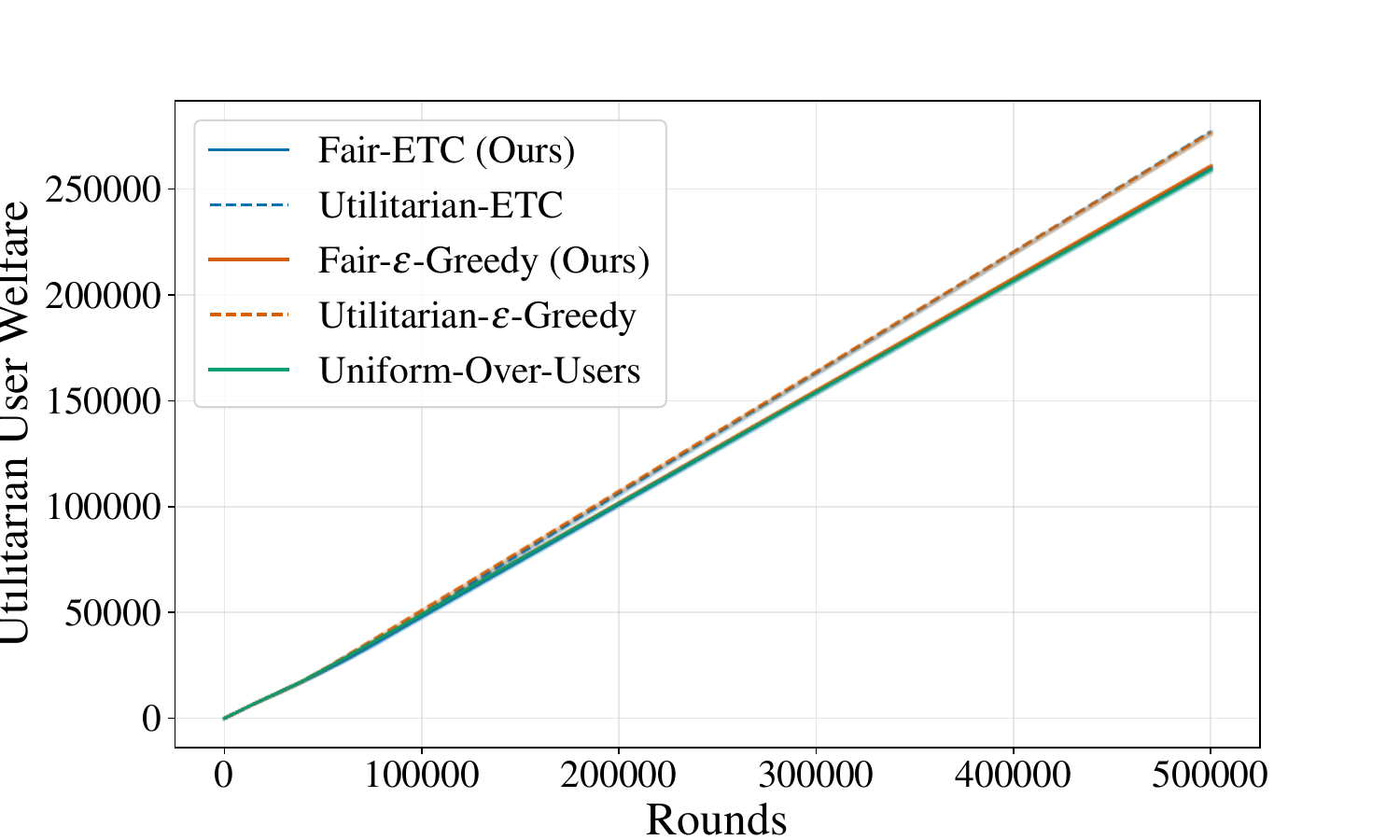} 
        \caption{Utilitarian Welfare}
    \end{subfigure}
    \hfill
    
    \begin{subfigure}[b]{0.32\textwidth}
        \includegraphics[width=\textwidth]{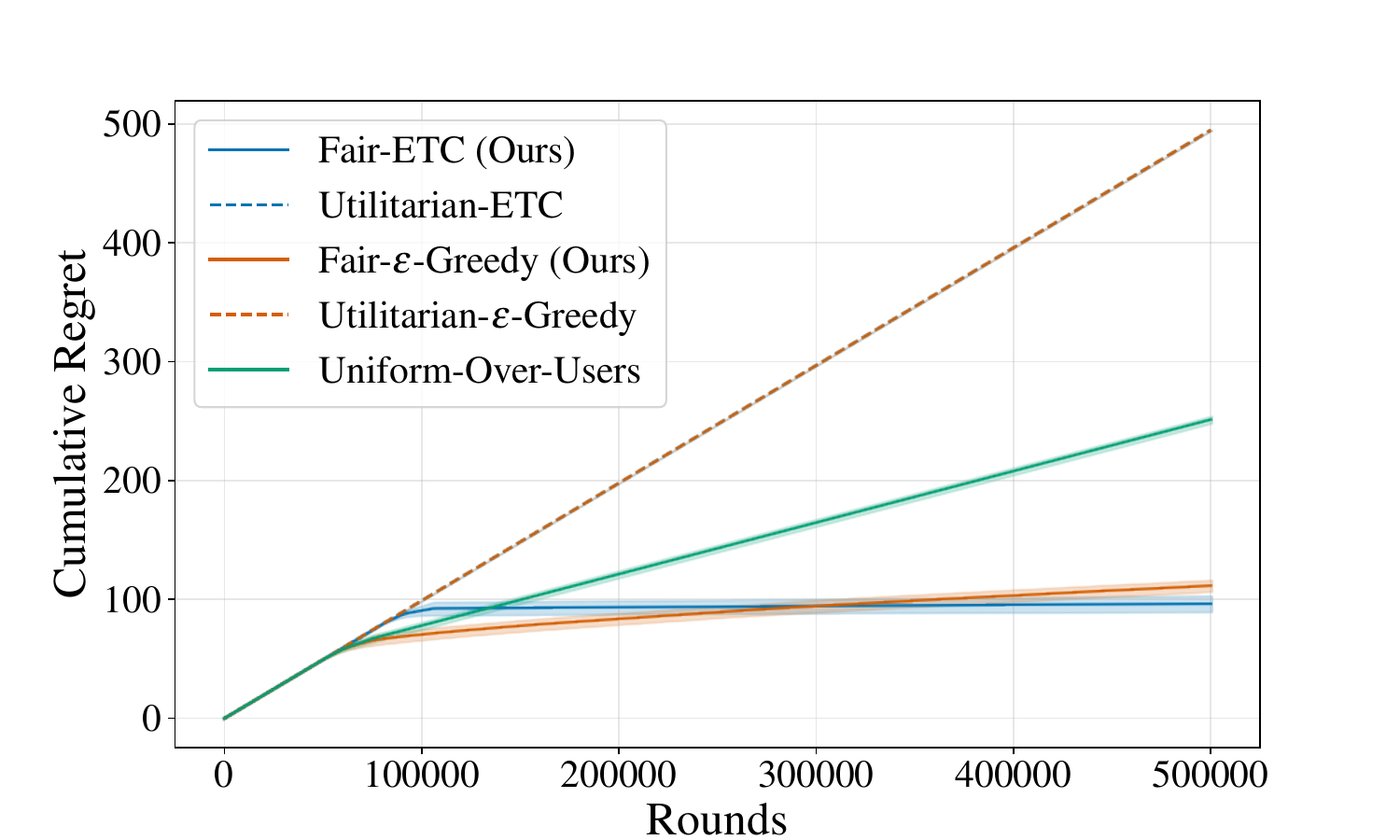} 
        \caption{Cumulative Regret}
    \end{subfigure}
    \begin{subfigure}[b]{0.32\textwidth}
        \includegraphics[width=\textwidth]{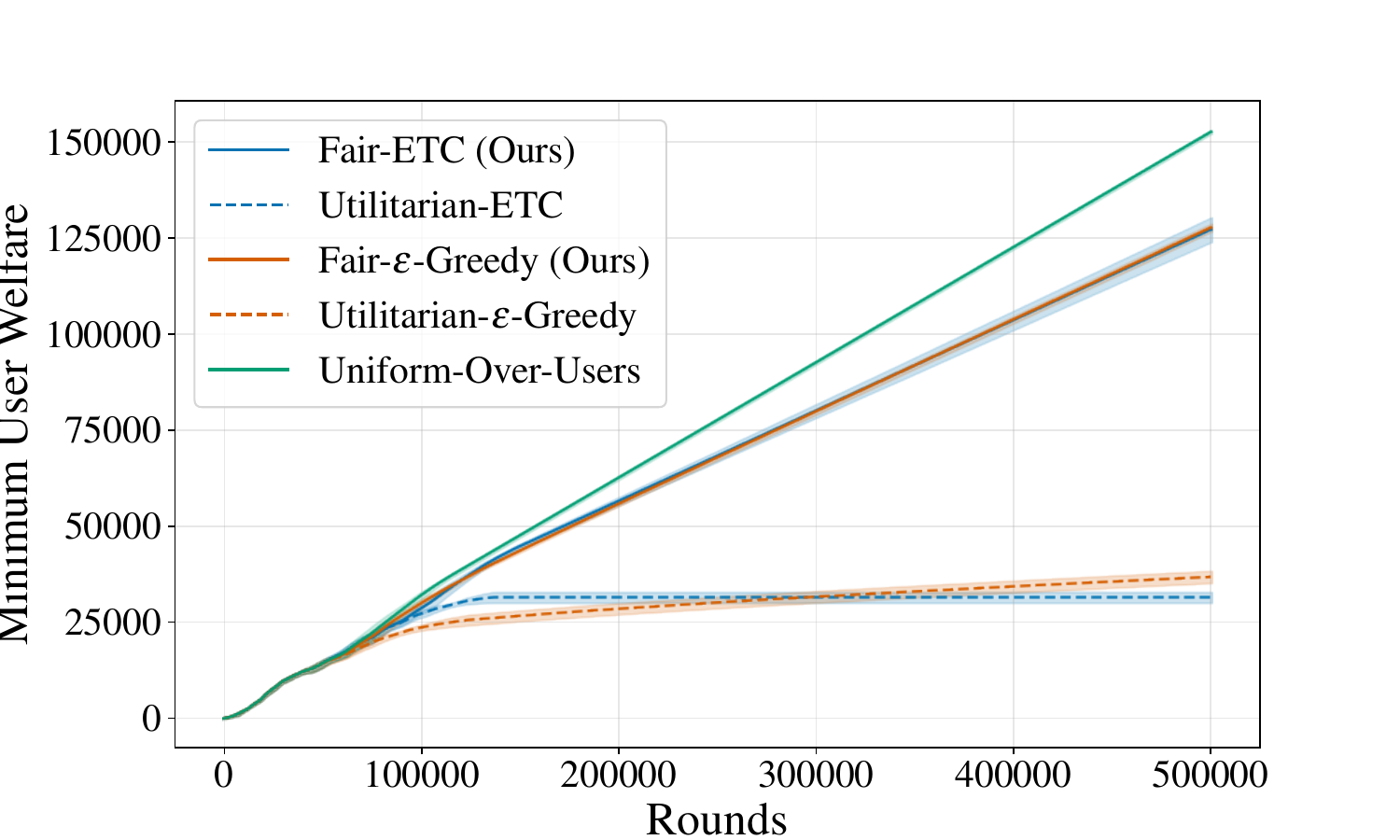}
        \caption{Min Welfare}
    \end{subfigure}
    \begin{subfigure}[b]{0.32\textwidth}
        \includegraphics[width=\textwidth]{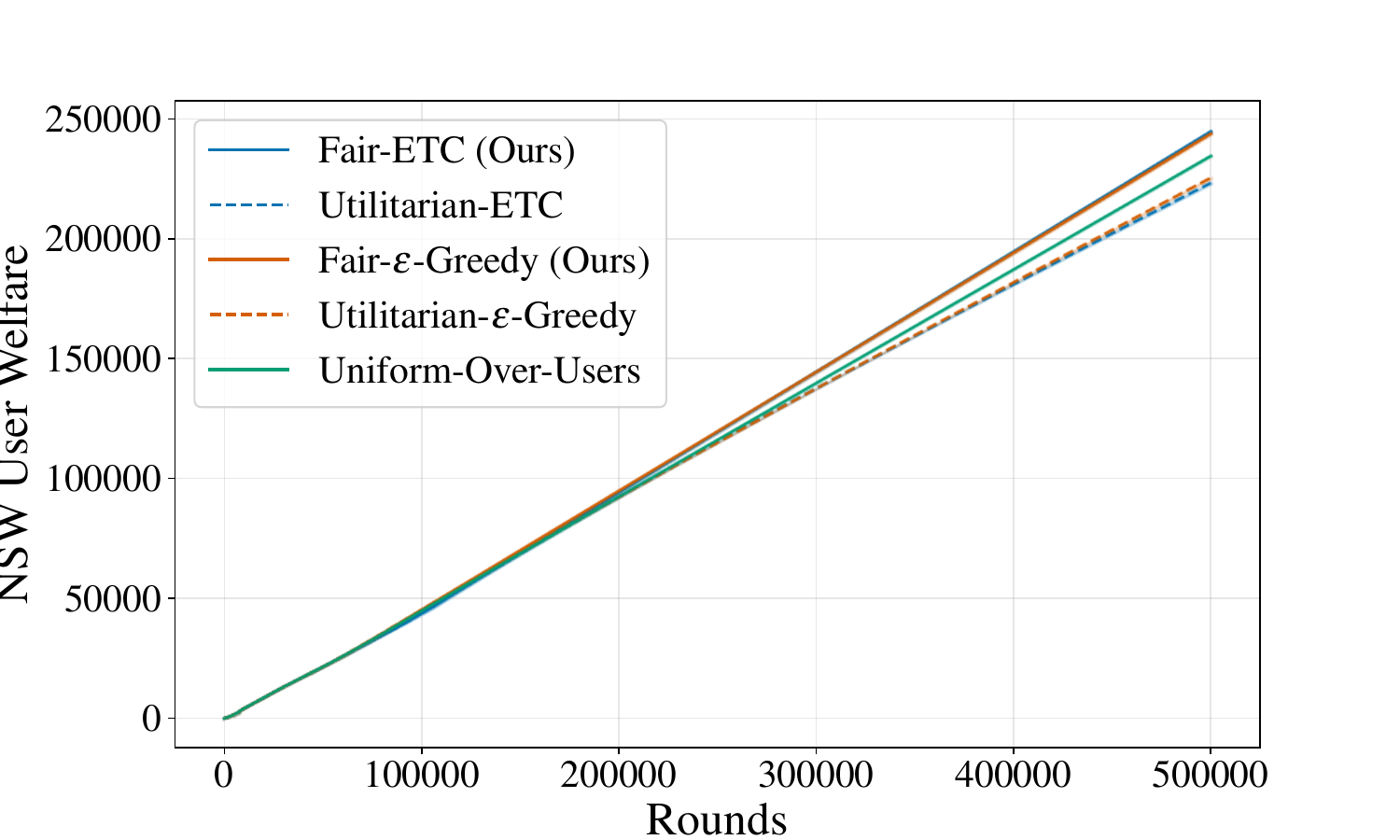} 
        \caption{Nash Social Welfare}
    \end{subfigure}
    \hfill
    \begin{subfigure}[b]{0.32\textwidth}
        \includegraphics[width=\textwidth]{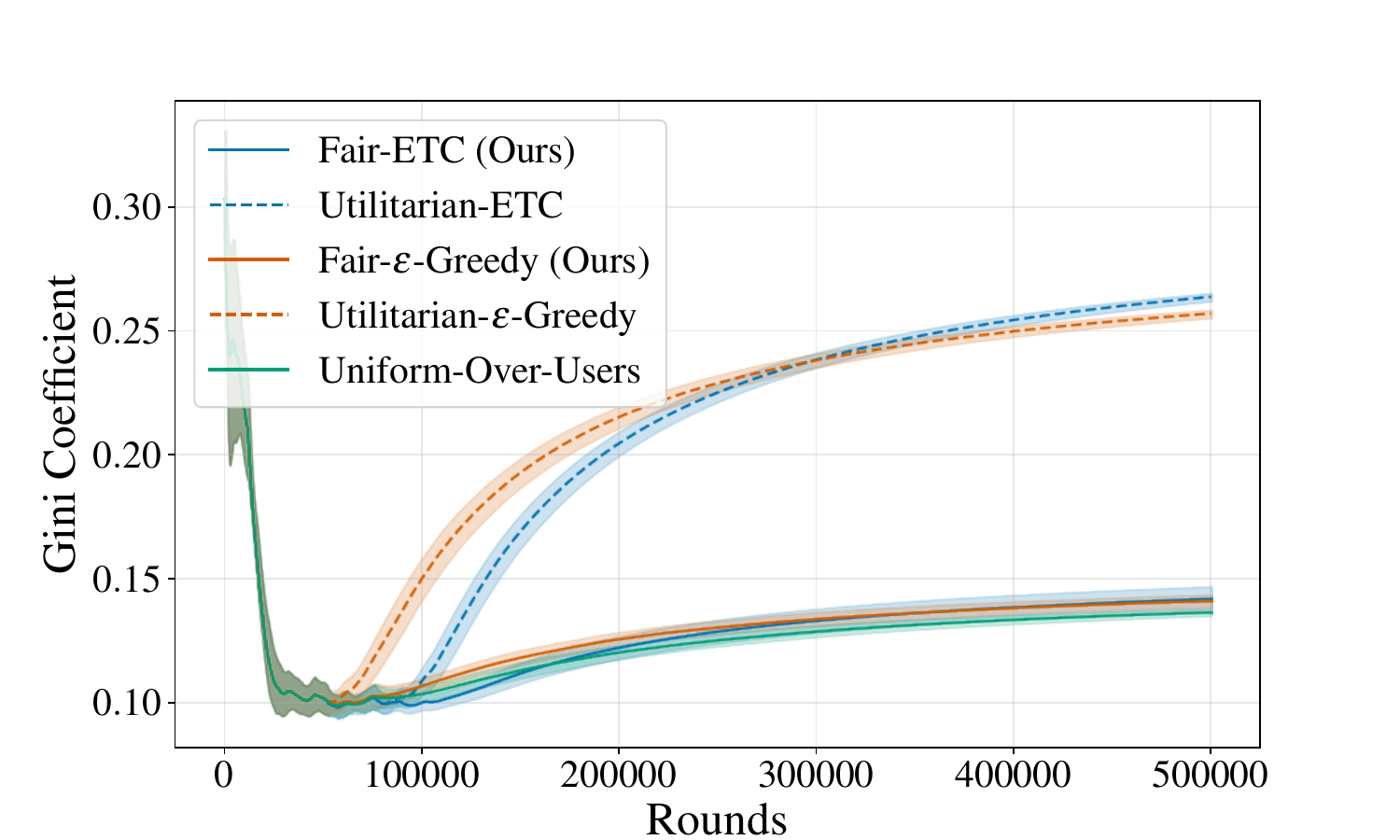} 
        \caption{Gini Coefficient}
    \end{subfigure}
    \begin{subfigure}[b]{0.32\textwidth}
        \includegraphics[width=\textwidth]{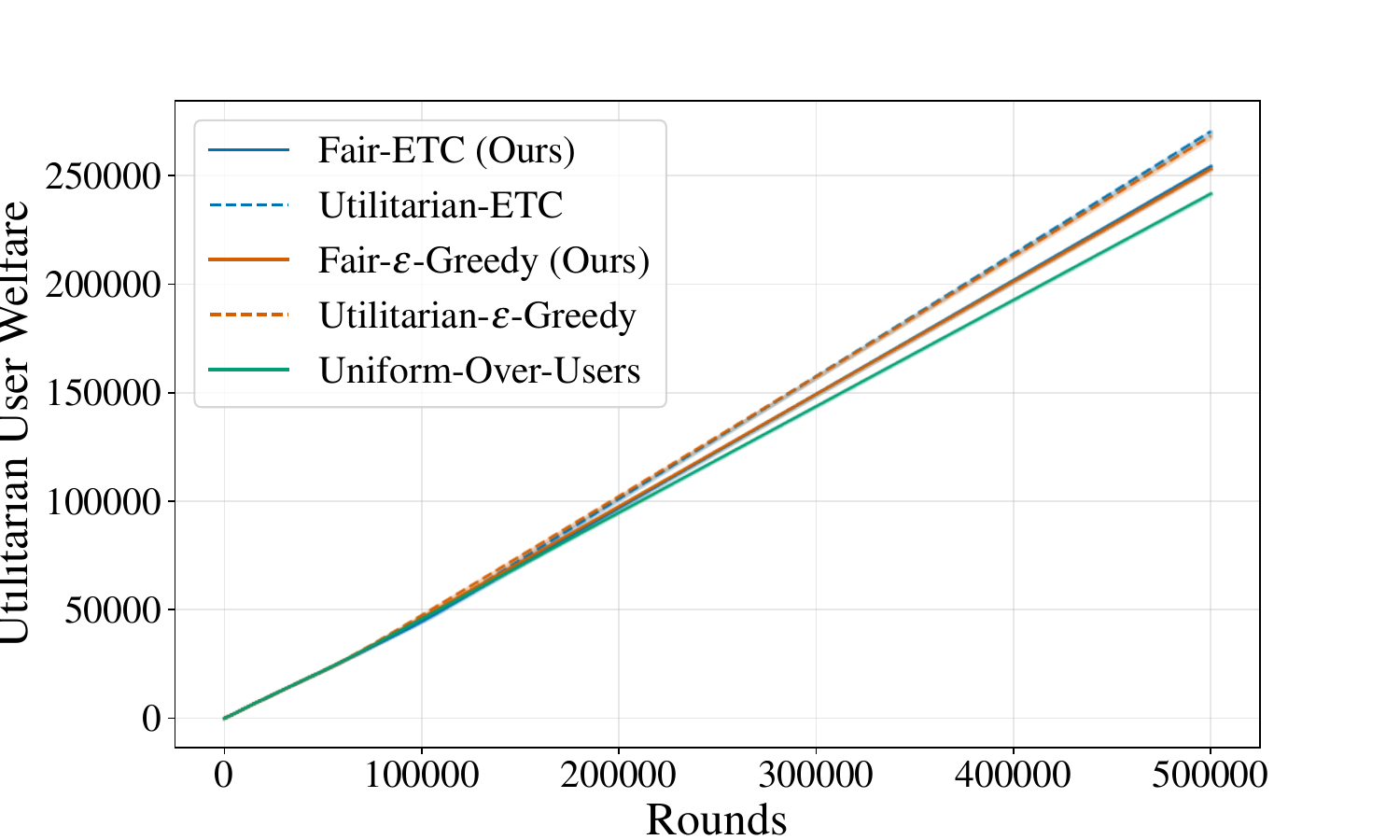} 
        \caption{Utilitarian Welfare}
    \end{subfigure}
    \hfill
    
    \caption{Performance metrics on the Sushi dataset configured with $K=10$ arms. Top two rows: $D=5$ clusters. Bottom two rows: $D=10$ clusters. Our proposed methods achieve the lowest cumulative regret in both settings, while maintaining highly competitive fairness metrics.}
    \label{fig:sushi_results}
\end{figure*}

To evaluate our algorithms on real-world human preferences, we utilize the Sushi dataset \citep{kamishima2003nantonac}, which contains complete rankings of $K=10$ sushi variants from 5,000 individuals. To map these individual rankings into our multi-user dueling bandit framework, we group the individuals into $D$ macroscopic users (clusters) representing distinct preference profiles.

\paragraph{Data Preprocessing.} To measure the similarity between individual tastes, we compute the Spearman correlation $r$ \citep{marden1996analyzing} between all pairs of valid rankings and define the pairwise distance as $(1 - r)/2$. Using this precomputed distance matrix, we apply Agglomerative Clustering with average linkage to partition the dataset into $D \in \{5, 10\}$ clusters \citep{hastie2009elements}. For each cluster $d \in [D]$, the pairwise preference probability $\mathcal{P}_{d,i,j}$ is calculated as the fraction of individuals within that cluster who ranked item $i$ strictly higher than item $j$. 

To ensure the resulting problem instance satisfies our unique Condorcet winner assumption (Assumption \ref{ass:condorcet}) with a statistically distinguishable margin, we enforce a minimum preference gap of $\Delta = 0.1$. Any pairwise probability $\mathcal{P}_{d,i,j}$ falling within the $(0.4, 0.6)$ range is projected to the nearest boundary of $\{0.4, 0.6\}$, while preserving the reciprocal property $\mathcal{P}_{d,i,j} + \mathcal{P}_{d,j,i} = 1$. We confirm that this preprocessing step yields a unique Condorcet winner in every cluster, in both the $D=5$ and $D=10$ configurations, with the enforced minimum preference gap of $\Delta = 0.1$ satisfying Assumption 3.1 throughout, and that no cluster required further adjustment beyond the boundary projection described above.

\paragraph{Results.} We evaluate the algorithms over the $D=5$ and $D=10$ cluster configurations, presenting the cumulative regret, Nash Social Welfare, minimum user welfare, Gini coefficient, and utilitarian welfare in Figure \ref{fig:sushi_results}. Across both configurations, our proposed methods (Fair-ETC and Fair-$\epsilon$-Greedy) achieve the lowest cumulative regret. In the $D=5$ setting, the proposed methods strongly dominate across all fairness metrics, achieving the lowest Gini coefficient and highest minimum user welfare. As the environment becomes more fragmented in the $D=10$ setting, the uniform-over-users baseline achieves slightly better fairness metrics (Min Welfare and Gini coefficient); however, this marginal gain in fairness comes at the cost of vastly inferior cumulative regret. Overall, the proposed algorithms consistently maintain a better balance, effectively minimizing regret while protecting minority welfare.

\section{Conclusion}
\label{sec:conclusion}

In this work, we presented the study of fairness in dueling bandits with heterogeneous user preferences. By adopting the NSW as our objective, we provided a framework for balancing competing interests without marginalizing minority groups. We established a fundamental regret lower bound of $\Omega(T^{2/3}\min(K,D)^\frac{1}{3})$ for this setting and developed two algorithms, Fair-ETC and Fair-$\epsilon$-Greedy. Our extensive empirical evaluation confirms that these fairness-aware algorithms significantly reduce inequality compared to standard utilitarian approaches, as evident by the Gini coefficient and minimum welfare metrics, offering protection to users with distinct preferences while maintaining efficient learning.

Future research avenues include extending this framework to contextual dueling bandits and investigating the impact of continuous action spaces on fairness guarantees. Furthermore, we aim to consider settings where the Condorcet winners assumption does not hold. In such settings, one can potentially resort to utilizing metrics such as Copeland \citep{zoghi2015copeland} and Borda scores \citep{jamieson2015sparse}. Finally, another promising direction is to explore scenarios with sparse feedback, where each played pair of arms is evaluated by only a subset of users.



\clearpage
\bibliography{mybib}
\bibliographystyle{tmlr}

\newpage
\appendix
\section{Proofs}
This appendix presents the detailed proofs for the theoretical results established in the main paper.
\subsection{Proofs for Section \ref{sec:lb}}
\paragraph{Proof of Theorem \ref{thm:lb_cond}}
\begin{proof}
    We construct a set of problem instances and show that no algorithm can yield lower regret. We start by designing instance $\mathcal{I}^0$ and constructing $\frac{\min{(K,D)}}{2}$ perturbations and analyze the average regret across the $\frac{\min{(K,D)}}{2}$ instances.

    Assume \(K, D \ge 4\) and even (the construction can be adapted to odd values).
    
    \paragraph{Condorcet winners.}
    Let \(\mathcal{A}^*\) be the set of Condorcet winners and $M = \min(K,D)$. Choose $M$ distinct arms $w_1, \dots, w_M$ and set $a^*_d = w_d$ for $d <= M$. If $D > M$, assign $a^*_d = w_1$ for $d>M$.
    
    \paragraph{Partition of winners.}
    Choose a subset \(\bar{\mathcal{A}} \subset \mathcal{A}^*\) with
    \(|\bar{\mathcal{A}}| = M/2\) and $w_1 \in \bar{\mathcal{A}}$.
    Write \(\bar{\mathcal{A}}\) for the set of \emph{good winners} and
    \( \mathcal{A}^* \setminus \bar{\mathcal{A}}\) for the \emph{bad winners}. A bad winner is a Condorcet winner for one user but loses with probability $1$ to Condorcet winners of other users. Hence, playing this arm with probability $p$ yields a utility of $U_d(\pi) \leq (1-p)$ for other users. In addition, let $\bar{D} = \{d \in [D]: a^*_d \in \bar{\mathcal{A}}\}$.
    
    \paragraph{Instances.}
    We will define \(|\bar{\mathcal{A}}|\) problem instances; index them by
    \(m\in\{1,\dots,|\bar{\mathcal{A}}|\}\) and denote the \(m\)-th instance by
    \(\mathcal{I}^m\). We denote the arm $a_m$ as the $m$-th good winner of the set \(\bar{\mathcal{A}}\) and is the one of interest in instance \(\mathcal{I}^m\).
    
    \paragraph{Probability relationships (for each user \(d\)).}
    Fix small parameters \(\epsilon \in (0,0.2)\) and \(\epsilon' \in (0,\min(\frac{1}{2T^{\alpha+1}D},0.05))\) for some $\alpha >= 2$. $\epsilon$ can be viewed as the perturbation between instance $\mathcal{I}^0$ and instances $\mathcal{I}^m$. However, $\epsilon'$ is an arbitrarily small positive number that separates a Condorcet winner $a^*_d$ from other close arms such that $\mathcal{P}_{d,a^*_d, a} > 0.5$ and does not affect the bound. Fix an arm $b \in \mathcal{A}^* \setminus \bar{\mathcal{A}}$ and a reference user $r \in [D] \setminus \bar{D}$ with $a^*_r = b$.

    We list the preference relation between arms below:

\begin{itemize}
    \item \textbf{$\bar{\mathcal{A}}$ vs. $\mathcal{A}^* \setminus \bar{\mathcal{A}}$:} For any $a^*_d \in \bar{\mathcal{A}}$ and $a^*_{d'} \in \mathcal{A}^* \setminus \bar{\mathcal{A}}$:
    $$ \mathcal{P}_{d,a^*_d, a^*_{d'}} = 1$$
    
    \item \textbf{$\bar{\mathcal{A}}$ vs. $\bar{\mathcal{A}}$}: For any $a^*_d, a \in \bar{\mathcal{A}}$ with $a \neq a^*_d$:
    $$ \mathcal{P}_{d,a^*_d, a} = 1/2 + \epsilon' $$

    \item \textbf{$\mathcal{A}^* \setminus \bar{\mathcal{A}}$ vs. $\mathcal{A}^* \setminus \bar{\mathcal{A}}$}: For any $a^*_d, a \in \mathcal{A}^* \setminus \bar{\mathcal{A}}$ with $a^*_d \neq a$:
    $$ \mathcal{P}_{d,a^*_d, a} = 1 $$

    \item \textbf{$\mathcal{A}^*$ vs. Non-$\mathcal{A}^*$:} For any $a^*_d \in \mathcal{A}^*$ with $a \notin \mathcal{A}^*$:
    $$ \mathcal{P}_{d,a^*_d, a} = 1 $$

    \item \textbf{$\mathcal{A}^* \setminus \bar{\mathcal{A}}$ vs. $\bar{\mathcal{A}}$ in $\mathcal{I}^m$:} For any $a^*_d \in \mathcal{A}^* \setminus \bar{\mathcal{A}}$ and $a^*_{d'} \in \bar{\mathcal{A}}$:
    $$ \mathcal{P}_{d,a^*_d, a^*_{d'}} = \begin{cases}
        1/2 + \epsilon' + \epsilon, & d = r \text{ and } a^*_{d'} \ne a_m\\
        1/2 + \epsilon', & d \neq r \text{ or } a^*_{d'} = a_m
    \end{cases} $$

    \item \textbf{$\mathcal{A}^* \setminus \bar{\mathcal{A}}$ vs. $\bar{\mathcal{A}}$ in $\mathcal{I}^0$:} For any $a^*_d \in \mathcal{A}^* \setminus \bar{\mathcal{A}}$ and $a^*_{d'} \in \bar{\mathcal{A}}$:
    $$ \mathcal{P}_{d,a^*_d, a^*_{d'}} = \begin{cases}
        1/2 + \epsilon' + \epsilon, & d = r \\
        1/2 + \epsilon', & d \neq r
    \end{cases} $$

\end{itemize}

Note that the preference values for the reverse pairs are given by $\mathcal{P}_{d,j, i} = 1 - \mathcal{P}_{d,i, j}$. Recall the definition of the score being $s_d(i) = 2\mathcal{P}_{d,i, a^*_d}$. The scores of the arms are summarized in Table \ref{tab:instance_probs}.

\begin{table*}[ht!]
\centering
\caption{Scores $s_d$ under instance $\mathcal{I}_m$}
\label{tab:instance_probs}
\renewcommand{\arraystretch}{1.7}
\begin{tabular}{|c|c|c|}
\hline
 & \( s_{d'} \text{ s.t. } d' \in \bar{D}\) 
 & \( s_{d'} \text{ s.t. } d' \in [D] \setminus \bar{D} \) \\ 
\hline
\(  a_d^* \in \bar{\mathcal{A}}\) & 
$\begin{cases}
1, & a^*_d = a^*_{d'} \\
1-2\epsilon', & a^*_d \neq a^*_{d'}
\end{cases}$ & $\begin{cases}
1-2\epsilon', & d' \neq r \text{ or } a^*_d = a_m \\
1-2(\epsilon'+\epsilon), & d' = r \text{ and } a^*_d \neq a_m
\end{cases}$ \\
\hline

\( a_d^* \in \mathcal{A}^* \setminus \bar{\mathcal{A}}\) &0 & $\begin{cases}
1, & a^*_d = a^*_{d'} \\
0, & a^*_d \neq a^*_{d'}
\end{cases}$  \\
\hline
\( a \notin \mathcal{A}^*\) & 0 & 0 \\ 
\hline
\end{tabular}
\end{table*}

For instance $\mathcal{I}^m$, a deterministic policy with $\pi(a_m) = 1$ has $NSW(\pi, s) \geq (1-2\epsilon')^{D-1}$. However, in order to distinguish the arm $m$, good winners in $\bar{\mathcal{A}}$ need to be compared with arm $b \in \mathcal{A}^* \setminus \bar{\mathcal{A}}$ to identify the best good winner $m$.

Before beginning with the proof, we note two important remarks. 

\paragraph{Remark 1:} For any policy $\pi$ with $\pi(a^*_d) = \delta,\; \delta > 0 \text{ and } a^*_d \in \mathcal{A}^* \setminus \bar{\mathcal{A}}$, create $\pi' $ with $\pi'(a^*_d) = 0$ and $\pi'(a^*_{d'}) = \pi(a^*_{d'}) + \delta$ for some $d' \in \bar{D}$ and $a^*_{d'} \ne a_m$. Policy $\pi'$ satisfies
\begin{equation*}
\label{eqn:remark1.1}
    NSW(\pi', s) \geq NSW(\pi, s).
\end{equation*}
To show that, we consider the following. For $d'' $ with $ a^*_{d''}\ne a^*_{d}$, $s_{d''}(a^*_{d'}) \geq s_{d''}(a^*_d)$. Therefore, $\mathbb{E}_{a \sim \pi'}\left[s_{d''}(a)\right] \geq \mathbb{E}_{a \sim \pi}\left[s_{d''}(a)\right] \forall d'' $ with $a^*_{d''}\ne a^*_{d} \ne a^*_{d'}$. Hence, in order to show that, $NSW(\pi', s) \geq NSW(\pi, s)$, it is sufficient to show that 
\begin{equation}
\label{rem:remark1.2}
\left(\mathbb{E}_{a \sim \pi'}\left[s_{i}(a)\right]\right)\left(\mathbb{E}_{a \sim \pi'}\left[s_{j}(a)\right]\right) \geq \left(\mathbb{E}_{a \sim \pi}\left[s_{i}(a)\right]\right)\left(\mathbb{E}_{a \sim \pi}\left[s_{j}(a)\right]\right),
\end{equation}
for unique pairs, $\{i,j\in[D]: a^*_i = a^*_{d'} \text{ and } a^*_j = a^*_{d}\}$. Due to the way the matrices are set, the cardinality of the set $\{i: a^*_i = a^*_{d'}\}$ is greater than or equal to the cardinality of the set $\{j: a^*_j = a^*_{d}\}$, and we can construct such pairs.

Let $C_{d'}= \sum_{a \in [K], a \ne a^*_d } s_{d'}(a)\pi(a)$. Noting that $-2(\epsilon + \epsilon') \ge -0.5$, we can write the left hand side of Equation \ref{rem:remark1.2} as 
\begin{equation*}
\begin{split}
    \left(\mathbb{E}_{a \sim \pi'}\left[s_{i}(a)\right]\right)\left(\mathbb{E}_{a \sim \pi'}\left[s_{j}(a)\right]\right) &\geq(C_{i}+\delta)(C_j+\delta(1-0.5))\\
    \\ &= C_{i}(C_{j}+\delta) + \delta (C_j - 0.5 C_{i}) + 0.5\delta^2\\
    & = \left(\mathbb{E}_{a \sim \pi}\left[s_{i}(a)\right]\right)\left(\mathbb{E}_{a \sim \pi}\left[s_{j}(a)\right]\right) + \delta (C_j - 0.5 C_{i}) + 0.5\delta^2 \\
    & \ge \left(\mathbb{E}_{a \sim \pi}\left[s_{i}(a)\right]\right)\left(\mathbb{E}_{a \sim \pi}\left[s_{j}(a)\right]\right),
\end{split}
\end{equation*}
where the last inequality follows from the fact that $s_j(a) - 0.5 s_i(a) \ge 0\; \forall a \in [K]$. This remark allows us to use the NSW of policies that only play arms in $\bar{\mathcal{A}}$ as an upper bound to the NSW of other policies.

\paragraph{Remark 2:} If an algorithm chooses policies with either of the following conditions $\sum_{t=1}^T Pr_{\mathcal{I}^0}(i_t = b) \geq 3\epsilon T$ or $\sum_{t=1}^T Pr_{\mathcal{I}^0}(j_t = b) \geq 3\epsilon T$, where $(i_t, j_t)$ is the pair played at time $t$, then the algorithm will incur regret $\Omega(\epsilon T)$. 

Since arm $b \in \mathcal{A}^* \setminus \bar{\mathcal{A}}$, have zero scores for users ${d': a^*_{d'} \ne b}$, we can upper bound the utility of user $d' \in \bar{D}$ for any policy $\pi^t$ as follows,
\begin{equation*}
    \mathbb{E}_{a \sim \pi^t}\left[s_{d'}(a)\right] \leq \left(1-\pi^t(b)\right),
\end{equation*}
hence $NSW(\pi^t, s) \leq \left(1-\pi^t(b)\right)$

We consider the policy $\pi^*$ that plays an arm $g \in \bar{\mathcal{A}}$ with probability $1$, with $$NSW(\pi^*, s) \geq (1-2\epsilon - 2\epsilon')(1 - 2\epsilon')^{D-1} \geq (1-2\epsilon - 2\epsilon')(1-2(D-1)\epsilon') \geq 1-2\epsilon-2D\epsilon',$$ where the second inequality is due to Bernoulli's inequality and the last inequality is by removing some positive terms. 

The instantaneous regret at time $t$ is lower bounded by
\begin{equation*}
    r_t \ge \frac{1}{2}\left(NSW(\pi^*, s) - NSW(\pi^t, s)\right) \ge \frac{1}{2} \pi^t(b)-\epsilon-D\epsilon' .
\end{equation*}

Therefore, the total expected regret can be lower bounded as follows:
\begin{equation*}
    \mathbb{E}_{\mathcal{I}^0}[R_T] \ge -\epsilon'DT - \epsilon T +  \frac{1}{2}\sum_{t=1}^T Pr_{\mathcal{I}^0}(i_t = b) \ge -\frac{1}{2 T^\alpha} + \frac{\epsilon T}{2} = \Omega(\epsilon T),
\end{equation*}
as the first term can be made arbitrarily small compared to the second term. This remark allows us to WLOG assume that $\sum_{t=1}^T Pr_{\mathcal{I}^0}(i_t = b) \le 3\epsilon T$ and $\sum_{t=1}^T Pr_{\mathcal{I}^0}(j_t = b) \le 3\epsilon T$ as otherwise the expected regret will be $\Omega(\epsilon T)$.

We now analyze the lower bound of the average regret across instances. Again, consider the policy $\pi_m^*$ that plays arm $m$ with probability $1$ for instance $\mathcal{I}^m$, with $NSW(\pi_m^*, s)\geq (1 - 2\epsilon')^{D-1}$.

Due to remark 1, WLOG we can use the Nash Social Welfare of policies that do not play arms $\in \mathcal{A}^* \setminus \bar{\mathcal{A}}$ as an upper bound to the NSW of policies that play arms $\in \mathcal{A}^* \setminus \bar{\mathcal{A}}$. For any policy $\pi$ such that $\pi(m) \le 1$ and $\pi(a^*_d)=0 \; \forall  a^*_d \in \mathcal{A}^* \setminus \bar{\mathcal{A}}$, user $r$ has
\begin{equation*}
\begin{split}
\mathbb{E}_{a \sim \pi}\left[s_{r}(a)\right] \;\le\; &\pi(m) (1 - 2\epsilon') + (1 - \pi(m))(1 - 2\epsilon' - 2\epsilon)
\;=\; 1 - 2\epsilon' - 2(1 - \pi(m))\epsilon \leq 1 - 2(1 - \pi(m))\epsilon
\end{split}
\end{equation*}

Therefore, the Nash Social Welfare satisfies
\[
NSW(\pi, s) \;\le\; (1 - 2(1 - \pi(m))\epsilon) .
\]

The instantaneous regret of round $t$ can thus be lower bounded as
\[
r_t \;\ge\; \left((1 - 2\epsilon')^{D-1} - 1 \right) + \epsilon \left(2 - \pi^t(m) - \pi'^{\,t}(m)\right) \ge -2\epsilon'D + \epsilon \left(2 - \pi^t(m) - \pi'^{\,t}(m)\right),
\]
where the second inequality comes from the Bernoulli inequality.

Averaging the expected total regret across all instances, we get,
\begin{equation}
\label{eq:thm_lb_cond_1}
    \begin{split}
        \sum_{m=1}^{|\bar{\mathcal{A}}|} \frac{\mathbb{E}_{\mathcal{I}^m}[R_T]}{|\bar{\mathcal{A}}|} &= \sum_{m=1}^{|\bar{\mathcal{A}}|} \sum_{t=1}^{T} \frac{\mathbb{E}_{\mathcal{I}^m}[r_t]}{|\bar{\mathcal{A}}|}  \ge -2\epsilon'DT +  2\epsilon T-\epsilon\sum_{m=1}^{|\bar{\mathcal{A}}|} \sum_{t=1}^{T}\frac{\mathbb{E}_{\mathcal{I}^m}[\pi^t(m) + \pi'^{\,t}(m)]}{|\bar{\mathcal{A}}|}
    \end{split}
\end{equation}
Now as $|\pi^t(m) + \pi'^{\,t}(m)| \leq 2$, it follows that 
\begin{equation*}
\begin{split}
    \sum_{m=1}^{|\bar{\mathcal{A}}|} \sum_{t=1}^{T} &\left(\frac{\mathbb{E}_{\mathcal{I}^m}[\pi^t(m) + \pi'^{\,t}(m)]}{|\bar{\mathcal{A}}|} - \frac{\mathbb{E}_{\mathcal{I}^0}[\pi^t(m) + \pi'^{\,t}(m)]}{|\bar{\mathcal{A}}|}\right) \leq  \sum_{m=1}^{|\bar{\mathcal{A}}|} \frac{4T D_{TV}(\mathcal{I}^m, \mathcal{I}^0)}{|\bar{\mathcal{A}}|},
\end{split}
\end{equation*}
where $D_{TV}(\mathcal{I}^m, \mathcal{I}^0)$ is the total variation distance between $\mathcal{I}^m$ and  $\mathcal{I}^0$ with respect to the sigma algebra $\mathcal{H}_T$ defined over the observed history, i.e. $\mathcal{H}_T = \sigma(\{P_t(i_t, j_t)\}_{t\in T})$ and $D_{TV}(\mathcal{I}^m, \mathcal{I}^0) = \sup_{\mathcal{E} \in \mathcal{H}_t} |Pr_{\mathcal{I}^m}(\mathcal{E})-Pr_{\mathcal{I}^0}(\mathcal{E})|$. 

Using Pinsker's inequality, we get, 

\begin{equation}
\label{eq:thm_lb_cond_2}
\begin{split}
    \sum_{m=1}^{|\bar{\mathcal{A}}|} \sum_{t=1}^{T} \left(\frac{\mathbb{E}_{\mathcal{I}^m}[\pi^t(m) + \pi'^{\,t}(m)]}{|\bar{\mathcal{A}}|} - \frac{\mathbb{E}_{\mathcal{I}^0}[\pi^t(m) + \pi'^{\,t}(m)]}{|\bar{\mathcal{A}}|}\right) &\leq  \sum_{m=1}^{|\bar{\mathcal{A}}|} \frac{4T D_{TV}(\mathcal{I}^m, \mathcal{I}^0)}{|\bar{\mathcal{A}}|} \\&\leq \sum_{m=1}^{|\bar{\mathcal{A}}|} \frac{4T \sqrt{0.5 D_{KL}(\mathcal{I}^0, \mathcal{I}^m)}}{|\bar{\mathcal{A}}|} \\&\leq   4T\sqrt{\frac{ \sum_{m=1}^{|\bar{\mathcal{A}|}} D_{KL}(\mathcal{I}^0, \mathcal{I}^m)}{2|\bar{\mathcal{A}}|}},
\end{split}
\end{equation}
where $D_{KL}$ is the Kullback–Leibler divergence
 between $\mathcal{I}^0$ and  $\mathcal{I}^m$, and the last inequality follows from Jensen's inequality.

Using the chain rule of KL-divergence, we have,
\begin{equation*}
    \sum_{m=1}^{|\bar{\mathcal{A}|}} D_{KL}(\mathcal{I}^0, \mathcal{I}^m) \leq \sum_{m=1}^{|\bar{\mathcal{A}|}}\sum_{t=1}^T D_{KL}(\mathcal{I}^0_t, \mathcal{I}^m_t),
\end{equation*}
where $\mathcal{I}^m_t = Pr_{\mathcal{I}^m}(i_t, j_t| \mathcal{H}_{t-1})$.

Now note that, 
\begin{equation*}
    D_{KL}(\mathcal{I}^0_t, \mathcal{I}^m_t) = \begin{cases}
        D_{KL}(Ber(0.5+\epsilon'), Ber(0.5+\epsilon+\epsilon')), & if (i_t,j_t) = (b, m)\\
        D_{KL}(Ber(0.5-\epsilon'), Ber(0.5-\epsilon-\epsilon')), & if (i_t,j_t) = (m, b)\\
        0,& \text{otherwise}
    \end{cases}
\end{equation*}
where $Ber(p)$ is a Bernoulli random variable with mean $p$. For $\epsilon' < 0.05$,
\begin{equation*}
    D_{KL}(Ber(0.5+\epsilon'), Ber(0.5+\epsilon+\epsilon')) = D_{KL}(Ber(0.5-\epsilon'), Ber(0.5-\epsilon-\epsilon')) \leq \frac{\epsilon^2}{0.25-{(\epsilon+\epsilon')}^2} \leq 6\epsilon^2.
\end{equation*}

It follows that 
\begin{equation*}
\begin{split}
    \sum_{m=1}^{|\bar{\mathcal{A}|}} D_{KL}(\mathcal{I}^0, \mathcal{I}^m) &\leq \sum_{m=1}^{|\bar{\mathcal{A}|}}\sum_{t=1}^T D_{KL}(\mathcal{I}^0_t, \mathcal{I}^m_t) \\&\leq 6\epsilon^2\sum_{t=1}^T \sum_{m=1}^{|\bar{\mathcal{A}|}}\left(Pr_{\mathcal{I}^0}(i_t = b, j_t = m) + Pr_{\mathcal{I}^0}(i_t = m, j_t = b)\right)\\&\leq 6\epsilon^2\sum_{t=1}^T \left(Pr_{\mathcal{I}^0}(i_t = b) + Pr_{\mathcal{I}^0}(j_t = b)\right) \leq 36\epsilon^3T,
\end{split}
\end{equation*}
where the last inequality follows from remark 2.

Plugging it back in Equation \ref{eq:thm_lb_cond_2} we get,
\begin{equation*}
\begin{split}
    \sum_{m=1}^{|\bar{\mathcal{A}}|} \sum_{t=1}^{T} \frac{\mathbb{E}_{\mathcal{I}^m}[\pi^t(m) + \pi'^{\,t}(m)]}{|\bar{\mathcal{A}}|}  &\leq 4T\sqrt{\frac{ 36\epsilon^3T}{2|\bar{\mathcal{A}}|}} + \sum_{m=1}^{|\bar{\mathcal{A}}|} \sum_{t=1}^{T} \frac{\mathbb{E}_{\mathcal{I}^0}[\pi^t(m) + \pi'^{\,t}(m)]}{|\bar{\mathcal{A}}|}  \\&\leq 4T\sqrt{\frac{ 36\epsilon^3T}{2|\bar{\mathcal{A}}|}} +  \frac{2T}{|\bar{\mathcal{A}}|}     
\end{split}
\end{equation*}
Plugging it back in Equation \ref{eq:thm_lb_cond_1} we get,
\begin{equation}
\label{eq:thm_lb_cond_3}
    \begin{split}
        \sum_{m=1}^{|\bar{\mathcal{A}}|} \frac{\mathbb{E}_{\mathcal{I}^m}[R_T]}{|\bar{\mathcal{A}}|} &= \sum_{m=1}^{|\bar{\mathcal{A}}|} \sum_{t=1}^{T} \frac{\mathbb{E}_{\mathcal{I}^m}[r_t]}{|\bar{\mathcal{A}}|}  \ge -2\epsilon'DT +\epsilon\left(  2T- 4T\sqrt{\frac{ 36\epsilon^3T}{|\mathcal{A}^*|}} -  \frac{4T}{|\mathcal{A}^*|}\right)  \\&\ge -\frac{1}{T^\alpha} +\epsilon\left(  2T- 4T\sqrt{\frac{ 36\epsilon^3T}{|\mathcal{A}^*|}} -  T\right),  
    \end{split}
\end{equation}
where the last inequality follows from $\epsilon' \leq \frac{1}{2T^{\alpha+1}D}$ and $|\mathcal{A}^*| \ge 4$.
We now consider two cases.

\paragraph{Case 1:} When $T \leq \frac{|\mathcal{A}^*|}{2304\epsilon^3 }$. Equation \ref{eq:thm_lb_cond_3} becomes
\begin{equation*}
    \sum_{m=1}^{|\bar{\mathcal{A}}|} \frac{\mathbb{E}_{\mathcal{I}^m}[R_T]}{|\bar{\mathcal{A}}|} \ge -\frac{1}{T^\alpha} +\epsilon\frac{T}{2} = \Omega(\epsilon T),
\end{equation*}
As $-\frac{1}{T^\alpha}$ can be made arbitrarily small compared to the second term.
Now as $\epsilon \leq \left(\frac{|\mathcal{A}^*|}{2304T}\right)^\frac{1}{3}$, we can set $\epsilon=c\left(\frac{|\mathcal{A}^*|}{2304T}\right)^\frac{1}{3}$ for some $c \in (0,1)$ and we get

\begin{equation*}
     \sum_{m=1}^{|\bar{\mathcal{A}}|} \frac{\mathbb{E}_{\mathcal{I}^m}[R_T]}{|\bar{\mathcal{A}}|} = \Omega(T^\frac{2}{3} |\mathcal{A}^*|^\frac{1}{3})
\end{equation*}

\paragraph{Case 2:} When $T > \frac{|\mathcal{A}^*|}{2304\epsilon^3 }$. We prove that $\mathbb{E}[R_T] \ge \frac{|\mathcal{A}^*|}{4608\epsilon^2}$ by contradiction. Let $T_0 = \frac{|\mathcal{A}^*|}{2304\epsilon^3 }$. Assume there exists $T' > T_0$ with $\mathbb{E}[R_T] \leq \frac{|\mathcal{A}^*|}{4608\epsilon^2}$. However, this implies $\mathbb{E}[R_{T_0}] \le \mathbb{E}[R_{T'}] < \frac{|\mathcal{A}^*|}{4608\epsilon^2} = \frac{\epsilon T_0}{2}$ which is a contradiction by case 1. Therefore, $\mathbb{E}[R_T]=\Omega\left(\frac{|\mathcal{A}^*|}{\epsilon^2}\right)$. Similar to case 1, as $\frac{1}{\epsilon^2} < \left(\frac{2304 T}{|\mathcal{A}^*|}\right)^\frac{2}{3}$, we can set $\epsilon$ such that $\mathbb{E}[R_T]=\Omega(T^\frac{2}{3} |\mathcal{A}^*|^\frac{1}{3})$. And as $|\mathcal{A}^*| = min(K,D)$ in this example, we get the Theorem statement.
\end{proof}

\subsection{Proof for Section \ref{sec:etc}}
We first present a useful Lemma.

\begin{lemma} \citep{hossain2021fair}
\label{lem:lipchitzh}
    Given a policy $\pi \in \Delta^K$ and score matrices $s^1, s^2 \in [0,1]^{D\times K}$ , we have
    \begin{equation*}
        |NSW(\pi, s^1) - NSW(\pi, s^2)| \leq   \sum_{d=1}^D\sum_{i=1}^K \pi(i)|s^1_{d,i}-s^2_{d,i}|
    \end{equation*}
\end{lemma}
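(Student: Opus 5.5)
The natural route is a telescoping argument over users that exploits the multilinear structure of the product. Write $u^1_d = \mathbb{E}_{a\sim\pi}[s^1_{d,a}] = \sum_i \pi(i) s^1_{d,i}$ and define $u^2_d$ analogously. Because $s^1,s^2\in[0,1]^{D\times K}$ and $\pi\in\Delta^K$, every $u^1_d$ and $u^2_d$ lies in $[0,1]$. We have $NSW(\pi,s^1)=\prod_d u^1_d$ and $NSW(\pi,s^2)=\prod_d u^2_d$, so the claim reduces to bounding the difference of two products of $[0,1]$-valued factors.

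Next, introduce the hybrid products $H_k=\prod_{d\le k}u^2_d\prod_{d>k}u^1_d$ for $k=0,\dots,D$. Then $H_0=NSW(\pi,s^1)$ and $H_D=NSW(\pi,s^2)$, and the difference telescopes:
\[
NSW(\pi,s^1)-NSW(\pi,s^2)=\sum_{k=1}^D (H_{k-1}-H_k).
\]
Two consecutive hybrids differ only in the $k$-th factor, so
\[
H_{k-1}-H_k=(u^1_k-u^2_k)\prod_{d<k}u^2_d\prod_{d>k}u^1_d .
\]
All the remaining factors lie in $[0,1]$, which gives $|H_{k-1}-H_k|\le |u^1_k-u^2_k|$. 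Applying the triangle inequality to the telescoping sum then yields $|NSW(\pi,s^1)-NSW(\pi,s^2)|\le\sum_k |u^1_k-u^2_k|$.

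To finish, bound each term as
\[
|u^1_k-u^2_k|=\Bigl|\sum_i \pi(i)(s^1_{k,i}-s^2_{k,i})\Bigr|\le \sum_i \pi(i)\,|s^1_{k,i}-s^2_{k,i}|,
\]
using the triangle inequality together with $\pi(i)\ge 0$. Summing over $k$ gives exactly the stated bound.

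No step here is a real obstacle. The only point needing care is confirming that every factor lies in $[0,1]$, so that discarding the product of the other factors is valid. This holds because the scores take values in $[0,1]$ and $\pi$ is a probability vector, which makes each expected utility a convex combination of numbers in $[0,1]$.
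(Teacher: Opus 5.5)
Your proof is correct: the hybrid products $H_k$ telescope exactly, each increment is $(u^1_k-u^2_k)$ times a product of factors in $[0,1]$, and the triangle inequality with $\pi(i)\ge 0$ then gives exactly the stated double sum. The paper does not prove this lemma itself but takes it from \citet{hossain2021fair}, and your telescoping over users is the standard argument for that bound; it is equivalent to induction on $D$ using $|\prod_d a_d-\prod_d b_d|\le\sum_d|a_d-b_d|$ for $a_d,b_d\in[0,1]$.
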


\paragraph{Proof of Theorem \ref{thm:etc}}
\begin{proof}
    Let $\mathbb{E}[T_d]$ be the expected number of steps the DKWT algorithm identifies the Condorcet winner for user $d$. The expected regret can then be written as 

    \begin{equation*}
    \mathbb{E}[R_T] \le \sum_{d=1}^D \mathbb{E}[T_d]  + KL|\mathcal{A}^*|+\left(T-KL|\mathcal{A}^*| - \sum_{d=1}^D \mathbb{E}[T_d]\right)\mathbb{E}[NSW(\pi^*,s) - NSW(\hat{\pi}, s)]
\end{equation*}
The first term corresponds to the regret accumulated during the identification phase, while the second term accounts for the exploration phase, and the final term is the regret of the commit phase.

Define the events:
\begin{enumerate}
    \item $\mathcal{E}_1$: The DKWT algorithm correctly identifies the Condorcet winner for all users $d \in [D]$ during Phase 1. Formally, $\mathcal{E}_1 = \{\forall d \in [D]: \hat{a}_d^* = a_d^*\}$.
    \item $\mathcal{E}_2$: The estimated score of all arms is sufficiently close to the true score. Formally, $\mathcal{E}_2 = \left\{\forall d \in [D], a \in [K]: |\hat{s}_{d}(a) - s_d(a)| \le 2\sqrt{\frac{\log(DKT)}{L}}\right\}$.
\end{enumerate}

Let $\mathcal{E} = \mathcal{E}_1 \cap \mathcal{E}_2$ be the event that both conditions hold. The expected total regret $\mathbb{E}[R_T]$ can be bounded as:
\begin{align*}
    \mathbb{E}[R_T] & \le \sum_{d=1}^D \mathbb{E}[T_d] + KL|\mathcal{A}^*| + \left( T-KL|\mathcal{A}^*| - \sum_{d=1}^D \mathbb{E}[T_d] \right) \cdot \mathbb{E}[ \text{NSW}(\pi^*,s) - \text{NSW}(\hat{\pi}, s) ] \\
    & \le \sum_{d=1}^D \mathbb{E}[T_d] + KL|\mathcal{A}^*| + T \cdot \mathbb{E}[ \text{NSW}(\pi^*,s) - \text{NSW}(\hat{\pi}, s) \mid \mathcal{E}] \cdot \mathbb{P}(\mathcal{E}) \\
    & \quad + T  \cdot \mathbb{P}(\mathcal{E}^c) \\
    & \le \sum_{d=1}^D \mathbb{E}[T_d] + KL|\mathcal{A}^*| + T \cdot \mathbb{E}[ \text{NSW}(\pi^*,s) - \text{NSW}(\hat{\pi}, s) \mid \mathcal{E}] \\
    & \quad + T  \cdot \left(\mathbb{P}(\mathcal{E}_1^c) + \mathbb{P}(\mathcal{E}_2^c|\mathcal{E}_1)\right).
\end{align*}

We now bound the term $\mathbb{E}[ \text{NSW}(\pi^*,s) - \text{NSW}(\hat{\pi}, s) \mid \mathcal{E} ]$.
\begin{align*}
    \mathbb{E}[ \text{NSW}(\pi^*,s) - \text{NSW}(\hat{\pi}, s) \mid \mathcal{E} ] & = \mathbb{E}[ \text{NSW}(\pi^*,s) - \text{NSW}(\pi^*,\hat{s}) + \text{NSW}(\pi^*,\hat{s}) - \text{NSW}(\hat{\pi}, s) \mid \mathcal{E} ] \\
    & \le \mathbb{E}[ |\text{NSW}(\pi^*,s) - \text{NSW}(\pi^*,\hat{s})| \mid \mathcal{E} ] + \mathbb{E}[ \text{NSW}(\hat{\pi},\hat{s})  - \text{NSW}(\hat{\pi}, s)| \mid \mathcal{E} ] \\
    & \le 4 D \cdot \sqrt{\frac{\log(DKT)}{L}},
\end{align*}
where the first inequality is due to $\hat{\pi} \in \argmax_\pi \text{NSW}(\pi,\hat{s})$ and the last inequality is due to Lemma \ref{lem:lipchitzh} and the definition of $\mathcal{E}_2$.

Next, we bound the terms related to the identification phase.
The expected number of samples $\mathbb{E}[T_d]$ required for the DKWT algorithm to identify the Condorcet winner $a_d^*$ for user $d$ with an error probability of at most $\delta/D$ is bounded by \citep{haddenhorst2021identification}:
$$
\sum_{d=1}^D \mathbb{E}[T_d] \le D \cdot O\left( \frac{K}{\Delta^2} \log(K/2) \left( \log\log\left(\frac{1}{\Delta}\right) + \log\left(\frac{D}{\delta}\right) \right) \right).
$$

The probability of failure in Phase 1, $\mathbb{P}(\mathcal{E}_1^c)$, is the probability that at least one Condorcet winner is misidentified. By choosing the confidence parameter for the DKWT algorithm to be $\delta/D$ for each user $d$, the probability of error in identifying a single winner is $\mathbb{P}(\hat{a}_d^* \neq a_d^*) \le \delta/D$. Using the union bound, the total probability of misidentification is:
$$
\mathbb{P}(\mathcal{E}_1^c) = \mathbb{P}(\exists d: \hat{a}_d^* \neq a_d^*) \le \sum_{d=1}^D \mathbb{P}(\hat{a}_d^* \neq a_d^*) \le D \cdot \frac{\delta}{D} = \delta.
$$

Next, we bound the probability of error in Phase 2, $\mathbb{P}(\mathcal{E}_2^c)$. Recall that $\mathcal{E}_2^c$ is the event that for some user $d$ and some arm $a$, the estimated score $\hat{s}_d(a)$ is significantly different from the true score $s_d(a)$. Since \(\hat s_d(a)=2\hat P_{d,a,a_d^*}\), applying Hoeffding's inequality to the Bernoulli estimator \(\hat P_{d,a,a_d^*}\) gives, for a single arm $a$ and user $d$:
$$
\mathbb{P}\left(|\hat{s}_d(a) - s_d(a)| > \epsilon \right) \le 2 \exp(-\frac{L \epsilon^2}{2}).
$$
We choose the error bound $\epsilon = 2\sqrt{\frac{\log(DKT)}{L}}$ as defined in $\mathcal{E}_2$. Substituting this into the inequality:
$$
\mathbb{P}\left(|\hat{s}_d(a) - s_d(a)| > 2\sqrt{\frac{\log(DKT)}{L}}\right) \le 2 \exp\left(-2L \frac{\log(DKT)}{L}\right) = 2 \exp(-2 \log(DKT)) = \frac{2}{(DKT)^2}.
$$
Using the union bound over all $D$ users and $K$ arms, we get the total probability of failure for Phase 2:
$$
\mathbb{P}(\mathcal{E}_2^c) = \mathbb{P}\left(\exists d \in [D], a \in [K]: |\hat{s}_d(a) - s_d(a)| > 2\sqrt{\frac{\log(DKT)}{L}}\right) \le D K \cdot \frac{2}{(DKT)^2} = \frac{2}{DK T^2}.
$$

Substituting the bounds derived in the previous steps into the expected regret equation:
\begin{align*}
    \mathbb{E}[R_T] & \le \sum_{d=1}^D \mathbb{E}[T_d] + KL|\mathcal{A}^*| + T \cdot \left( 4 D \cdot \sqrt{\frac{\log(DKT)}{L}} \right) + T \cdot \left( \delta + \frac{2}{DK T^2} \right) \\
    & \le D \cdot O\left( \frac{K}{\Delta^2} \log(K/2) \left( \log\log\left(\frac{1}{\Delta}\right) + \log\left(\frac{D}{\delta}\right) \right) \right) + KL|\mathcal{A}^*| \\
    & \quad + 4 D T \cdot \sqrt{\frac{\log(DKT)}{L}} + T \delta + \frac{2}{DK T}.
\end{align*}

Setting $\delta = \frac{K \log (\frac{K}{2})}{2 \hat{\Delta} T}$ and $L = \Theta(K^{-\frac{2}{3}}|{\mathcal{A}}^*|^{-\frac{2}{3}}D^\frac{2}{3}T^\frac{2}{3}\log(DKT)^\frac{1}{3})$ yields the theorem's statement.
\end{proof}

\subsection{Proof for Section \ref{sec:eps}}
We first present a useful Lemma bounding the difference between the estimated scores and true scores.

\begin{lemma}
\label{lem:hoef_eps}
    Let $n_{a,a^*}^t$ be the number of times arm $a$ was played against $a^*$ until time $t$. Then
    \begin{equation*}
        \mathbb{P}\left( \exists d \in [D], a \in [K]: | s_d(a) - \hat{s}^t_d(a) | > 2\sqrt{\frac{2\log(DKt)}{n^t_{a,a^*_d}}}\right) < \frac{2}{(DKt)^3}
    \end{equation*}
\end{lemma}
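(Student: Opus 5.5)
The plan is to prove Lemma \ref{lem:hoef_eps} by a Hoeffding bound for each fixed (user, arm, sample count) triple, followed by a union bound. As in the proof of Theorem \ref{thm:etc}, we work on the event that each $\hat a^*_d = a^*_d$, so that $\hat s^t_d(a) = 2\hat{\mathcal{P}}^t_d(a,a^*_d)$. Here $\hat{\mathcal{P}}^t_d(a,a^*_d)$ is the empirical mean of the $n^t_{a,a^*_d}$ Bernoulli outcomes $\mathbb{I}(\mathbf{y}_\tau[d]=1)$ collected over rounds in which the pair $(a,a^*_d)$ was played. For $a=a^*_d$ the estimate equals $1/2$ exactly, so the deviation is zero and there is nothing to prove.

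The one non-routine point is that $n^t_{a,a^*_d}$ is random. It is determined by the round-robin schedule and the independent draws $u\le\epsilon_t$, and exploitation rounds may also play $(a,a^*_d)$. To handle this, I will use the standard stack-of-rewards construction: for each $(d,a)$, pre-draw an i.i.d.\ sequence $X^{d,a}_1,X^{d,a}_2,\dots\sim \mathrm{Ber}(\mathcal{P}_{d,a,a^*_d})$, and let the $k$-th play of the pair reveal $X^{d,a}_k$. This coupling preserves the law of the interaction, because feedback is conditionally independent given the played pair. The estimator then equals $\bar X^{d,a}_{n}$ with $n=n^t_{a,a^*_d}$. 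For each fixed $n\in\{1,\dots,t\}$, Hoeffding's inequality gives
\begin{equation*}
\mathbb{P}\left(\left|\bar X^{d,a}_n - \mathcal{P}_{d,a,a^*_d}\right| > \sqrt{\tfrac{2\log(DKt)}{n}}\right) \le 2\exp(-4\log(DKt)) = \frac{2}{(DKt)^4}.
\end{equation*}
Since $|\hat s - s| = 2|\bar X - \mathcal{P}|$, this is exactly the event in the statement for that triple.

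Finally, I union bound over $d\in[D]$, $a\in[K]$, and the at most $t$ possible values of $n$. This gives a failure probability of at most $DKt\cdot 2/(DKt)^4 = 2/(DKt)^3$, which is the claimed bound. The inequality is strict whenever $DKt>1$, or alternatively because the arms $a=a^*_d$ contribute nothing.

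The main obstacle I expect is the random sample count. A naive application of Hoeffding with $n^t$ plugged in is invalid because $n^t$ depends on past randomness. The coupling plus the union over $n$ resolves this, and the extra factor $t$ from that union is exactly why the exponent is $3$ rather than the $2$ obtained in the proof of Theorem \ref{thm:etc}. If the event $\mathcal{E}_1$ of correct winner identification is not conditioned on, the statement should be read with $a^*_d$ replaced by $\hat a^*_d$, and the same argument applies verbatim conditional on the identification phase, since that phase uses disjoint samples.
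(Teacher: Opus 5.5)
Your proof is correct and follows the same route as the paper. Both apply Hoeffding's inequality for each fixed sample count $n\in[t]$, which gives $2/(DKt)^4$, then take a union bound over the $DKt$ triples $(d,a,n)$ to reach $2/(DKt)^3$. Your explicit stack-of-rewards coupling, your treatment of $a=a^*_d$ (which also gives the strict inequality), and your conditioning on correct winner identification make rigorous what the paper leaves implicit when it speaks of ``the average of $l$ independent samples,'' but they do not change the argument.
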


\begin{proof}
    Fix a time step $t \in [T]$. We bound the probability that the estimated score deviates significantly from the true score for any user $d$ and arm $a$ at this specific time step.
    
    For any specific pair $(d, a)$, the number of samples collected, $n^t_{a,a^*_d}$, is a value $l$ in the range $\{1, \dots, t\}$. We apply a union bound over all possible integer values $l \le t$.
    
    For a fixed number of samples $l$, let $\hat{s}_{d,l}(a)$ denote the average of $l$ independent samples. By Hoeffding's inequality, for the threshold $\epsilon_l = 2\sqrt{\frac{2 \log(DKt)}{l}}$:
    \begin{equation*}
        \mathbb{P}\left( |s_d(a) - \hat{s}_{d,l}(a)| > 2\sqrt{\frac{2 \log(DKt)}{l}} \right) \le 2 \exp\left( -2 l \cdot \frac{2 \log(DKt)}{l} \right) = \frac{2}{(DKt)^4}.
    \end{equation*}
    By applying the union bound, we get
    \begin{equation*}
         \mathbb{P}\left(|s_d(a) - \hat{s}_d(a)| > 2\sqrt{\frac{2 \log(DKt)  }{l}} \forall l \in [t], d \in [D], a \in [K]\right) \le \frac{2}{(DKt)^3}.
    \end{equation*}
    As $n^t_{a,a^*_d} \in [t]$, we get the statement of the lemma.
\end{proof}

\paragraph{Proof of Theorem \ref{thm:ub-eps}}
\begin{proof}
 
 Let $\mathbb{E}[T_d]$ be the expected number of steps the DKWT algorithm identifies the Condorcet winner for user $d$ and $T_0 = \sum_{d=1}^D T_d$. {Let $\tau_0 := (K-1)|\mathcal{A}^*|$ be the length of the deterministic warm-up phase, in which every pair $(a,a^*)$, $a^*\in\hat{\mathcal{A}}^*$, $a\ne a^*$, is played exactly once.} The expected regret can then be written as
 
    \begin{equation*}
    \mathbb{E}[R_T] \le \sum_{d=1}^D \mathbb{E}[T_d] {{}+ \tau_0} + \mathbb{E}\left[\sum_{t=T_0{+\tau_0}+1}^T r_t\right]
\end{equation*}
The first term corresponds to the regret accumulated during the identification phase, {the second term is the (trivially bounded, since $\mathrm{NSW}\in[0,1]$) regret of the deterministic warm-up phase, contributing at most $\tau_0$,} and the {third} term accounts for the main loop.
 
Similar to the proof of Theorem \ref{thm:etc}, the expected duration of the DKWT algorithm summing over all $D$ users is bounded by:
\begin{equation}
    \mathbb{E}[T_0] \le O\left( \frac{KD}{\Delta^2} \log\left(\frac{K}{2}\right) \left( \log\log\left(\frac{1}{\Delta}\right) + \log\left(\frac{D}{\delta}\right) \right) \right).
\end{equation}
Let $\mathcal{E}_{1}$ be the event that all Condorcet winners are correctly identified in Phase 1. By the properties of DKWT, $\mathbb{P}(\mathcal{E}_{1}^c) \le \delta$. {Note that $\tau_0 = (K-1)|\mathcal{A}^*| = O(K|\mathcal{A}^*|)$ is a fixed quantity independent of $T$, $\delta$, and the exploration schedule $\epsilon_t$, since the warm-up phase is deterministic rather than governed by $\epsilon_t$.}
 
Conditioned on $\mathcal{E}_{1}${, and on the warm-up phase having completed (which guarantees every pair $(a,a^*)$ has at least one sample before the main loop begins)}, the Condorcet winners are known. For $t > T_0{+\tau_0}$, the algorithm explores with probability $\epsilon_t$ and exploits with probability $1-\epsilon_t$. The expected regret at step $t$ is:
\begin{align*}
    \mathbb{E}[r_t]  &\le \epsilon_t \cdot 1 + (1-\epsilon_t) \cdot \mathbb{E}[ \text{NSW}(\pi^*, s) - \text{NSW}(\hat{\pi}_t, s)].
\end{align*}
The first term accounts for the exploration. The second term is the regret from exploiting the estimated scores $\hat{s}$.
 
Let $\mathcal{E}^t_2$ be the event that the score estimates are concentrated around their true values at time step $t \in [T- T_0{-\tau_0}]$, consistent with Lemma \ref{lem:hoef_eps}. Let $t' = t - T_0 - \tau_0$, 
\begin{equation}
    \mathcal{E}^t_2 = \left\{\forall  d \in [D], a \in [K] : |s_d(a) - \hat{s}_d(a)| \le 2\sqrt{\frac{2 \log(DK{t'})}{n^t_{a,a^*_d}}} \right\}.
\end{equation}
By Lemma \ref{lem:hoef_eps}, the probability of the complement event is bounded by $\mathbb{P}(\mathcal{E}_2^{t^c}) \le \frac{2}{(DK{t})^3}$.

Next, we analyze the number of exploration steps for a fixed time $t$. Let $N^t = \sum_{\tau=T_0{+\tau_0}+1}^t I_\tau$ be the actual number of {main-loop} exploration steps up to time $t$ {(not counting the warm-up phase, which is accounted for separately)}. Note that $\mathbb{E}[N^t] = \sum_{\tau = T_0 {+\tau_0} + 1}^t \epsilon_\tau \ge (t - T_0{-\tau_0}) \epsilon_{t}$ as $\epsilon_\tau$ is monotonically decreasing for the chosen value of $\epsilon$.
 
Let $0<\gamma < 1$ be a constant such that $\epsilon_t \ge \frac{1}{ \gamma (t-T_0{-\tau_0})^{\frac{1}{3}}}$. We define the event $\mathcal{E}^t_3$ as follows
\begin{equation}
    \mathcal{E}^t_3 = \left\{ N^t \ge \mathbb{E}[N^t] - \gamma \mathbb{E}[N^t] \right\}.
\end{equation}
We bound the probability of the complement event $\mathcal{E}_3^{t^c}$ using the Hoeffding inequality. Since $I_\tau \in [0, 1]$ are independent random variables and the sum consists of at most $t - T_0 {-\tau_0}$ terms:
 
\begin{align*}
    \mathbb{P}(\mathcal{E}_3^{t^c}) &\le \exp \left( - \frac{2 \gamma^2 (\mathbb{E}[N^t])^2}{t - T_0{-\tau_0}} \right) \le  \exp \left( - \frac{2 \gamma^2 ((t - T_0{-\tau_0})\epsilon_t)^2}{t - T_0{-\tau_0}} \right) = \exp \left( - 2 \gamma^2 (t - T_0{-\tau_0}) (\epsilon_t)^2 \right) \\&\le \exp \left( - 2  (t - T_0{-\tau_0})^\frac{1}{3} \right) \le \exp\left({-\log(t-T_0{-\tau_0})}\right) = \frac{1}{t-T_0{-\tau_0}}.
\end{align*}
 
{Because the warm-up phase already guarantees each pair $(a,a^*)$ one sample, the round-robin schedule in the main loop only needs to add further coverage, and the standard floor identity $\lfloor x \rfloor + 1 \ge x$ (for $x\ge 0$) gives an exact bound: given $\mathcal{E}_1$ and $\mathcal{E}^t_3$,}
\begin{equation*}
    n^t_{a,a^*} \ge {1 + \left\lfloor \frac{N^t}{(K-1)|\mathcal{A}^*|} \right\rfloor \ge \frac{N^t}{(K-1)|\mathcal{A}^*|}} \ge \frac{(1-\gamma)(t-T_0{-\tau_0})\epsilon_t}{(K{-1})|\mathcal{A^*}|}.
\end{equation*}

Hence, event $\mathcal{E}^t_2$ can be rewritten as:
\begin{equation}
    \mathcal{E}^t_2 = \left\{\forall  d \in [D], a \in [K]  : |s_d(a) - \hat{s}^t_d(a)| \le 2\sqrt{\frac{2 (K{-1})|\mathcal{A^*}|\log(DK{t'})}{(1-\gamma)(t-T_0{-\tau_0})\epsilon_t}} \right\}.
\end{equation}
 We can now decompose the expected regret in the exploitation phase conditioned on $\mathcal{E} = \mathcal{E}_1 \cap \mathcal{E}^t_2 \cap \mathcal{E}^t_3$:
\begin{align*}
    \mathbb{E}[r_t] &\le \epsilon_t + (1 - \epsilon_t) \left( \mathbb{E}[ \text{NSW}(\pi^*, s) - \text{NSW}(\hat{\pi}_t, s) \mid \mathcal{E}] +  \mathbb{P}(\mathcal{E}^c) \right) \\
    &\le \epsilon_t + \mathbb{E}[ \text{NSW}(\pi^*, s) - \text{NSW}(\hat{\pi}_t, s) \mid \mathcal{E}] + \delta + \frac{2}{(DK({t - T_0{-\tau_0}}))^3} + \frac{1}{t - T_0{-\tau_0}}.
\end{align*}
 
We further decompose $\mathbb{E}[ \text{NSW}(\pi^*, s) - \text{NSW}(\hat{\pi}_t, s) \mid \mathcal{E} ]$.
\begin{align*}
    \mathbb{E}[ \text{NSW}(\pi^*, s) - \text{NSW}(\hat{\pi}_t, s) \mid \mathcal{E} ] &= \mathbb{E}[ \text{NSW}(\pi^*, s) - \text{NSW}(\pi^*, \hat{s}^t) + \text{NSW}(\pi^*, \hat{s}^t) - \text{NSW}(\hat{\pi}_t, s) \mid \mathcal{E} ] \\
    &\le \mathbb{E}[ \text{NSW}(\pi^*, s) - \text{NSW}(\pi^*, \hat{s}^t) + \text{NSW}(\hat{\pi}_t, \hat{s}) - \text{NSW}(\hat{\pi}_t, s) \mid \mathcal{E} ] \\
    &\le 4D \sqrt{\frac{2(K{-1})|\mathcal{A^*}|\log(DK{t'})}{(1-\gamma)(t-T_0{-\tau_0})\epsilon_t}},
\end{align*}
where the first inequality is due to $\hat{\pi}_t \in \argmax_{\pi \in \Delta_K} \text{NSW}(\pi, \hat{s}^t)$ and the last inequality due to Lemma \ref{lem:hoef_eps}.
 
Let $T' = T- T_0 {-\tau_0}$, substituting these bounds back into the regret decomposition and summing over $t = T_0 {+\tau_0} + 1, \dots, T$, {and adding the warm-up contribution of at most $\tau_0$,} we obtain the bound for the total expected regret:
\begin{align*}
    \mathbb{E}[R_T] &\le O\left( \frac{KD}{\Delta^2} \log\left(\frac{K}{2}\right) \left( \log\log\left(\frac{1}{\Delta}\right) + \log\left(\frac{D}{\delta}\right) \right) \right) {{}+ \tau_0} \\&\;+ \sum_{t'=1}^{T'} \left( \epsilon_{t'+T_0{+\tau_0}} + 4D \sqrt{\frac{2(K{-1}) |\mathcal{A}^*| \log(DK{t'})}{(1-\gamma)t'\epsilon_{t'+T_0{+\tau_0}}}} + \delta + \frac{2}{(DK{t'})^3} + \frac{1}{t'} \right) \\
     &\le O\left( \frac{KD}{\Delta^2} \log\left(\frac{K}{2}\right) \left( \log\log\left(\frac{1}{\Delta}\right) + \log\left(\frac{D}{\delta}\right) \right) \right) {{}+ O(K|\mathcal{A}^*|)} \\&\;+ \sum_{t'=1}^{T'} \left( \epsilon_{t'+T_0{+\tau_0}} + 4D \sqrt{\frac{2(K{-1}) |\mathcal{A}^*| \log(DK{t'})}{(1-\gamma)t'\epsilon_{t'+T_0{+\tau_0}}}} + \frac{2}{(DK{t'})^3}\right) + \delta T'  + 1+\log(T')
\end{align*}
{The added warm-up term $\tau_0 = O(K|\mathcal{A}^*|)$ is independent of $T$ and is dominated by the $T^{2/3}$ term below.}
 
Setting $\delta = \frac{K \log (\frac{K}{2})}{2 \hat{\Delta} T}$ and $\epsilon_t = \Theta(D^\frac{2}{3}K^\frac{1}{3}|\mathcal{A}^*|^\frac{1}{3}(t-T_0{-\tau_0})^{-\frac{1}{3}}\log^\frac{1}{3}(DK{t'}))$ {for $t > T_0+\tau_0$} yields the theorem's statement.
\end{proof}
\section{Condorcet-Winner Identification Algorithms}
\label{app:missing_algorithms}
In this section, we present the detailed pseudocode for the algorithms employed during the Condorcet winner identification phase. Specifically, we detail the \textbf{DKW-Compare} subroutine (Algorithm \ref{alg:dkw_compare}) and the \textbf{Modified DKWT} algorithm (Algorithm \ref{alg:mod_dkwt}). These methods are adaptations of the original Dvoretzky-Kiefer-Wolfowitz Tournament (DKWT) algorithm proposed by \citet{haddenhorst2021identification}, modified here to handle multiple heterogeneous users in parallel.
\begin{algorithm}[hbt!]
\caption{DKW-Compare}
\label{alg:dkw_compare}
\begin{algorithmic}[1]
\REQUIRE Arms $i, j$, Calling User $d$, Confidence $\delta'$, sample access to $\mathcal{P}$, Global candidate sets $\mathcal{S}$, Global round trackers $R$
\STATE Initialize round $r \leftarrow R(i,j)$.
\STATE Initialize duel counter $n \leftarrow 0$.
\WHILE{$\{i, j\} \subseteq S_d$}
    \STATE Set confidence width $h_r \leftarrow 2^{-(r+1)}$.
    \STATE Set adjusted confidence $\delta_r \leftarrow \frac{6\delta'}{\pi^2 r^2}$.
    \STATE Calculate samples $N_r \leftarrow \lceil 8 \log(4/\delta_r) / h_r^2 \rceil$.
    \STATE Play duel $(i, j)$ for $N_r$ times and record outcomes for all users.
    \STATE $n \leftarrow n + N_r$.
    \STATE  Let $U_{i,j} \leftarrow \{d' \in [D] \mid \{i,j\} \subseteq S_{d'} \}$ be the set of all users still comparing $i$ and $j$.
    \FOR{ $d' \in U_{i,j}$}
        \STATE Compute empirical probability $\hat{p}_{d'} = \hat{\mathcal{P}}_{d',i,j}$.
        \IF{$|\hat{p}_{d'} - 0.5| > 0.5h_r$}
            \IF{$\hat{p}_{d'} > 0.5$}
                \STATE $S_{d'} \leftarrow S_{d'} \setminus \{j\}$
            \ELSE
                \STATE $S_{d'} \leftarrow S_{d'} \setminus \{i\}$
            \ENDIF
        \ENDIF
    \ENDFOR
    \STATE $r \leftarrow r + 1$.
    \STATE $R(i, j) \leftarrow r$
\ENDWHILE
\RETURN $n$
\end{algorithmic}
\end{algorithm}

\begin{algorithm}[hbt!]
\caption{Modified DKWT for Multiple Users}
\label{alg:mod_dkwt}
\begin{algorithmic}[1]
\REQUIRE Set of Arms $[K]$, Set of Users $[D]$, sample access to $\mathcal{P}$, Confidence $\delta$.
\ENSURE Set of estimated Condorcet winners $\hat{\mathcal{A}}^*$.
\STATE Initialize $\hat{\mathcal{A}}^* \leftarrow \emptyset$.
\STATE Initialize global candidate sets $S_d \leftarrow [K]$ for all $d \in [D]$.
\STATE Initialize global round trackers $R(i, j) \leftarrow 1$ for all distinct pairs $i, j \in [K]$
\STATE Initialize total duel counter $t \leftarrow 0$.
\WHILE{$\exists d \in [D] \text{ such that } |S_d| > 1$}
    \STATE Choose a user $d$ where $|S_d| > 1$.
    \STATE Select a distinct pair $i,j \in S_d$.
    \STATE $n \leftarrow \textbf{DKW-Compare}(i, j, d, \frac{\delta}{K}, \mathcal{P}, S, R)$.
    \STATE $t \leftarrow t + n$.
\ENDWHILE
\FOR {$d \in [D]$}
\STATE Let $a^*_d$ be the single element in $S_d$.
\STATE Add $a^*_d$ to $\hat{\mathcal{A}}^*$.
\ENDFOR
\RETURN $(\hat{\mathcal{A}}^*, t)$
\end{algorithmic}
\end{algorithm}

\end{document}

%% file: math_commands.tex
\usepackage{amsmath,amsfonts,bm}

\def\eqref#1{equation~\ref{#1}}

\def\1{\bm{1}}

\DeclareMathAlphabet{\mathsfit}{\encodingdefault}{\sfdefault}{m}{sl}
\SetMathAlphabet{\mathsfit}{bold}{\encodingdefault}{\sfdefault}{bx}{n}

\DeclareMathOperator*{\argmax}{arg\,max}

%% file: mybib.bib
@article{ziegler2019fine,
  title={Fine-tuning language models from human preferences},
  author={Ziegler, Daniel M and Stiennon, Nisan and Wu, Jeffrey and Brown, Tom B and Radford, Alec and Amodei, Dario and Christiano, Paul and Irving, Geoffrey},
  journal={arXiv preprint arXiv:1909.08593},
  year={2019}
}

@article{christiano2017deep,
  title={Deep reinforcement learning from human preferences},
  author={Christiano, Paul F and Leike, Jan and Brown, Tom and Martic, Miljan and Legg, Shane and Amodei, Dario},
  journal={Advances in neural information processing systems},
  volume={30},
  year={2017}
}

@article{kaneko1979nash,
  title={The Nash social welfare function},
  author={Kaneko, Mamoru and Nakamura, Kenjiro},
  journal={Econometrica: Journal of the Econometric Society},
  pages={423--435},
  year={1979},
  publisher={JSTOR}
}

@article{mcglaughlin2020improving,
  title={Improving Nash social welfare approximations},
  author={McGlaughlin, Peter and Garg, Jugal},
  journal={Journal of Artificial Intelligence Research},
  volume={68},
  pages={225--245},
  year={2020}
}

@article{caragiannis2019unreasonable,
  title={The unreasonable fairness of maximum Nash welfare},
  author={Caragiannis, Ioannis and Kurokawa, David and Moulin, Herv{\'e} and Procaccia, Ariel D and Shah, Nisarg and Wang, Junxing},
  journal={ACM Transactions on Economics and Computation (TEAC)},
  volume={7},
  number={3},
  pages={1--32},
  year={2019},
  publisher={ACM New York, NY, USA}
}

@article{jagtenberg2020fairness,
  title={Fairness in the ambulance location problem: maximizing the Bernoulli-Nash social welfare},
  author={Jagtenberg, Caroline J and Mason, Andrew},
  journal={Available at SSRN 3536707},
  year={2020}
}

@article{liao2024maximizing,
  title={Maximizing Nash Social Welfare Based on Greedy Algorithm and Estimation of Distribution Algorithm},
  author={Liao, Weizhi and Jin, Youzhen and Wang, Zijia and Wang, Xue and Xia, Xiaoyun},
  journal={Biomimetics},
  volume={9},
  number={11},
  pages={652},
  year={2024},
  publisher={MDPI}
}

@inproceedings{branzei2017nash,
  title={Nash social welfare approximation for strategic agents},
  author={Br{\^a}nzei, Simina and Gkatzelis, Vasilis and Mehta, Ruta},
  booktitle={Proceedings of the 2017 ACM Conference on Economics and Computation},
  pages={611--628},
  year={2017}
}

@article{hossain2021fair,
  title={Fair algorithms for multi-agent multi-armed bandits},
  author={Hossain, Safwan and Micha, Evi and Shah, Nisarg},
  journal={Advances in Neural Information Processing Systems},
  volume={34},
  pages={24005--24017},
  year={2021}
}

@article{yue2012k,
  title={The k-armed dueling bandits problem},
  author={Yue, Yisong and Broder, Josef and Kleinberg, Robert and Joachims, Thorsten},
  journal={Journal of Computer and System Sciences},
  volume={78},
  number={5},
  pages={1538--1556},
  year={2012},
  publisher={Elsevier}
}

@article{jones2023efficient, title={An Efficient Algorithm for Fair Multi-Agent Multi-Armed Bandit with Low Regret}, volume={37}, url={https://ojs.aaai.org/index.php/AAAI/article/view/25985}, DOI={10.1609/aaai.v37i7.25985}, abstractNote={Recently a multi-agent variant of the classical multi-armed bandit was proposed to tackle fairness issues in online learning. Inspired by a long line of work in social choice and economics, the goal is to optimize the Nash social welfare instead of the total utility. Unfortunately previous algorithms either are not efficient or achieve sub-optimal regret in terms of the number of rounds. We propose a new efficient algorithm with lower regret than even previous inefficient ones. We also complement our efficient algorithm with an inefficient approach with regret that matches the lower bound for one agent. The experimental findings confirm the effectiveness of our efficient algorithm compared to the previous approaches.}, number={7}, journal={Proceedings of the AAAI Conference on Artificial Intelligence}, author={Jones, Matthew and Nguyen, Huy and Nguyen, Thy}, year={2023}, month={Jun.}, pages={8159-8167} }

@article{zhang2024no,
  title={No-regret learning for fair multi-agent social welfare optimization},
  author={Zhang, Mengxiao and Deo-Campo Vuong, Ramiro and Luo, Haipeng},
  journal={Advances in Neural Information Processing Systems},
  volume={37},
  pages={57671--57700},
  year={2024}
}

@article{mandal2022socially,
  title={Socially fair reinforcement learning},
  author={Mandal, Debmalya and Gan, Jiarui},
  journal={arXiv preprint arXiv:2208.12584},
  year={2022}
}

@article{fan2022welfare,
  title={Welfare and fairness in multi-objective reinforcement learning},
  author={Fan, Zimeng and Peng, Nianli and Tian, Muhang and Fain, Brandon},
  journal={arXiv preprint arXiv:2212.01382},
  year={2022}
}

@inproceedings{siddique2020learning,
  title={Learning fair policies in multi-objective (deep) reinforcement learning with average and discounted rewards},
  author={Siddique, Umer and Weng, Paul and Zimmer, Matthieu},
  booktitle={International Conference on Machine Learning},
  pages={8905--8915},
  year={2020},
  organization={PMLR}
}

@inproceedings{jamieson2015sparse,
  title={Sparse dueling bandits},
  author={Jamieson, Kevin and Katariya, Sumeet and Deshpande, Atul and Nowak, Robert},
  booktitle={Artificial Intelligence and Statistics},
  pages={416--424},
  year={2015},
  organization={PMLR}
}

@article{kim2025navigating,
  title={Navigating the Social Welfare Frontier: Portfolios for Multi-objective Reinforcement Learning},
  author={Kim, Cheol Woo and Moondra, Jai and Verma, Shresth and Pollack, Madeleine and Kong, Lingkai and Tambe, Milind and Gupta, Swati},
  journal={arXiv preprint arXiv:2502.09724},
  year={2025}
}

@article{haddenhorst2021identification,
  title={Identification of the generalized Condorcet winner in multi-dueling bandits},
  author={Haddenhorst, Bj{\"o}rn and Bengs, Viktor and H{\"u}llermeier, Eyke},
  journal={Advances in Neural Information Processing Systems},
  volume={34},
  pages={25904--25916},
  year={2021}
}

@article{nash1950bargaining,
  title={The bargaining problem},
  author={Nash, John F},
  journal={Econometrica},
  volume={18},
  number={2},
  pages={155--162},
  year={1950}
}

@book{moulin2004fair,
  title={Fair division and collective welfare},
  author={Moulin, Herv{\'e}},
  year={2004},
  publisher={MIT press}
}

@inproceedings{zoghi2014relative,
  title={Relative upper confidence bound for the k-armed dueling bandit problem},
  author={Zoghi, Masrour and Whiteson, Shimon and Munos, Remi and Rijke, Maarten},
  booktitle={International conference on machine learning},
  pages={10--18},
  year={2014},
  organization={PMLR}
}

@article{hardt2016equality,
  title={Equality of opportunity in supervised learning},
  author={Hardt, Moritz and Price, Eric and Srebro, Nati},
  journal={Advances in neural information processing systems},
  volume={29},
  year={2016}
}

@inproceedings{dwork2012fairness,
  title={Fairness through awareness},
  author={Dwork, Cynthia and Hardt, Moritz and Pitassi, Toniann and Reingold, Omer and Zemel, Richard},
  booktitle={Proceedings of the 3rd innovations in theoretical computer science conference},
  pages={214--226},
  year={2012}
}

@article{kleinberg2016inherent,
  title={Inherent trade-offs in the fair determination of risk scores},
  author={Kleinberg, Jon and Mullainathan, Sendhil and Raghavan, Manish},
  journal={arXiv preprint arXiv:1609.05807},
  year={2016}
}

@article{kusner2017counterfactual,
  title={Counterfactual fairness},
  author={Kusner, Matt J and Loftus, Joshua and Russell, Chris and Silva, Ricardo},
  journal={Advances in neural information processing systems},
  volume={30},
  year={2017}
}

@article{zoghi2015copeland,
  title={Copeland dueling bandits},
  author={Zoghi, Masrour and Karnin, Zohar S and Whiteson, Shimon and De Rijke, Maarten},
  journal={Advances in neural information processing systems},
  volume={28},
  year={2015}
}

@inproceedings{urvoy2013generic,
  title={Generic exploration and k-armed voting bandits},
  author={Urvoy, Tanguy and Clerot, Fabrice and F{\'e}raud, Raphael and Naamane, Sami},
  booktitle={International conference on machine learning},
  pages={91--99},
  year={2013},
  organization={PMLR}
}

@inproceedings{komiyama2015regret,
  title={Regret lower bound and optimal algorithm in dueling bandit problem},
  author={Komiyama, Junpei and Honda, Junya and Kashima, Hisashi and Nakagawa, Hiroshi},
  booktitle={Conference on learning theory},
  pages={1141--1154},
  year={2015},
  organization={PMLR}
}

@inproceedings{chen2017dueling,
  title={Dueling bandits with weak regret},
  author={Chen, Bangrui and Frazier, Peter I},
  booktitle={International Conference on Machine Learning},
  pages={731--739},
  year={2017},
  organization={PMLR}
}

@inproceedings{xu2019dueling,
  title={Dueling bandits with qualitative feedback},
  author={Xu, Liyuan and Honda, Junya and Sugiyama, Masashi},
  booktitle={Proceedings of the AAAI Conference on Artificial Intelligence},
  volume={33},
  pages={5549--5556},
  year={2019}
}

@inproceedings{kamishima2003nantonac,
  title={Nantonac collaborative filtering: recommendation based on order responses},
  author={Kamishima, Toshihiro},
  booktitle={Proceedings of the ninth ACM SIGKDD international conference on Knowledge discovery and data mining},
  pages={583--588},
  year={2003}
}

@book{hastie2009elements,
  title={The elements of statistical learning: data mining, inference, and prediction},
  author={Hastie, Trevor and Tibshirani, Robert and Friedman, Jerome},
  volume={2},
  year={2009},
  publisher={Springer}
}

@book{marden1996analyzing,
  title={Analyzing and modeling rank data},
  author={Marden, John I},
  year={1996},
  publisher={CRC Press}
}

@inproceedings{li2010contextual,
  title={A contextual-bandit approach to personalized news article recommendation},
  author={Li, Lihong and Chu, Wei and Langford, John and Schapire, Robert E},
  booktitle={Proceedings of the 19th international conference on World wide web},
  pages={661--670},
  year={2010}
}

@article{kuleshov2014algorithms,
  title={Algorithms for multi-armed bandit problems},
  author={Kuleshov, Volodymyr and Precup, Doina},
  journal={arXiv preprint arXiv:1402.6028},
  year={2014}
}

@inproceedings{zhou2020neural,
  title={Neural contextual bandits with ucb-based exploration},
  author={Zhou, Dongruo and Li, Lihong and Gu, Quanquan},
  booktitle={International conference on machine learning},
  pages={11492--11502},
  year={2020},
  organization={PMLR}
}

@article{bietti2021contextual,
  title={A contextual bandit bake-off},
  author={Bietti, Alberto and Agarwal, Alekh and Langford, John},
  journal={Journal of Machine Learning Research},
  volume={22},
  number={133},
  pages={1--49},
  year={2021}
}

@inproceedings{barman2023fairness,
  title={Fairness and welfare quantification for regret in multi-armed bandits},
  author={Barman, Siddharth and Khan, Arindam and Maiti, Arnab and Sawarni, Ayush},
  booktitle={Proceedings of the AAAI Conference on Artificial Intelligence},
  volume={37},
  number={6},
  pages={6762--6769},
  year={2023}
}

@article{sawarni2023nash,
  title={Nash regret guarantees for linear bandits},
  author={Sawarni, Ayush and Pal, Soumyabrata and Barman, Siddharth},
  journal={Advances in Neural Information Processing Systems},
  volume={36},
  pages={33288--33318},
  year={2023}
}

@inproceedings{busa2017multi,
  title={Multi-objective bandits: Optimizing the generalized gini index},
  author={Busa-Fekete, R{\'o}bert and Sz{\"o}r{\'e}nyi, Bal{\'a}zs and Weng, Paul and Mannor, Shie},
  booktitle={International conference on machine learning},
  pages={625--634},
  year={2017},
  organization={PMLR}
}
